\newcommand{\g}{\mathbf g}
\newcommand{\mbb}{\mathbb}
\newcommand{\ba}{\begin{array}}
\newcommand{\ea}{\end{array}}
\newcommand{\defn}{\coloneqq}
\newcommand{\linf}[1]{\left\|#1 \right\|_{\infty}}
\newcommand{\nr}{\mathcal{B}_{\mathsf{r}}}
\newcommand{\nsoft}{\mathcal{B}_{\mathsf{soft}}}
\newcommand{\nc}{\mathcal{B}_{\mathsf{c}}}
\newcommand{\cA}{\mathcal{A}}
\newcommand{\cB}{\mathcal{B}}
\newcommand{\cS}{{\mathcal{S}}}
\newcommand{\mymid}{\,|\,} 
\newcommand\reallywidehat[1]{%
\savestack{\tmpbox}{\stretchto{%
  \scaleto{%
    \scalerel*[\widthof{\ensuremath{#1}}]{\kern-.6pt\bigwedge\kern-.6pt}%
    {\rule[-\textheight/2]{1ex}{\textheight}}
  }{\textheight}%
}{0.5ex}}%
\stackon[1pt]{#1}{\tmpbox}%
}
\newcommand\reallywidecheck[1]{%
\savestack{\tmpbox}{\stretchto{%
  \scaleto{
    \scalerel*[\widthof{\ensuremath{#1}}]{\kern-.6pt\bigwedge\kern-.6pt}%
    {\rule[-\textheight/2]{1ex}{\textheight}}
  }{\textheight}%
}{0.5ex}}%
\stackon[1pt]{#1}{\scalebox{-1}{\tmpbox}}%
}
\theoremstyle{plain}
\newtheorem{theorem}{Theorem}[section]
\newtheorem{proposition}[theorem]{Proposition}
\newtheorem{lemma}[theorem]{Lemma}
\theoremstyle{definition}
\newtheorem{assumption}[theorem]{Assumption}
\theoremstyle{remark}
\definecolor{lxs}{RGB}{138,43,226}
\title{Enhancing Efficiency of Safe Reinforcement Learning via Sample Manipulation}
\author{%
  Shangding Gu$^{1,3}$\thanks{Equal contribution.}, Laixi Shi$^{2*}$, Yuhao Ding$^1$, Alois Knoll$^3$, Costas Spanos$^1$, Adam Wierman$^2$, \\ \textbf{Ming Jin}$^4$ \\
  $^1$University of California, Berkeley, USA\\
  $^2$California Institute of Technology, USA\\
  $^3$Technical University of Munich, Germany\\
   $^4$Virginia Tech, USA \\
}
\begin{document}

\maketitle

\begin{abstract}

Safe reinforcement learning (RL) is crucial for deploying RL agents in real-world applications, as it aims to maximize long-term rewards while satisfying safety constraints. However, safe RL often suffers from sample inefficiency, requiring extensive interactions with the environment to learn a safe policy.
We propose Efficient Safe Policy Optimization (ESPO), a novel approach that enhances the efficiency of safe RL through \emph{sample manipulation}. ESPO employs an optimization framework with three modes: maximizing rewards, minimizing costs, and balancing the trade-off between the two. By dynamically adjusting the sampling process based on the observed conflict between reward and safety gradients, ESPO theoretically guarantees convergence, optimization stability, and improved sample complexity bounds. Experiments on the \textit{Safety-MuJoCo} and \textit{Omnisafe} benchmarks demonstrate that ESPO significantly outperforms existing primal-based and primal-dual-based baselines in terms of reward maximization and constraint satisfaction. Moreover, ESPO achieves substantial gains in sample efficiency, requiring 25--29\% fewer samples than baselines, and reduces training time by 21--38\%. 

\end{abstract}

\section{Introduction}
\label{sec:introduction}

Reinforcement learning (RL)~\cite{sutton2018reinforcement} has demonstrated powerful capabilities in several domains including single-robot control~\cite{duan2016benchmarking, kober2013reinforcement}, multi-robot control \cite{gu2024safe,gu2023safe},  Go game \cite{silver2016mastering} and multi-agent poker \cite{brown2019superhuman}. Despite recent advancements, the crucial requirement of safety in RL tasks cannot be overstated. For instance, in fields like autonomous driving and robotics, safety is often prioritized over reward optimization, leading to growing interests in safe RL in recent years \cite{berkenkamp2017safe, gu2022review}. The goal of safe RL is to maximize long-term cumulative rewards while adhering to additional safety cost constraints.

Most state-of-the-art (SOTA) safe RL methods, including both primal-based baselines (e.g., CRPO~\cite{xu2021crpo}, PCRPO~\cite{gu2023pcrpo}) and primal-dual-based methods (e.g., CUP~\cite{yang2022constrained}, PPOLag~\cite{ji2023omnisafe}), optimize the cost and reward objective with a predetermined sample size for all iterations. However, this paradigm could lead to \textit{sample inefficiency} for two main reasons:

$\bullet$ Wasted samples and computational resources in simple scenarios, where the cost of obtaining these samples may outweigh their learning benefits.

$\bullet$ Insufficient exploration in complex scenarios with high uncertainty or conflicting objectives, potentially hindering the learning of a safe and optimal policy.

A key insight from optimization literature suggests that adaptively selecting sample size is a worthwhile but delicate issue, as it may heavily depends upon the optimization stage and landscape \cite{byrd2012sample,gao2022balancing,tsz2024adadagrad}. However, this insight remains largely unexplored within the realm of safe RL, where the consideration of safety introduces unique challenges and complexities. The presence of safety constraints can generate regions with significant conflicts between reward and safety objectives, necessitating meticulous balancing and more samples to achieve accuracy. Therefore, an unresolved question in safe RL is: \textbf{Can we enhance sample efficiency by dynamically adapting the sample size, while simultaneously improving reward performance and guaranteeing safety?}

\begin{wrapfigure}{r}{6cm}%
\centering
\includegraphics[width=0.99\linewidth]{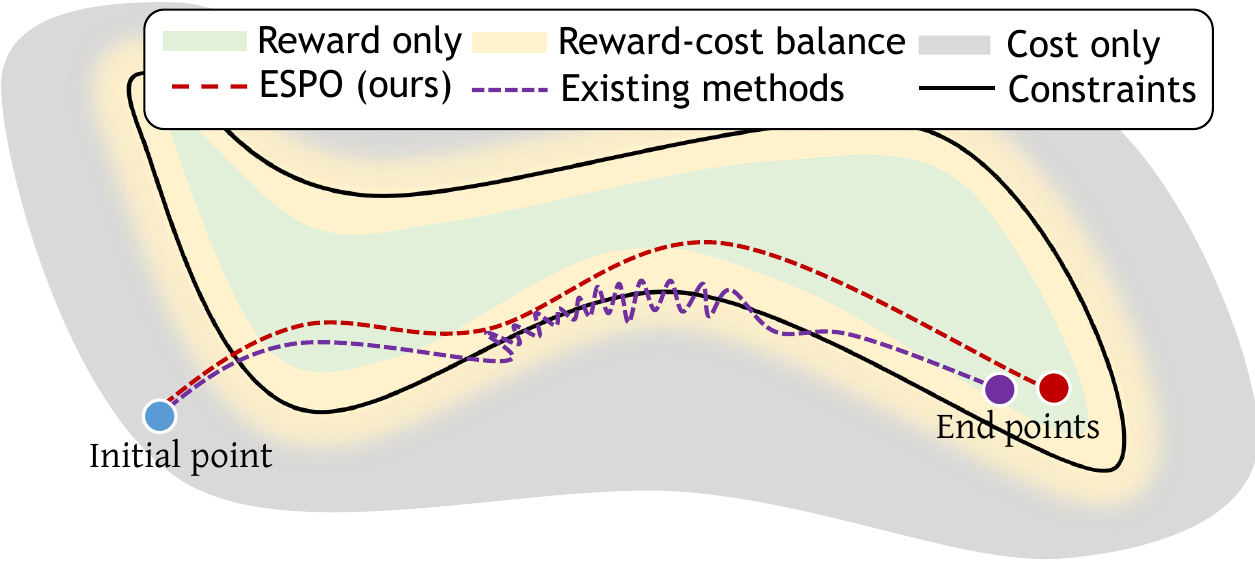}
\caption{Oscillation Analysis compared our method with existing safe RL methods.}
\label{fig:comparison-oscillation-methods} 
\end{wrapfigure}

To address this question, we focus on primal-based approaches, which do not require fine-tuning of dual parameters or heavily rely on initialization, compared to primal-dual-based optimization~\cite{gu2023pcrpo, xu2021crpo}. The key to effectively enhancing sample efficiency is to establish reliable criteria for determining sample size requirements. Inspired by insights from multi-objective optimization/RL \cite{mahapatra2020multi,liu2021conflict,gu2023pcrpo}, we use {\em gradient conflict between rewards and costs} as an effective signal for adjusting sample size in each iteration. Intuitively, when gradient conflict occurs, balancing reward and safety optimization using a non-adaptive sample size becomes challenging; conversely, when the gradients are aligned, optimizing with fewer samples is sufficient.  This motivates us to adopt a three-mode optimization framework: 1) optimizing cost exclusively upon a safety violation; 2) simultaneously optimizing both reward and cost during a soft constraint violation; 3) optimizing only the reward when no violations are present. This allows tailored sample size adjustment based on the optimization regime. We increase the sample size in situations of gradient conflict to incorporate more informative samples and reduce it in cases of gradient alignment to prevent unnecessary costs and training time.  This sampling adjustment is effective in each policy learning mode (cost only, simultaneous reward and cost, and reward only), enabling the search for improved policies that prioritize safety, rewards, or a balance of both.

This study makes three key contributions emphasizing sample manipulation for safe RL: 

$\bullet$ We propose Efficient Safe Policy Optimization (ESPO), an algorithm that depart from prior arts byincorporating sample manipulation by leveraging gradient conflict signals as criteria to enhance sample efficiency and reduce unnecessary interactions with the environments.

$\bullet$ We provide a comprehensive theoretical analysis of ESPO, including convergence rates, the advantages of reducing optimization oscillation, and provable sample efficiency. The theoretical results inspire ESPO's sample manipulation approach and could be of independent interest for broad RL applicability.

$\bullet$ We evaluate ESPO through comparative and ablation experiments on two benchmarks: \textit{Safety-MuJoCo}\cite{gu2023pcrpo} and \textit{Omnisafe}\cite{ji2023omnisafe}. The results demonstrate that ESPO improves reward performance and safety compared to SOTA primal-based and primal-dual-based baselines. Notably, ESPO significantly reduces the number of samples used during policy learning and minimizes training costs while ensuring safety and achieving superior reward performance.

\section{Related Works}
\label{sec:related-work}

Various methodologies have been developed to enhance safety in RL \cite{brunke2022safe, gu2022review}, including constrained optimization-based methods, control-based methods \cite{chow2018lyapunov, chow2019lyapunov,jin2020stability,gu2022recurrent}, and formal methods \cite{murugesan2019formal}. Among these, constrained optimization-based methods have gained notable popularity due to their ease of use and reduced dependency on external knowledge \cite{gu2022review}.

Constrained optimization-based methods can be categorized into primal-dual (e.g., CPO \cite{achiam2017constrained}, PCPO \cite{yang2019projection}, CUP \cite{yang2022constrained}) and primal approaches. Primal-dual methods face challenges in tuning dual multipliers, ensuring feasible initialization, and sensitivity to learning rates \cite{xu2021crpo, gu2023pcrpo}. Primal methods offer a distinct advantage by eliminating the need for dual multipliers. A prominent primal-based method is CRPO~\cite{xu2021crpo}, which focuses on directly optimizing the primal problem. When safety violations occur, CRPO exclusively improves the violated constraints. However, it encounters significant challenges with conflicting gradients between optimizing rewards and constraints, which can impact ensuring both performance and ongoing safety compliance. PCRPO \cite{gu2023pcrpo} addresses this issue by balancing the trade-offs between reward and safety performance through strategic gradient manipulation. However, it lacks comprehensive convergence and sample complexity analysis and faces computational challenges due to the need to compute reward and safety gradients in each gradient handling step.

Several efficient safe RL methods have been recently proposed \cite{chen2021safe, den2022planning, ding2021provably, ding2022convergence, ding2023provably, kim2022efficient, munos2016safe, slack2022safer, tabas2022computationally}, including offline \cite{slack2022safer} and off-policy settings \cite{kim2022efficient, munos2016safe}. Our model-free, on-policy approach is distinguished by its dynamic calibration of sampling based on the interplay between reward maximization and safety assurance. Closely related works are \cite{den2022planning} and \cite{ding2023provably}. \cite{den2022planning} employs symbolic reasoning for safety but relies on external knowledge, potentially limiting applicability. \cite{ding2023provably} proposes a non-stationary safe RL approach with regret bounds using linear function approximation but may struggle with complex tasks and inherits issues common in primal-dual safe RL \cite{ding2021provably, ding2022convergence}. Our primal-based method circumvents these drawbacks.

Adaptive sampling methods in optimization can be categorized into prescribed (e.g., geometric) sample size increase \cite{byrd2012sample, friedlander2012hybrid} \cite{byrd2012sample, bollapragada2018adaptive}, gradient approximation test \cite{carter1991global,bertsekas2003convex,byrd2012sample,bollapragada2018adaptive,cartis2018global,bottou2018optimization,berahas2021global}, and derivative-free \cite{shashaani2018astro,bollapragada2024derivative} and simulation-based methods \cite{pasupathy2018sampling} (see \cite{curtis2020adaptive} for a review). These methods focus on controlling the variance of gradient approximations or function evaluations (e.g., through inner product \cite{bollapragada2018adaptive} or norm tests \cite{carter1991global,cartis2018global}) to balance computational efficiency and sample complexity. Adaptive sampling methods have also been applied to constrained stochastic optimization problems with convex feasible sets \cite{beiser2023adaptive,xie2024constrained}. A recent work \cite{zhao2024adaptive} extends adaptive sampling to a multi-objective setting, but their criteria are still based on variance. Our research introduces a novel perspective by focusing on conflict-aware updates based on safety and performance gradients in safe RL, making it the first adaptive sampling method for this important domain.

\section{Problem Formulation}
\label{sec:problem-formulation}

A Constrained Markov Decision Process (CMDP) \cite{altman1999constrained} is often used to model safe RL problems. A CMDP is denoted as $(\mathcal{S}, \mathcal{A}, P, r, c, b, \gamma)$, where $\mathcal{S}$ is the state space, $\mathcal{A}$ is the action space, $P: \mathcal{S} \times \mathcal{A} \times \mathcal{S} \rightarrow [0,1]$ is the transition probability function, $r: \mathcal{S} \times \mathcal{A} \rightarrow \mathbb{R}$ is the reward function, and $\gamma$ is the discount factor. To encode safety, $c = (c_1, \dots, c_n): \mathcal{S} \times \mathcal{A} \rightarrow \mathbb{R}^n$ is the cost function assigning costs to state-action pairs, with higher costs indicating higher risks, $b = (b_1, \dots, b_n) \in \mathbb{R}^n$ contains safety thresholds for each constraint. 

This CMDP framework searches for a safe policy $\pi$ in the stochastic Markov policy set $\Pi$, balancing rewards and safety constraints.

The expected cumulative reward values are defined as  \resizebox{!}{0.46cm}{$V_{r}^{\pi}(s)=\mbb{E}\left[\sum_{t=0}^{\infty} \gamma^{t} r\left(s_{t}, a_{t}\right) \bigg| \pi,  s_0=s\right]$} and \resizebox{!}{0.46cm}{$Q_{r}^{\pi}(s, a)=\mathbb{E}\left[\sum_{t=0}^{\infty} \gamma^{t} r\left(s_{t}, a_{t}\right) \bigg| \pi, s_0 = s, a_{0}=a\right]$} for states and state-action pairs, respectively. Similarly, safety is quantified using the cost state values $V_{c}^{\pi}(s)$ and cost state-action values $Q_{c}^{\pi}(s, a)$. The primary objective in safe RL is to maximize the accumulative reward while ensuring safety, under an initial state distribution $\rho$:

\begin{equation}\label{eq:cmdp}
\begin{aligned}
\max_{\pi\in \Pi}\  V_{r}^{\pi}(\rho) \defn \mathbb{E}_{s\sim \rho}\left[V_{r}^{\pi}(s) \right], \ \text{ s.t. } V_{c}^{\pi}(\rho)  \defn \mathbb{E}_{s\sim \rho}[V_{c}^{\pi}(s) ] \leq b.
\end{aligned}
\end{equation}

However, conflicts often arise in safe RL between the reward gradient $\g_r = \nabla V_{r}^{\pi}(\rho)$ and negative cost gradient $\g_c =  - \nabla V_{c}^{\pi}(\rho)$. These conflicts can lead to unstable policy updates that cause experiences violating safety constraints, forcing reversion to prior policies and wasting samples. Such unstable dynamics further impede efficient exploration, risking premature convergence and squandering of computational resources. This study aims to efficiently search for a safe policy by manipulating samples to reduce waste and improve safe RL efficiency.

\section{Algorithm Design and Analysis}
\label{sec:method}

\subsection{Three-Mode Optimization}

To improve learning efficiency and mitigate oscillations, we leverage PCRPO \cite{gu2023pcrpo} and categorize performance optimization into three distinct strategies: focusing on reward, on both reward and cost simultaneously, or solely on cost. 

Two essential parameters are introduced to construct a soft constraint region --- \(h^-\) on the lower side and \(h^+\) on the upper side. With $h^-, h^+$ in hand, \cite{gu2023pcrpo} divides the optimization process into three modes as below. Throughout the paper, we parameterize the policy $\pi$ by $w$.

$\bullet$  {\bf 1) Safety Violations.} When the cost values \(V_{c}^{\pi}(\rho)>(h^+ + b)\), we apply \eqref{eq:update-policy-parameters-for-safety} to update the policy parameter $w_t$  with learning rate $\eta$. In such mode, since the constraints are violated, we prioritize safety and choose to minimize the cost objective to achieve compliance with safety standards.
\begin{align} 
\label{eq:update-policy-parameters-for-safety}
w_{t+1}=w_t + \eta \g_c.
\end{align}

$\bullet$ {\bf 2) Soft Constraint Violations.} When  $V_{c}^{\pi}(\rho)\in [h^- + b, h^+ + b]$, we leverage \eqref{eq:projection-gradient-one} and \eqref{eq:projection-gradient-two} for simultaneous optimization of reward and safety performance. Specifically, when within the soft constraint region, the {\em conflict} between the reward and cost gradients is characterized by the angle \(\theta_{r,c}\) between the reward gradient \(\g_r\) and the cost gradient \(\g_c\). When $\theta_{r,c} > 90^{\circ}$, it indicates the directions that optimize the reward and the safety performance are in conflict, and the update rule is \eqref{eq:projection-gradient-one}. 

\begin{empheq}[left={w_{t+1} = \empheqlbrace}]{align}
    &w_t 
 + \eta \left[x_t^r \left(\g_r - \frac{\g_r \cdot \g_{c}}{\|\g_{c}\|^2} \g_{c}\right) + x_t^c \left(\g_{c} - \frac{\g_c \cdot \g_{r}}{\|\g_{r}\|^2} \g_{r}\right) \right], \label{eq:projection-gradient-one} \\
   & w_t + \eta \left[x_t^r \g_{r}+x_t^c\g_{c} \right], \label{eq:projection-gradient-two}
\end{empheq}
where $x_t^r, x_t^c \geq 0$ and $x_t^r + x_t^c = 1$ for all $t\in T$. 
It employs gradient projection techniques \cite{gu2023pcrpo, yu2020gradient}, projecting reward and cost gradients onto their normal planes and ensuring that the policy adjustment balances the conflicting objectives of maximizing rewards and minimizing costs. In contrast, when $\theta_{r,c} \leq 90^{\circ}$, namely, the directions for maximizing rewards and minimizing costs are aligned or do not significantly oppose each other, we use the update rule \eqref{eq:projection-gradient-two}.

In this scenario, the gradient for the update is computed based on the weight of the reward and cost gradients. This method leverages the synergistic potential between reward maximization and cost minimization, aiming for a policy update that harmoniously improves both aspects.

$\bullet$ {\bf 3) No Violations.} When \(V_{c}^{\pi}(\rho)<(h^- + b)\), the update rule in \eqref{eq:update-policy-parameters-for-reward} is applied to optimize the policy: 
 \begin{align} 
\label{eq:update-policy-parameters-for-reward}
w_{t+1}=w_t+\eta \g_r.
\end{align}
In other words, given that the policy adheres to all specified constraints, only the reward objective is considered.
\subsection{Sample Size Manipulation}\label{sec:sample-manipulation}
As introduced above, PCRPO \cite{gu2023pcrpo} allows for adaptive optimization updates based on different conditions. However, PCRPO and other existing safe RL methods usually apply an identical sample size during the learning process, resulting in potentially unnecessary computation cost for simpler tasks and inadequate exploration for more complex tasks. 

Furthermore, there is no existing theoretical analysis for PCRPO, leaving the performance guarantees of it somewhat uncharted. To address the above challenges, we propose a method called ESPO based on a crucial sample manipulation approach that will be introduced momentarily. A comprehensive theoretical analysis of ESPO is provided in Section \ref{main:theoretical-proof}.

Throughout the framework of three-mode optimization, our proposed method dynamically adjusts the number of samples utilized at each iteration based on the criteria of gradient conflict, to meet specific demands of reducing unnecessary samples in simpler scenarios and increasing exploration in more complex situations. Specifically, we consider the three-mode optimization classified by the gradient-conflict criteria respectively. \textbf{2)(a)} \textit{Soft Constraint Violations with Gradient Conflict}, where \(\theta_{r,c} > 90^\circ\) (cf.~\eqref{eq:adjust-sample-size-positive}): the cases with slight safe constraint violation and gradient conflict between reward and safety objectives. In this scenario, adjusting the sample size becomes crucial for sufficiently exploring the environments to identify a careful balanced udpate direction. We increase the sample size in \eqref{eq:adjust-sample-size-positive} to enhance the likelihood of achieving a near-optimal balance between the reward and cost objectives. \textbf{2)(b)}\textit{Soft Constraint Violations without Gradient Conflict}, where \(\theta_{r,c} \leq 90^\circ\) (cf.~\eqref{eq:adjust-sample-size-negative}): the cases with slight safe constraint violation and gradient alignment between reward and safet objectives. Considering it is easier to search for a update direction that benefits the aligned reward and cost objectives, we reduce the sample size in \eqref{eq:adjust-sample-size-negative} to achieve efficient learning. \textbf{1) and 3)} \textit{Safety Violations} and \textit{No Violations}: only reward or cost objective is considered. It indicates that there is no gradient conflict since only one objective is targeted, where we also employ the update rule in \eqref{eq:adjust-sample-size-negative}.

For more details, we dynamically adjust the sample size \(X_t\) ($X$ denote a default fixed sample size), with \(\zeta^+_t\) and \(\zeta^-_t\) representing some sample size adjustment parameters. 

\begin{empheq}[left={X_{t+1} = \empheqlbrace}]{align}
    X + X \zeta_{t}^+, \ \text{if} \ \ \theta_{r,c}  > 90^{\circ}, 
    \label{eq:adjust-sample-size-positive} \\
    X + X \zeta_{t}^-, \ \text{if} \ \ \theta_{r,c} \leq 90^{\circ}.\label{eq:adjust-sample-size-negative} 
\end{empheq}

This gradient-conflict-based sample manipulation is a crucial feature of our proposed method, which enables adaptively sample size tailored to the specific nature of the joint reward-safety objective landscape at each update iteration.

\subsection{Efficient Safe Policy Optimization (ESPO)}\label{sec:algorithm-full}

Building upon the above two modules --- three-mode optimization and sample size manipulation, we have formulated a practical algorithm. The details of this algorithm are summarized in Algorithm~\ref{alg:ESPO-framework-gradually} in Appendix~\ref{apendix-algorithm:espo}. 
This algorithm encompasses a strategic approach to sample size adjustment and policy updates under various conditions: \textbf{1)} \textit{Safety Violations}: When a safety violation occurs, we adjust the sample size $X_t$ using Equation (\ref{eq:adjust-sample-size-negative}). Simultaneously, the policy ${\pi_{w_t}}$ is updated to ensure safety, as dictated by Equation~(\ref{eq:update-policy-parameters-for-safety}). \textbf{2)(a)} \textit{Soft Constraint Violations with Gradient Angle $ \leq 90^{\circ}$}: In modes of soft region violation where the angle $\theta_{r,c}$ between gradients $\g_r$ and $\g_c$ is less than or equal to $90^{\circ}$, we adjust the sample size $X_t$ using Equation (\ref{eq:adjust-sample-size-negative}). The policy ${\pi_{w_t}}$ is then updated in accordance with Equation (\ref{eq:projection-gradient-one}). \textbf{2)(b)} \textit{Soft Constraint Violations with Gradient Angle} $>$ \textit{$90^{\circ}$}: Conversely, if the soft region violation occurs with a gradient angle $\theta_{r,c}$ exceeding $90^{\circ}$, the sample size $X_t$ is adjusted via Equation (\ref{eq:adjust-sample-size-positive}). Policy updates are made using Equation (\ref{eq:projection-gradient-two}). \textbf{3)} \textit{No Violations}: In the absence of any violations, the sample size $X_t$ is altered using Equation (\ref{eq:adjust-sample-size-negative}). The policy ${\pi_{w_t}}$ is then updated to maximize the reward $V_{r}^{\pi}(\rho)$, following Equation~(\ref{eq:update-policy-parameters-for-reward}). This practical algorithm reflects an insightful analysis of the interplay between reward maximization and safety assurance in safe RL, tailoring the learning process to the specific demands of each scenario.

\subsection{Theoretical analysis of ESPO}
\label{main:theoretical-proof}

In this section, we provide theoretical guarantees for the proposed ESPO, including the convergence rate guarantee and provable optimization stability and sample complexity advancements.

\paragraph{Tabular setting with softmax policy class.}
In this paper, we focus on a fundamental tabular setting with finite state and action space. We consider the class of policies with the softmax parameterization which is complete including all stochastic policies. Specifically, a policy $\pi_w$ associated with $w\in \mathbb{R}^{|\cS| |\cA|}$ is defined as
\begin{align}
\forall (s,a)\in \cS\times \cA: \quad \pi_w(a|s)\coloneqq \frac{\exp(w(s,a))}{\sum_{a^\prime \in \cA}\exp(w(s,a^\prime))}. \label{eq:2}
\end{align}
Before proceeding, we introduce some useful notations. When executing ESPO (cf.~Algorithm~\ref{alg:ESPO-framework-gradually}), let $\nr$, $\nsoft$, and $\nc$ denote the set of iterations using \textit{Safety Violation Response} (mode 1), \textit{Soft Constraint Violation Response} (mode 2), and \textit{No Violation Response} (mode 3) in Section~\ref{sec:algorithm-full}, respectively.

\paragraph{I: Provable convergence of ESPO.}
First, we present the convergence rate of our proposed ESPO in terms of both the optimal reward and the constraint requirements in the following theorem; the proof is given in Appendix~\ref{proof:thm:convergence}.

\begin{theorem}\label{thm:convergence}
Consider tabular setting with policy class defined in \eqref{eq:2}, and any $\delta \in (0,1)$. For Algorithm~\ref{alg:ESPO-framework-gradually}, applying $T_{\mathsf{pi}} = \widetilde{O}\big(\frac{T \log(\frac{|\cS||\cA|}{\delta})}{(1-\gamma)^3|\cS||\cA|} \big)$\footnote{Throughout this paper, the standard notation $\widetilde{O}(\cdot)$ indicates the order of a function with all constant terms hidden.} iterations for each policy evaluation step, set tolerance $h^+ = \widetilde{O}\big(\frac{2\sqrt{|\cS| |\cA|}} {(1-\gamma)^{1.5}\sqrt{T}} \big)$ and the learning rate of NPG update $\eta =   (1-\gamma)^{1.5}/\sqrt{|\cS| |\cA|T} $. Then, the output $\widehat{\pi}$ of Algorithm~\ref{alg:ESPO-framework-gradually} satisfies that with probability at least $1-\delta$,
	 \begin{align}
     V_r^{\pi^\star}(\rho) -  \mathbb{E}[V_r^{\widehat{\pi}}(\rho)] \leq \widetilde{O}\left(\sqrt{\frac{|\cS||\cA|}{(1-\gamma)^3T}}\right), \notag \ \ 
   \mathbb{E}_{}[V_c^{\widehat{\pi}}(\rho)] - V^{\pi^\star}_c(\rho)  \leq \widetilde{O}\left(\sqrt{\frac{|\cS||\cA|}{(1-\gamma)^3T}}\right).
   \end{align}

Here, the expectation is taken with respect to the randomness of the output $\widehat{\pi}$, which is randomly selected from $\{\pi_{w_t}\}_{1\leq i \leq T}$ with a certain probability distribution (specified in Appendix \eqref{eq:weights-exp}). 
\end{theorem}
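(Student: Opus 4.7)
The plan is to adapt the natural policy gradient (NPG) analysis for softmax-parameterized tabular policies (in the style of Agarwal \etal and the CRPO analysis of \cite{xu2021crpo}) to ESPO's three-mode update. The core strategy has three ingredients: (i) a concentration argument for the per-iteration policy evaluation, (ii) a KL-divergence telescoping lemma that yields an average-regret bound against the optimal comparator $\pi^\star$, and (iii) a CRPO-style partition of iterations by mode, combined with the mixture output distribution of Algorithm~\ref{alg:ESPO-framework-gradually} to simultaneously control the reward gap and the constraint violation.

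First, I would establish a per-iteration evaluation bound: with $T_{\mathsf{pi}} = \widetilde{O}\!\big(T \log(|\cS||\cA|/\delta)/((1-\gamma)^3|\cS||\cA|)\big)$ samples per evaluation step, Bernstein concentration on truncated rollouts plus a union bound over $(s,a)$ pairs and the $T$ outer iterations controls the $\ell_\infty$ error in the estimated $Q_r, Q_c$ (and hence in $\g_r, \g_c$) by $\widetilde{O}\!\big(\sqrt{|\cS||\cA|/((1-\gamma)^3 T)}\big)$ with probability at least $1-\delta$; this matches the target rate, so estimation error becomes a lower-order nuisance. Then, using the softmax-NPG identity $\log\pi_{w_{t+1}}(a|s) - \log\pi_{w_t}(a|s) = \eta A^{\pi_{w_t}}(s,a) + (\text{log-partition})$, I would telescope $\mathrm{KL}(\pi^\star\,\|\,\pi_{w_t})$ along the trajectory to obtain
\begin{align*}
\eta\!\sum_{t\in \nr}\!\big(V_c^{\pi^\star}(\rho)-V_c^{\pi_{w_t}}(\rho)\big) + \eta\!\sum_{t\in \nc}\!\big(V_r^{\pi^\star}(\rho)-V_r^{\pi_{w_t}}(\rho)\big) + \eta\!\sum_{t\in \nsoft}\!\Lambda_t \leq \mathrm{KL}(\pi^\star\|\pi_{w_0}) + O\!\big(\eta^2 T/(1-\gamma)^2\big),
\end{align*}
where $\Lambda_t$ is an $(x_t^r, x_t^c)$-weighted combination of reward and cost one-step regrets read off from \eqref{eq:projection-gradient-one}--\eqref{eq:projection-gradient-two}. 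Choosing $\eta = (1-\gamma)^{1.5}/\sqrt{|\cS||\cA|T}$ balances the two sides and yields an $\widetilde{O}\!\big(\sqrt{|\cS||\cA|/((1-\gamma)^3 T)}\big)$ per-iteration average regret.

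The CRPO-style step then exploits the mode assignments: every $t\in\nr$ satisfies $V_c^{\pi_{w_t}}(\rho) > b + h^+$, while every $t\in\nc$ satisfies $V_c^{\pi_{w_t}}(\rho) < b + h^-$. Converting these sign conditions into counting bounds on $|\nr|, |\nc|, |\nsoft|$ and drawing $\widehat\pi$ from the mixture specified in Appendix~\eqref{eq:weights-exp}---which concentrates on iterates satisfying $V_c^{\pi_{w_t}}(\rho) \le b+h^+$---bounds both $\mathbb{E}[V_r^{\pi^\star}(\rho)-V_r^{\widehat\pi}(\rho)]$ and $\mathbb{E}[V_c^{\widehat\pi}(\rho)-V_c^{\pi^\star}(\rho)]$ by the average regret plus $h^+$. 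Setting $h^+ = \widetilde{O}\!\big(\sqrt{|\cS||\cA|/((1-\gamma)^3 T)}\big)$ matches the tolerance to the target rate and closes both bounds simultaneously.

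The main obstacle is handling mode~2(a), where the update direction is the projection of $\g_r, \g_c$ onto each other's normal planes (Eq.~\eqref{eq:projection-gradient-one}). Keeping the KL telescoping clean requires showing that these projected gradients still have a sufficiently large inner product with the direction pointing toward $\pi^\star$, with slack controlled by the estimation error and the conflict angle $\theta_{r,c}$; this is where the three-mode decomposition is strictly more delicate than CRPO and where the adaptive sample increase in \eqref{eq:adjust-sample-size-positive} enters the analysis by tightening the estimation error exactly in the iterations that need it. A secondary point is that the adaptive sample size satisfies $X_t \ge X$, so it only \emph{improves} per-iteration accuracy relative to the worst-case budget, allowing ESPO to be analyzed under the fixed budget $X$ without introducing a data-dependent bias into the law of $\widehat\pi$.
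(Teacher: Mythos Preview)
Your high-level plan---concentration for policy evaluation, KL telescoping along the NPG trajectory, partition by mode, and a mixture output that zeroes out unsafe iterations---matches the paper. But you misidentify the obstacle in mode~2(a). You propose a geometric argument about the projected gradient's ``inner product with the direction toward $\pi^\star$'' with slack depending on $\theta_{r,c}$, and you suggest the adaptive sample increase is what rescues this step. The paper's actual mechanism is much simpler and does not use the angle at all beyond algebra: in the tabular softmax-NPG setting, the projected direction $x_t^r\big(\g_r - \tfrac{\g_r\cdot\g_c}{\|\g_c\|^2}\g_c\big) + x_t^c\big(\g_c - \tfrac{\g_c\cdot\g_r}{\|\g_r\|^2}\g_r\big)$ is itself a nonnegative linear combination $y_t^r\g_r + y_t^c\g_c$ (since $\g_r\cdot\g_c<0$ when $\theta_{r,c}>90^\circ$), so the parameter update is exactly $w_{t+1}=w_t+\tfrac{\eta}{1-\gamma}\big(y_t^r\bar Q^r_t + y_t^c\bar Q^c_t\big)$ (Lemma~\ref{lemma_s1}). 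The KL telescoping and performance-improvement bounds (Lemmas~\ref{lemma_s2}--\ref{lemma_s5}) then go through verbatim with $(y_t^r,y_t^c)$ replacing $(x_t^r,x_t^c)$, producing a $y_t^r$-weighted reward gap plus a $y_t^c$-weighted cost gap; the output distribution \eqref{eq:weights-exp} assigns mass proportional to $y_t^r$ on $\nsoft^{\mathsf{conf}}$ precisely so that these weights cancel when forming $\mathbb{E}[V_r^{\widehat\pi}]$. No angle-dependent slack or inner-product lower bound is needed.

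Two smaller errors. First, ``$X_t\geq X$'' is false: $\zeta_t^-\in(-1,0]$ in \eqref{eq:adjust-sample-size-negative}, so $X_t$ can drop below $X$; the paper's proof of Theorem~\ref{thm:convergence} simply fixes $T_{\mathsf{pi}}$ as stated and is agnostic to the sample manipulation, whose benefit is treated separately in Proposition~\ref{pro:sample}. Second, your telescoping display swaps the roles of $\nr$ and $\nc$ relative to how the paper's proof actually uses them (the definition text in the paper is misleading; the proof consistently takes $\nr$ as the reward-only update set and $\nc$ as the cost-only update set). The reward-gap terms are summed over $\nr\cup\nsoft$, and the key counting step (Lemma~\ref{lemma_max_reward}) lower-bounds $|\nr\cup\nsoft|\geq T/2$ by exploiting the $h^+|\nc|$ contribution from lower-bounding the \emph{cost} gap on $\nc$, not the other way around.
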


Theorem~\ref{thm:convergence} demonstrates that taking the output policy $\widehat{\pi}$ as a random one selected from $\{\pi_{w_t}\}_{1\leq i \leq T}$ following some distribution, the proposed ESPO algorithm achieves convergence to a globally optimal policy $\pi^\star$ within the feasible safe set, following the convergence rate of $\widetilde{O}\left(\sqrt{\frac{SA}{(1-\gamma)^3T}}\right)$. The convergence rate for constraint violations towards $0$ is also $\widetilde{O}\left(\sqrt{\frac{SA}{(1-\gamma)^3T}}\right)$. While note that the implementation of Algorithm~\ref{alg:ESPO-framework-gradually} in practice only need to output the final $\widehat{\pi} = \pi_{w_T}$ for simplicity. The randomized procedure is only used for theoretical analysis.

We observe that ESPO enjoys the same convergence rate as the well-known primal safe RL algorithm --- CRPO \cite{xu2021crpo}. In addition, Theorem~\eqref{thm:convergence} directly indicates the same convergence rate guarantee for PCRPO \cite{gu2023pcrpo} --- the three-mode optimization framework that our ESPO refer to, which closes the gap between practice and theoretical guarantees for PCRPO \cite{gu2023pcrpo}. Technically, to handle the variation in ESPO's update rules across a three-mode optimization process compared to CRPO, deriving the results necessitates to overcome additional challenges by tailoring a new distribution probability for the algorithm that is used to randomly select policies from $\{\pi_{w_t}\}_{1\leq i \leq T}$. 

Besides the efficient convergence, in the following, we present two advantages of ESPO in terms of both optimization benefits and sample efficiency; the proof are provided in Appendix~\ref{proof:pro:oscillation} and \ref{proof:pro:sample} respectively.

\paragraph{II: Efficient optimization with reduced oscillation.}
Shown qualitatively in Figure~\ref{fig:comparison-oscillation-methods}, compared to other primal safe RL algorithms (such as CRPO), our proposed ESPO can significantly increase the ratios of iterations for maximizing the reward objective within the (relaxed) soft safe region by reducing oscillation across the safe region boundary. We provide a rigorous quantitative analysis for such advancement as below:
\begin{proposition}\label{pro:oscillation}
    Suppose CRPO \cite{xu2021crpo} and ESPO (ours) are initialized at an identical point $w_0\in\mathbb{R}^{|\cS||\cA|}$. Denote the set of iterations that CRPO updates according to the reward objective as $\nr^{\mathsf{CRPO}}$. Then by adaptively choosing the parameters ($x_t^r, x_t^c$) of Algorithm~\ref{alg:ESPO-framework-gradually}, if there exist iteration $t_{\mathsf{in}}<T$ such that $t\in \nr \cup \nsoft$, one has
    \begin{subequations}
        \begin{align}
    &\forall t_{\mathsf{in}} \leq t \leq T: \quad t \in \nr \cup \nsoft ,\label{prop:1-e1} \\
    &|\nr| + |\nsoft| = T -  t_{\mathsf{in}}  \geq \cB_r^{\mathsf{CRPO}}.\label{prop:1-e2}
    \end{align}
    \end{subequations}
    
\end{proposition}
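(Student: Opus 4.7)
The plan is to exploit two structural facts: (i) both algorithms start at $w_0$ and use the same cost-descent rule when in their respective ``cost-only'' modes, and (ii) ESPO's boundary $h^+ + b$ for leaving cost-only mode is strictly larger than CRPO's boundary $b$. I will prove \eqref{prop:1-e1} by induction and then derive \eqref{prop:1-e2} as a direct consequence of entry-time comparison.

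First, I observe that ESPO's mode-1 update \eqref{eq:update-policy-parameters-for-safety} is identical to CRPO's cost-minimization update, so the two trajectories agree verbatim until one of them leaves the cost-only regime. Because $h^+ + b > b$, ESPO must be the first to exit: at $t_{\mathsf{in}}$ the iterate satisfies $V_c^{\pi_{w_{t_{\mathsf{in}}}}}(\rho) \le h^+ + b$ while CRPO's iterate still obeys $V_c^{\pi^{\mathsf{CRPO}}_{t_{\mathsf{in}}}}(\rho) > b$. Hence, if $t^{\star}_{\mathsf{CRPO}}$ denotes the first iteration at which CRPO enters $\nr^{\mathsf{CRPO}}$, then $t_{\mathsf{in}} \le t^{\star}_{\mathsf{CRPO}}$.

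Next, I would establish \eqref{prop:1-e1} by induction on $t \ge t_{\mathsf{in}}$. The nontrivial step is to show, given $V_c^{\pi_{w_t}}(\rho) \le h^+ + b$, that the next iterate still satisfies this bound, so that $t{+}1 \in \nr \cup \nsoft$. I would split into three cases. (a) If $t \in \nsoft$ with $\theta_{r,c} > 90^\circ$, update \eqref{eq:projection-gradient-one} applies; the reward component is orthogonal to $\g_c$ (hence to $\nabla V_c$ to first order) while the cost component has a strictly negative first-order inner product with $\nabla V_c$ by Cauchy--Schwarz. Choosing $x_t^c$ sufficiently close to $1$ and invoking the standard smoothness/$L$-Lipschitz gradient bound on $V_c$ (in the softmax tabular setting) yields $V_c^{\pi_{w_{t+1}}}(\rho) \le V_c^{\pi_{w_t}}(\rho)$. (b) If $t \in \nsoft$ with $\theta_{r,c} \le 90^\circ$, then $\langle \g_r, \g_c\rangle \ge 0$ implies $\langle \nabla V_r, \nabla V_c\rangle \le 0$, so both terms of \eqref{eq:projection-gradient-two} decrease $V_c$ to first order and smoothness again closes the argument. (c) If $t \in \nr$, the starting cost is below $h^- + b$, and the one-step growth in $V_c$ along $\g_r$ is bounded by $\eta L \|\g_r\|$; this is $\le h^+ - h^-$ under the theorem's parameter choices ($\eta = (1-\gamma)^{1.5}/\sqrt{|\cS||\cA|T}$ and $h^+$ as in Theorem~\ref{thm:convergence}), so we remain in $\nr \cup \nsoft$.

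With \eqref{prop:1-e1} in hand, \eqref{prop:1-e2} is immediate: every iteration in $\{t_{\mathsf{in}}{+}1, \ldots, T\}$ lies in $\nr \cup \nsoft$, giving $|\nr| + |\nsoft| = T - t_{\mathsf{in}}$. Since CRPO cannot enter its reward mode before $t^{\star}_{\mathsf{CRPO}} \ge t_{\mathsf{in}}$, we have $|\nr^{\mathsf{CRPO}}| \le T - t^{\star}_{\mathsf{CRPO}} \le T - t_{\mathsf{in}} = |\nr| + |\nsoft|$. I expect the main obstacle to be the induction step of \eqref{prop:1-e1}, specifically exhibiting a concrete adaptive rule for $(x_t^r, x_t^c)$ in the conflicting-gradient regime that provably controls the second-order error from smoothness while preserving enough reward progress to be meaningful; this is where the geometry of the projection in \eqref{eq:projection-gradient-one} and the specific sizes of $\eta$, $h^+ - h^-$, and $L$ must all be balanced.
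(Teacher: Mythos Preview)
Your overall strategy matches the paper's: induction for \eqref{prop:1-e1} (case-split on whether $t \in \nr$ or $t \in \nsoft$), then \eqref{prop:1-e2} via the observation that the two algorithms share the same trajectory before $t_{\mathsf{in}}$ (since $b+h^+ > b$ forces ESPO to leave cost-only mode no later than CRPO). Your case~(c) and your entry-time argument for \eqref{prop:1-e2} are essentially identical to what the paper does.

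The substantive difference is in the $\nsoft$ case of the induction. You propose a first-order-plus-smoothness argument, splitting into conflict/no-conflict sub-cases and relying on adaptive $(x_t^r, x_t^c)$ to make the second-order residual small---precisely the obstacle you flag at the end. The paper avoids this entirely: it works in the exact-evaluation regime $T_{\mathsf{pi}} \to \infty$ and invokes the NPG performance-improvement bound (Lemma~\ref{lemma_s2}), which for softmax tabular NPG gives \emph{exact} monotone improvement, not merely first-order descent. With the single choice $x_t^c = 1$, that lemma yields $V_c^{\pi_{w_{t+1}}}(\rho) \le V_c^{\pi_{w_t}}(\rho)$ outright, with no smoothness residual to control and no need to separate the conflict and no-conflict sub-cases. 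Your route is not incorrect, but the balancing you anticipate is genuinely delicate (for instance, when $\|\g_c\|$ is small while $\|\g_r\|$ is not, the first-order cost descent can be arbitrarily small relative to the $O(L\eta^2)$ term unless $x_t^r \to 0$, and this issue arises in your case~(b) as well as~(a)); the NPG monotone-improvement lemma is the cleaner tool here and is what the paper uses.
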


 In words, \eqref{prop:1-e1} shows that as long as ESPO (cf.~Algorithm~\ref{alg:ESPO-framework-gradually}) enters the safe region that the constraint is violated at most $h^+$, it will stay and always (at least partially) optimizes the reward objective without oscillation across the safe region boundary. In addition, \eqref{prop:1-e2} indicates that the proposed ESPO enables more iterations to maximize the reward objective inside the safe region with comparison to CRPO, accelerating the optimization towards the global optimal policy.
These two theoretical guarantees are further corroborated by the phenomena in practice (shown in Table~\ref{table:update-style-analysis}): ESPO spends more iterations ($99.4\%$ steps) on optimizing the reward objective inside the safe region compared to CRPO ($35.6\%$ steps), while only a few on solely cost objective.

\paragraph{III: Sample efficiency with sample size manipulation.}
Besides the efficient optimization of ESPO, the following proposition presents the provable sample efficiency of ESPO.
\begin{proposition}\label{pro:sample}
    Consider any $ 0 \leq \varepsilon_1, \varepsilon_2 \leq \frac{1}{1-\gamma}$. To meet the following goals of  performance gaps
    \begin{align}
        V_r^{\pi^\star}(\rho) -  \mathbb{E}[V_r^{\widehat{\pi}}(\rho)] \leq \varepsilon_1,~ \mathbb{E}_{}[V_c^{\widehat{\pi}}(\rho)] - V^{\pi^\star}_c(\rho)  \leq \varepsilon_2,
    \end{align}
    ESPO (Algorithm~\ref{alg:ESPO-framework-gradually}) needs  fewer number of samples than that without the sample manipulation in Section~\ref{sec:sample-manipulation}.
\end{proposition}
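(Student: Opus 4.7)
The plan is to compare the total number of samples $\sum_{t=1}^T X_t$ required by ESPO with the sample manipulation of Section~\ref{sec:sample-manipulation} against the baseline $T \cdot X$ incurred by the non-adaptive variant that fixes $X_t = X$ throughout, where $T$ is the number of outer iterations dictated by the target accuracy $(\varepsilon_1, \varepsilon_2)$. The key observation is that the iteration complexity $T$ is pinned down by Theorem~\ref{thm:convergence} and depends only on the per-iteration policy-evaluation accuracy being \emph{sufficient} for the NPG update; how the per-iteration sample budget is allocated across iterations is a separate degree of freedom that we can exploit.

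First I would invert the rate in Theorem~\ref{thm:convergence} to obtain $T(\varepsilon_1,\varepsilon_2) = \widetilde{O}\!\left(\frac{|\cS||\cA|}{(1-\gamma)^3 (\min\{\varepsilon_1,\varepsilon_2\})^2}\right)$, so that both the adaptive and the non-adaptive versions of ESPO share the same outer-loop length. The total sample complexities to compare are therefore $\sum_{t=1}^{T} X_t$ versus $T\cdot X$. Partition the iterations into the gradient-conflict set $\cI^+ \subseteq \nsoft$ on which $X_t = X(1+\zeta_t^+)$ is applied and its complement $\cI^- = \nr \cup \nc \cup (\nsoft \setminus \cI^+)$ on which $X_t = X(1+\zeta_t^-)$ is applied. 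Then
\begin{equation*}
\sum_{t=1}^T X_t - T\cdot X \;=\; X \sum_{t\in \cI^+} \zeta_t^+ \;+\; X \sum_{t\in \cI^-} \zeta_t^-.
\end{equation*}
Invoking Proposition~\ref{pro:oscillation}, once the trajectory enters the relaxed soft region at iteration $t_{\mathsf{in}}$ it remains inside and at least $T - t_{\mathsf{in}} \geq \cB_r^{\mathsf{CRPO}}$ iterations lie in $\nr \cup \nsoft$; furthermore, the phenomena summarized in Table~\ref{table:update-style-analysis} (and the underlying gradient-alignment logic after the constraint slack has been absorbed) imply that $|\cI^-|$ dominates $|\cI^+|$. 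Choosing $\zeta_t^- < 0$ on the dominating set and $\zeta_t^+ > 0$ only on the sublinear set $\cI^+$ makes the right-hand side strictly negative, establishing $\sum_t X_t < T\cdot X$ and hence the sample-efficiency claim.

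The main obstacle is justifying that shrinking $X_t$ on $\cI^-$ does not violate the policy-evaluation accuracy that the proof of Theorem~\ref{thm:convergence} relies on; if it did, the adaptive scheme would be forced to enlarge $T$ and wipe out the per-iteration savings. Here I would argue that in the no-conflict regime the effective update direction $x_t^r \g_r + x_t^c \g_c$ (or a single objective direction in modes 1 and 3) is robust to estimation noise because reward and cost gradient errors aggregate additively rather than being amplified by projection, so the tolerance required on each $\widehat{\g}_r, \widehat{\g}_c$ can be relaxed by a factor depending on $|\zeta_t^-|$ without inflating the cumulative error terms appearing in the proof of Theorem~\ref{thm:convergence}; conversely, in the conflict regime the projection step magnifies estimation error, and the enlargement $X(1+\zeta_t^+)$ exactly compensates to keep the per-iteration progress inequality intact. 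Re-summing these per-mode accountings against the non-adaptive baseline $T\cdot X$, and using the iteration-count comparison of Proposition~\ref{pro:oscillation}, yields the strict improvement in total sample complexity asserted in the proposition.
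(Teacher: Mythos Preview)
Your proposal takes a different route from the paper and, as written, has a real gap. The paper's proof does \emph{not} rely on the cardinality claim $|\cI^-|>|\cI^+|$ and makes no use of Proposition~\ref{pro:oscillation} or Table~\ref{table:update-style-analysis}. Instead, the mechanism is a weighted resource-allocation argument: tracing the conflict-mode projection through Lemma~\ref{lemma_perform_gap}, the per-iteration evaluation error $\delta_t=\|Q_r^{\pi_{w_t}}-\bar Q^r_t\|_2$ enters the aggregate bound $\epsilon_{\mathsf{pi}}$ with weight $1$ for $t\in\nsoft^{\mathsf{no}}$ but with weight $y_t^r>1$ for $t\in\nsoft^{\mathsf{conf}}$. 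Under an explicit linear model $\delta_t=Y-\alpha\, s_t^{\mathsf{B}}$ (introduced as an Assumption in the proof), shifting $s_{\mathsf{extra}}$ extra samples \emph{into} a conflict step $t^\star$ and removing $s_{\mathsf{extra}}\cdot y_{t^\star}^r>s_{\mathsf{extra}}$ samples from a no-conflict step leaves $\epsilon_{\mathsf{pi}}$ exactly unchanged while strictly reducing the total sample count. No claim about how often each regime occurs is needed; a single conflict iteration with $y_{t^\star}^r>1$ suffices.

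Your cardinality argument, by contrast, needs a separate justification that accuracy is preserved after you shrink $X_t$ on $\cI^-$, and the ``robustness'' paragraph does not supply one: lowering $X_t$ on no-conflict steps \emph{increases} their contribution to $\epsilon_{\mathsf{pi}}$, and without exploiting the weight asymmetry there is nothing to absorb that increase. You do correctly observe that projection amplifies estimation error in the conflict regime, but you treat this as a side constraint to be ``compensated'' by $\zeta_t^+$, rather than as the very reason the sample trade is profitable. To align with the paper, drop the cardinality/Proposition~\ref{pro:oscillation} route, quantify the amplification factor $y_t^r$ explicitly, introduce the linear sample--error model, and exhibit the explicit reallocation that holds $\epsilon_{\mathsf{pi}}\le\varepsilon_1'$ while cutting total samples.
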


The result demonstrates that, considering the accuracy level/constraint violation requirements, the sample manipulation module contributes to a more sample-efficient algorithm ESPO (Algorithm~\ref{alg:ESPO-framework-gradually}). Additionally, the {\em conflict} between reward and cost gradients emerges as an effective metric for determining sample size requirements.

\section{Experiments and Evaluation}
\label{sec:experiment}

To evaluate the effectiveness of our algorithm, we compare it with two key paradigms in safe RL frameworks. The first paradigm is based on the primal framework, including PCRPO~\cite{gu2023pcrpo} and CRPO~\cite{xu2021crpo} as the representative baselines. The second paradigm includes methods that leverage the primal-dual framework, with PCPO~\cite{yang2019projection}, CUP~\cite{yang2022constrained}, and PPOLag~\cite{ji2023omnisafe} serving as representative methodologies. Our algorithm is developed within the primal framework, thereby highlighting the importance of comparing it against these paradigmatic safe RL algorithms to clearly demonstrate its performance. Experiments are conducted using both primal and primal-dual benchmarks. The \textit{Omnisafe}\footnote{\url{https://github.com/PKU-Alignment/omnisafe}}~\cite{ji2023omnisafe} benchmark is leveraged for primal-dual based methods, where representative techniques such as PCPO~\cite{yang2019projection}, CUP~\cite{yang2022constrained}, and PPOLag~\cite{ji2023omnisafe} generally exhibit stronger performance compared to existing primal methods like CRPO~\cite{xu2021crpo}, a finding discussed in \cite{ganai2024iterative}. Additionally, we use the \textit{Safety-MuJoCo}\footnote{\url{https://github.com/SafeRL-Lab/Safety-MuJoCo}}~\cite{gu2023pcrpo} benchmark for primal-based methods. This benchmark, developed in 2024,  is relatively new and primarily supports primal-based methods due to the specific implementation efforts involved. The detailed experimental settings are provided in Appendix~\ref{appendix:experiments-espo}.

Furthermore, to thoroughly evaluate the effectiveness of our method, we conduct a series of ablation experiments regarding different cost limits and sample manipulation techniques. In particular, we provide performance update analysis in terms of constraint violations. These experiments are specifically designed to dissect and understand the impact of various factors integral to our approach.

\subsection{{Experiments of Comparison with Primal-Based Methods}}

We deploy our algorithm on the \textit{Safety-MuJoCo} benchmark and carry out experiments compared with representative primal algorithms, PCRPO~\cite{gu2023pcrpo} and CRPO~\cite{xu2021crpo}. Specifically, we conduct experiments on a set of challenging tasks, namely,  \textit{SafetyReacher-v4}, \textit{SafetyWalker-v4}, \textit{SafetyHumanoidStandup-v4}.

\begin{figure}[htb!]
    \centering
       \subcaptionbox{}{
 	\includegraphics[width=0.31\linewidth]{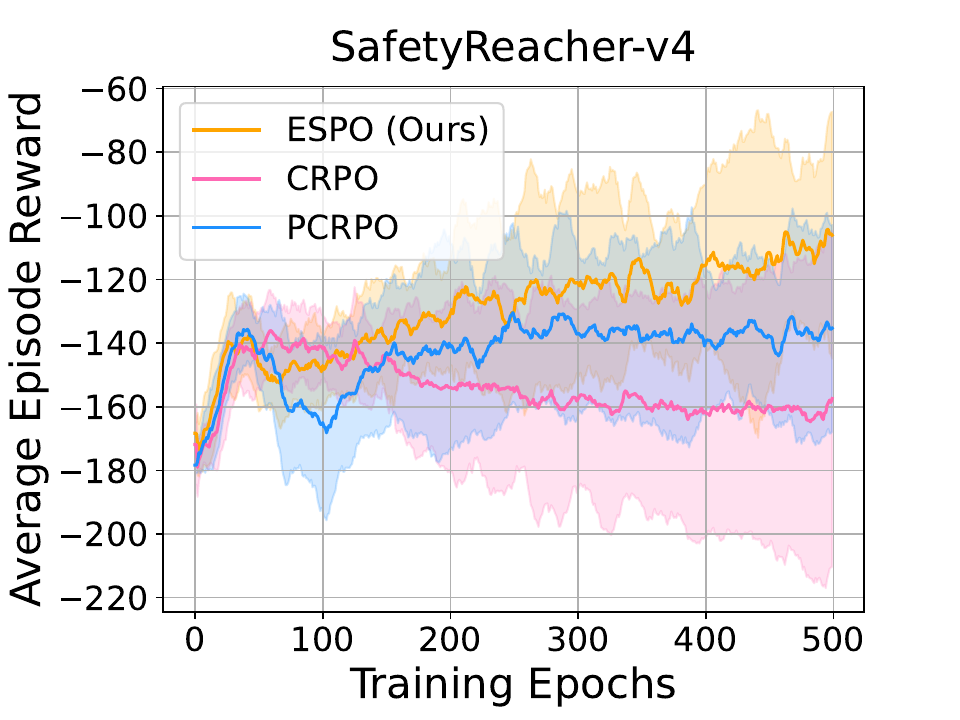} 	
  }
  \subcaptionbox{}{
  \includegraphics[width=0.31\linewidth]{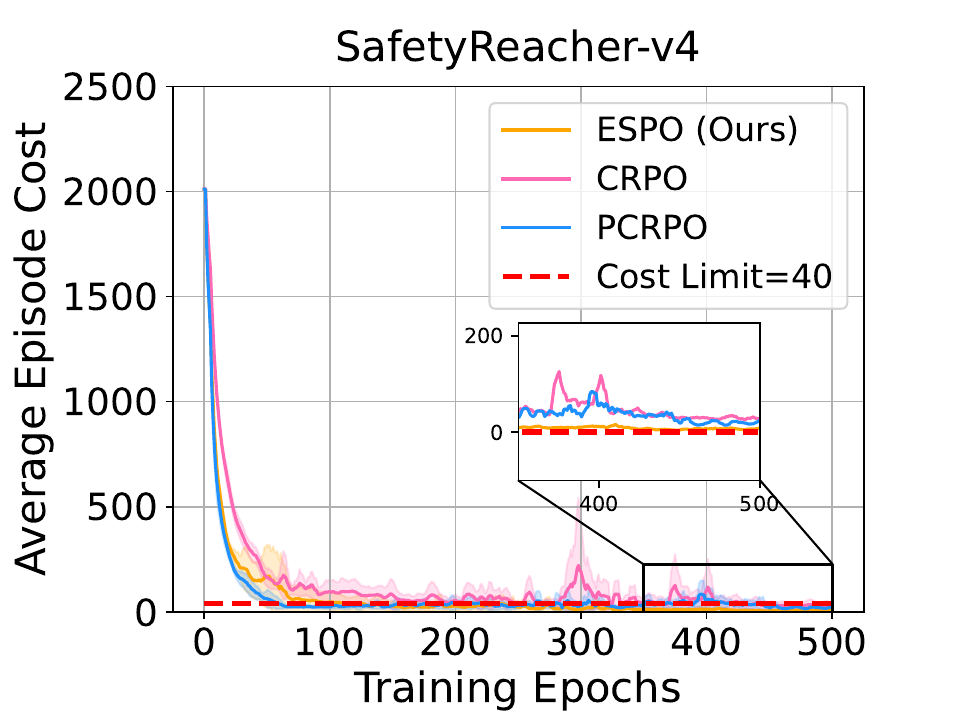}
  }
 \subcaptionbox{}{
  \includegraphics[width=0.31\linewidth]{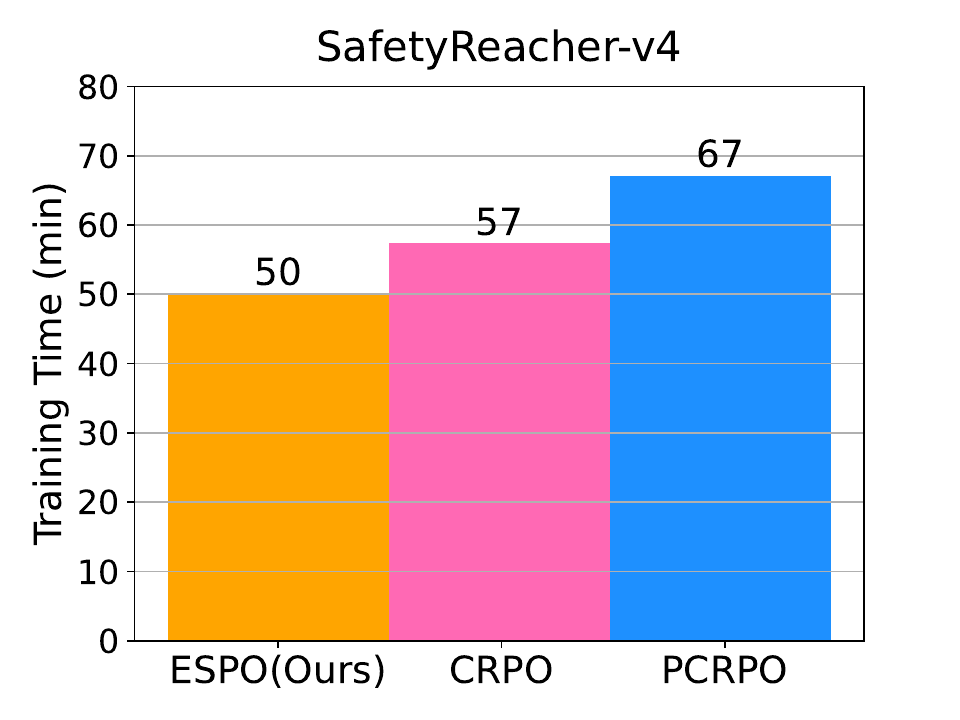}
  }   
     \subcaptionbox*{(d)}{
 \includegraphics[width=0.31\linewidth]{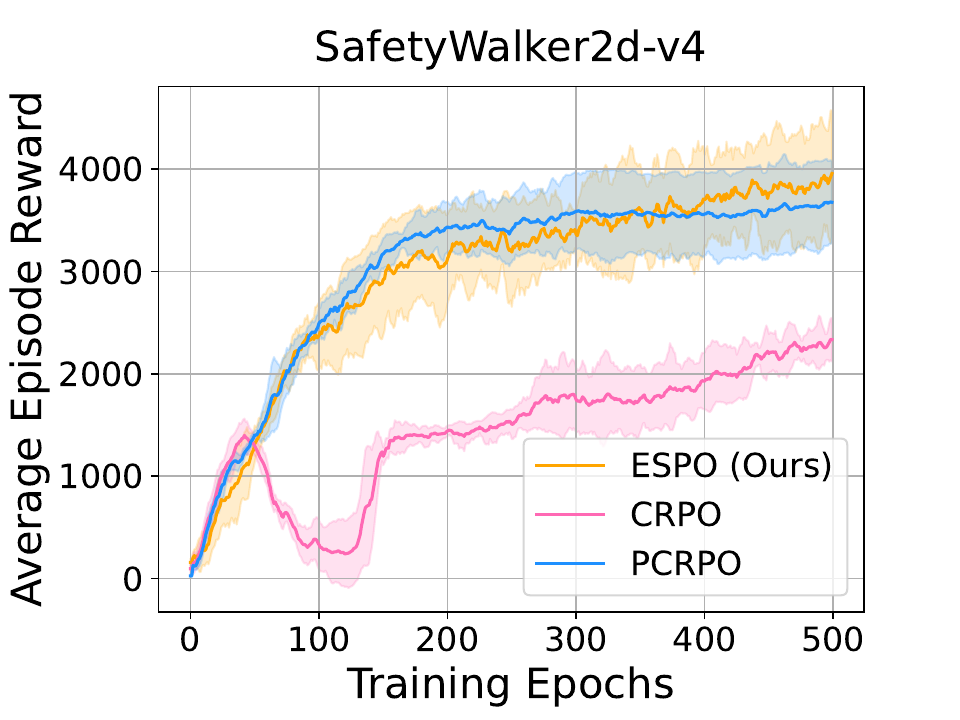} 
 }
      \subcaptionbox*{(e)}{
 \includegraphics[width=0.31\linewidth]{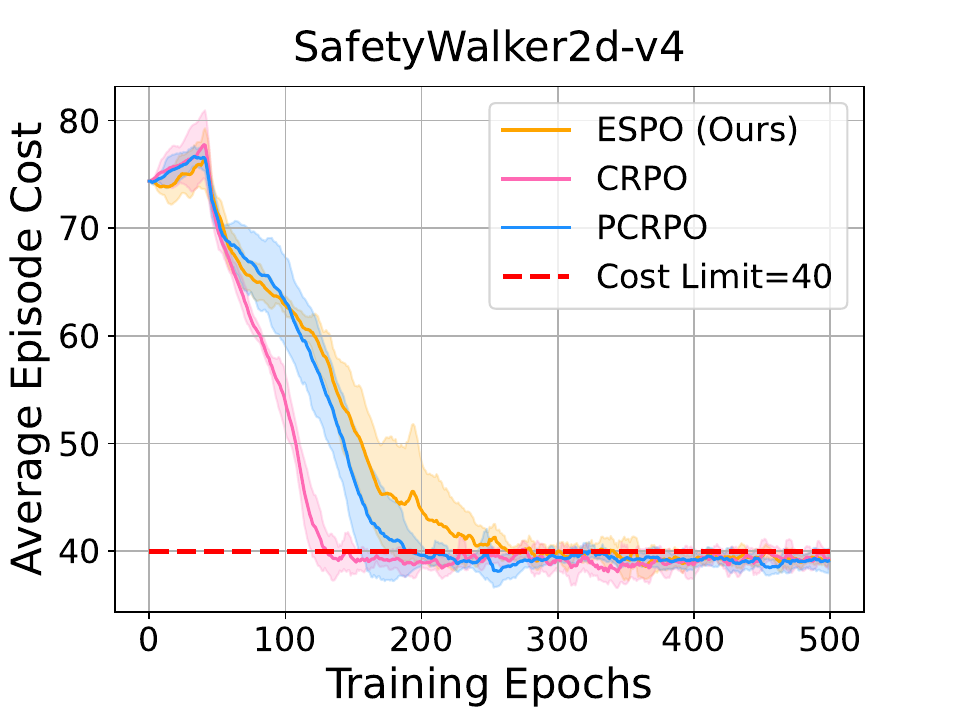}
 }
 \subcaptionbox*{(f)}{
 \includegraphics[width=0.31\linewidth]{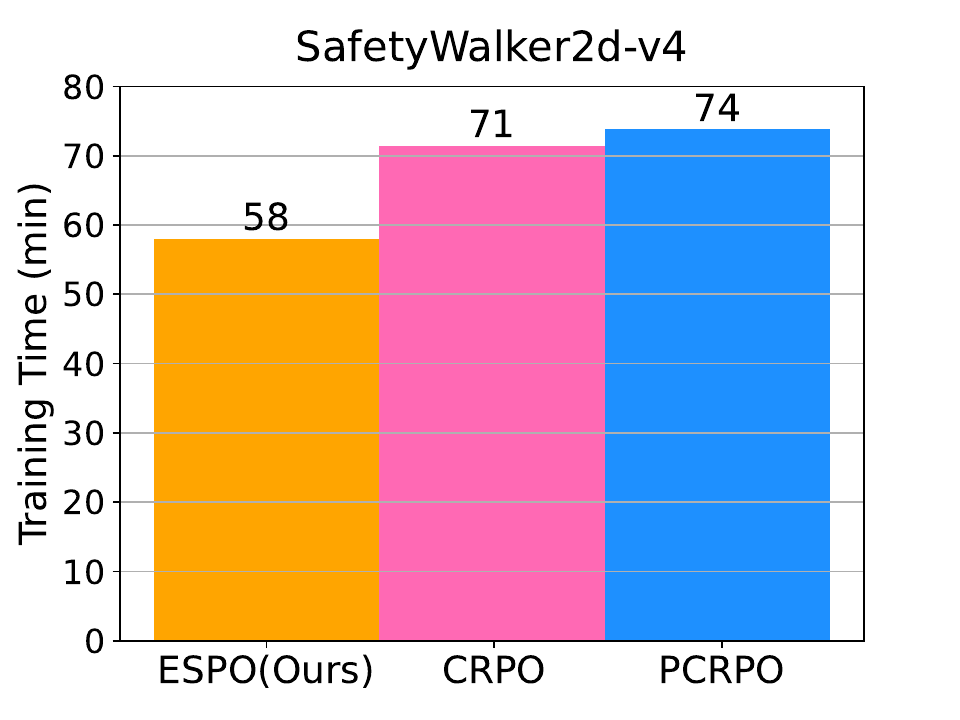} 		}
  \caption{Compare our algorithm (ESPO) with PCRPO~\cite{gu2023pcrpo} and CRPO~\cite{xu2021crpo} on the \textit{Safety-MuJoCo} benchmark.  Our algorithm consistently and remarkably outperforms the SOTA baseline across multiple performance metrics, including reward maximization, safety assurance, and learning efficiency.
  }
 		\label{fig:epcrpo-crpo-Walker-reacher-v4-16-p-training-seperate} 
\end{figure}

\begin{table}[htb!]
\centering
\resizebox{0.60\columnwidth}{!}{
\begin{tabular}{c|c|c|c}
\hline
\diagbox[dir=NW,width=7em,height=2em]{\hspace{0pt}Task}{Algorithm\hspace{-15pt}} & ESPO (Ours) & CRPO & PCRPO\\
\hline
\textit{SafetyReacher-v4} & \textbf{5.7 M} & 8 M &8 M  \\
\textit{SafetyWalker-v4} & \textbf{6.2 M} & 8 M &8 M \\
\textit{SafetyHumanoidStandup-v4} & \textbf{5.1 M} & 8 M &8 M \\
\hline
\end{tabular}
}
\caption{{Comparison of sampling steps with primal-based methods (The lower, the better).  M denotes one million. }
}
\label{table:sample-step-analysis-safety-mujoco}
\end{table}

In the experiments conducted on the \textit{SafetyReacher-v4} task, as depicted in Figures~\ref{fig:epcrpo-crpo-Walker-reacher-v4-16-p-training-seperate}(a)-(c), our method demonstrates superior performance compared to SOTA primal baselines, CRPO and PCRPO. For instance, our method achieves better reward performance than CRPO and PCRPO. Another notable aspect of ESPO's performance is its training efficiency, which is largely attributed to sample manipulation. Specifically, as depicted in Table \ref{table:sample-step-analysis-safety-mujoco}, while CRPO and PCRPO utilize 8 million samples for the \textit{SafetyReacher-v4} task, our method requires only 5.7 million samples for the same task. Crucially, our method improves reward and efficiency performance without sacrificing safety. However, CRPO and PCRPO are struggling to ensure safety during policy learning. Ensuring safety is a pivotal aspect of RL in safety-critical environments. The experiment results indicate that our method's ability to balance safety with other performance metrics is a significant improvement. As illustrated in Figures~\ref{fig:epcrpo-crpo-Walker-reacher-v4-16-p-training-seperate}(d)-(f), our comparison experiments on the challenging \textit{SafetyWalker-v4} task, yielding findings consistent with those observed in \textit{SafetyReacher-v4} tasks. Due to space limits, additional experiments on \textit{SafetyHumanoidStandup-v4} are postponed to Appendix~\ref{appendix:experiments-espo}.

\subsection{{Experiments of Comparison with Primal-Dual-Based Methods}}

The \textit{Omnisafe} Benchmark is a popular platform for evaluating the performance of safe RL algorithms. To further examine the effectiveness of our method, we have implemented our algorithm within the \textit{Omnisafe} framework and conducted an extensive series of experiments compared with SOTA primal-dual-based baselines, e.g., PPOLag~\cite{ji2023omnisafe}, CUP~\cite{yang2022constrained} and  PCPO~\cite{yang2019projection}, focusing mainly on challenging tasks such as \textit{SafetyHopperVelocity-v1} and \textit{SafetyAntVelocity-v1}. \vspace{-5pt}

\begin{figure}[htb!]
    \centering  
    \subcaptionbox*{(a)}{
  \includegraphics[width=0.31\linewidth]{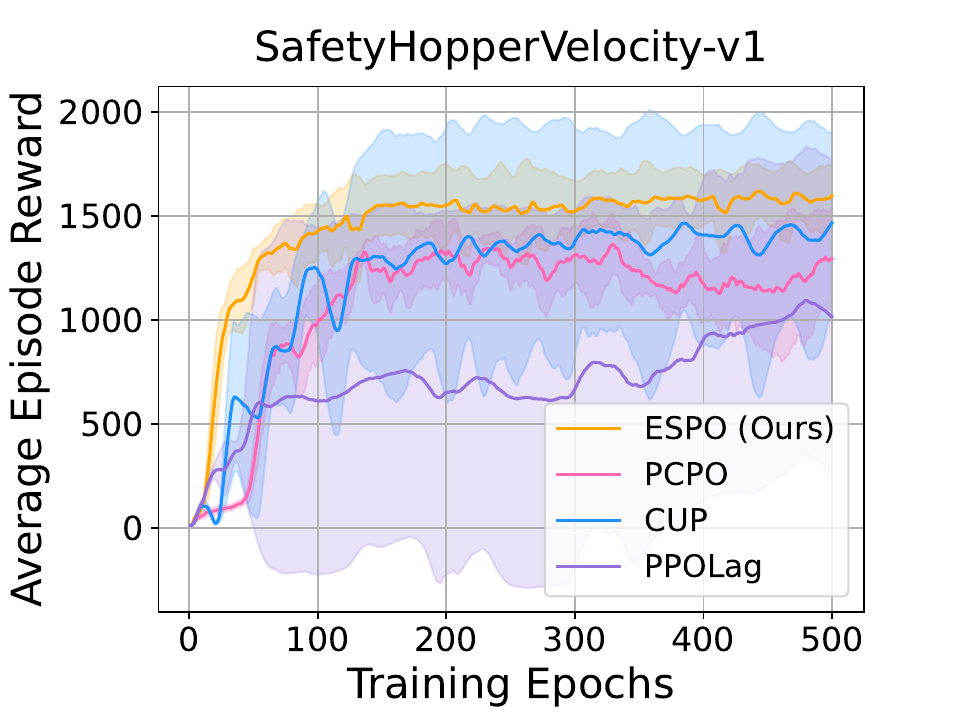}
  }
    \subcaptionbox*{(b)}{
  \includegraphics[width=0.31\linewidth]{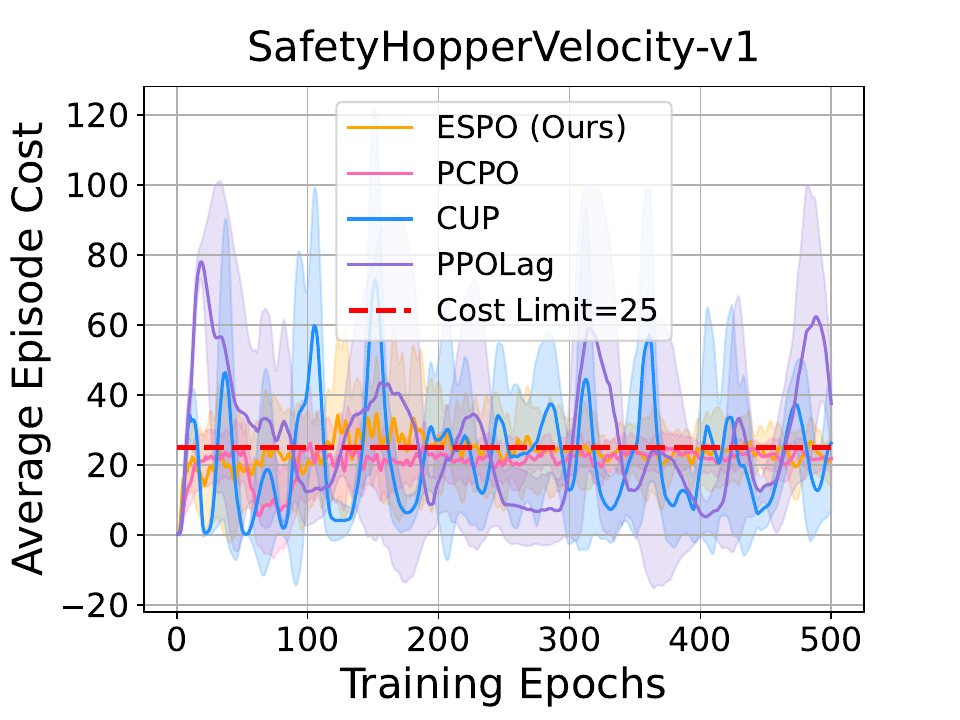}
  }
    \subcaptionbox*{(c)}{
 	\includegraphics[width=0.31\linewidth]{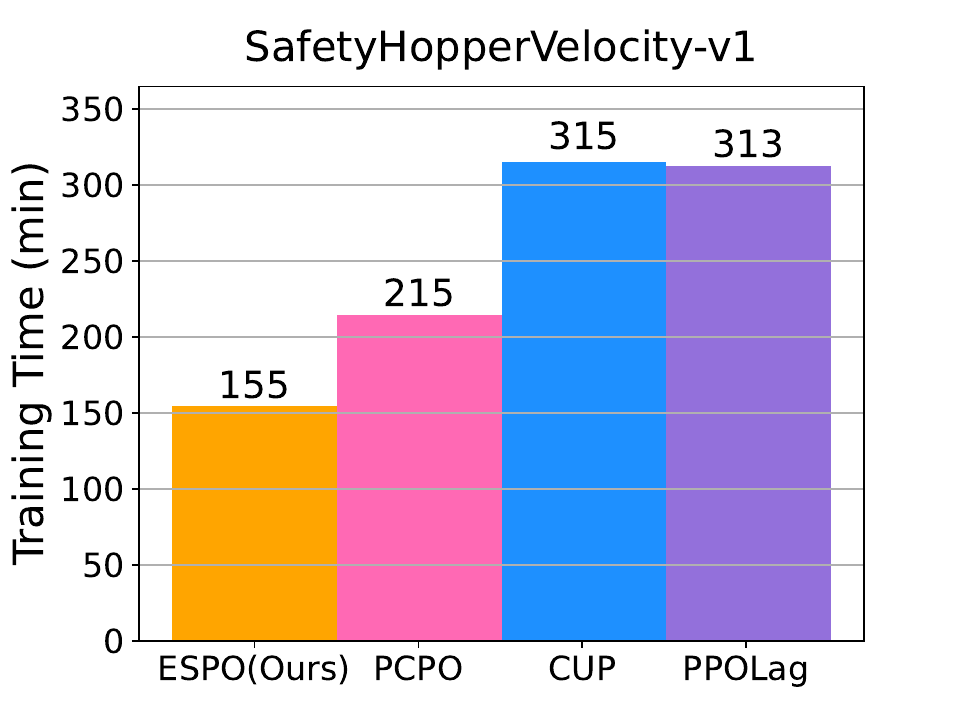}
  }  
   \subcaptionbox*{(d)}{
  \includegraphics[width=0.31\linewidth]{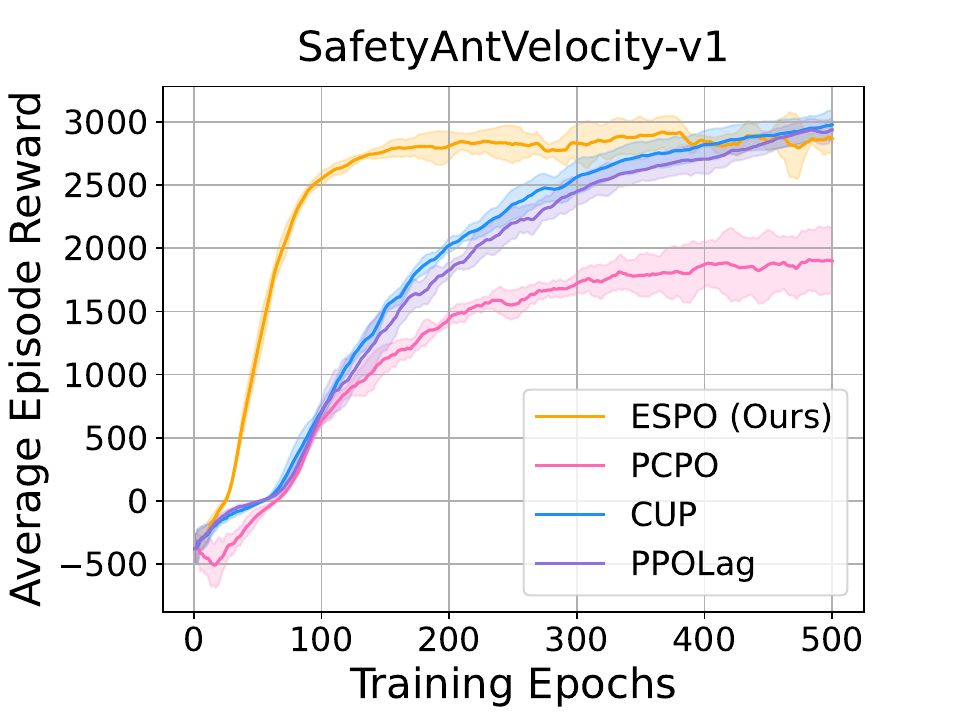}
  }
  \subcaptionbox*{(e)}{
  \includegraphics[width=0.31\linewidth]{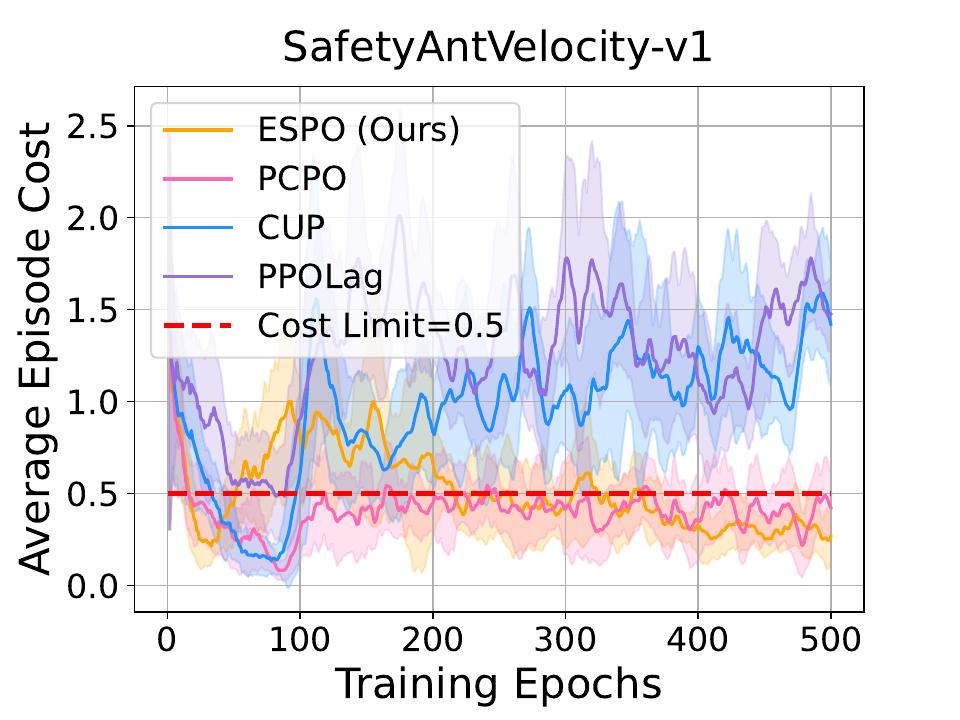}
  }
  \subcaptionbox*{(f)}{
 	\includegraphics[width=0.31\linewidth]{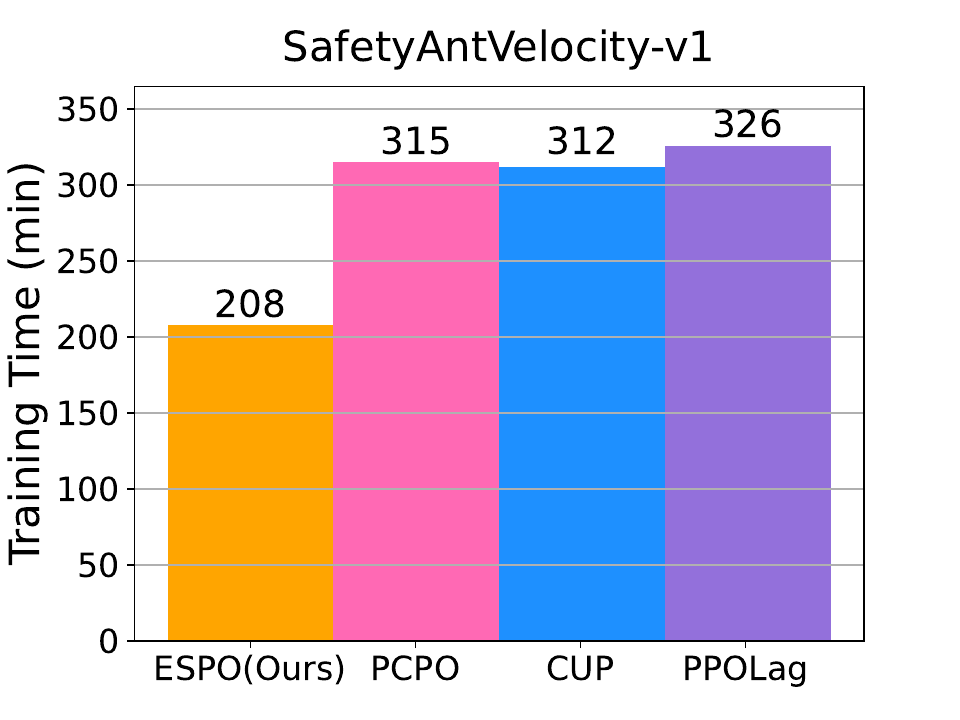}
  }  
  \caption{ Compare our algorithm (ESPO) with PCPO~\cite{yang2019projection}, CUP~\cite{yang2022constrained} and PPOLag~\cite{ji2023omnisafe} on the \textit{Omnisafe} benchmark. Our algorithm performs significantly better than the SOTA baselines regarding reward, safety, and efficiency performance.
  }
 		\label{fig:epcrpo-pcpo-cup-SafetyAntVelocity-v1-1-p-limit0dot5} 
\end{figure}
\begin{table}[htb!]
\centering
\resizebox{0.6\columnwidth}{!}{
\begin{tabular}{c|c|c|c|c}
\hline
\diagbox[dir=NW,width=7em,height=2em]{\hspace{0pt}Task}{Algorithm\hspace{-15pt}} & ESPO (Ours) & PCPO & CUP & PPOLag\\
\hline
\textit{SafetyHopperVelocity-v1} & \textbf{7.3 M} & 10 M &10 M &10 M  \\
\textit{SafetyAntVelocity-v1} & \textbf{7.6 M} & 10 M &10 M &10 M \\
\hline
\end{tabular}
}
\caption{{Comparison of sampling steps with primal-dual based methods (The lower, the better). M denotes one million samples. }
}
\label{table:sample-step-analysis-omnisafe}
\end{table}

The efficacy of our algorithm, ESPO, is demonstrated in Figures~\ref{fig:epcrpo-pcpo-cup-SafetyAntVelocity-v1-1-p-limit0dot5}(a)-(c), where it is benchmarked against SOTA baselines on the \textit{SafetyHopperVelocity-v1} tasks. Firstly, ESPO is remarkably able to achieve better reward performance than the SOTA primal-dual-based baselines. Secondly, a critical aspect of our algorithm is its capability to ensure safety. It is particularly significant considering that some of the compared baselines, such as CUP~\cite{yang2022constrained} and PPOLag~\cite{ji2023omnisafe}, struggle to maintain safety within the same task parameters. Thirdly, an outstanding feature of ESPO is its efficiency, as evidenced by approximately half the training time required compared to the SOTA baselines like CUP and PPOLag. This efficiency in training time demonstrates ESPO's practicality for use in various applications, especially where computational resources and time are constraints. Moreover, while PCPO~\cite{yang2019projection} manages to ensure safety, its reward performance is inferior to ESPO's. PCPO also requires more training time than ESPO, underscoring our algorithm's reward, safety performance, and training efficiency advantages. Particularly, as illustrated in Table \ref{table:sample-step-analysis-omnisafe}, across the entire training period, all the benchmark baselines, including PCPO, CUP, and PPOLag, utilized 10 million samples for tasks on \textit{SafetyHopperVelocity-v1}. In contrast, our method required only 7.3 million samples for the \textit{SafetyHopperVelocity-v1} task. The trends observed in the performance of our algorithm on the \textit{SafetyHopperVelocity-v1} task are similarly reflected in the results presented in Figures~\ref{fig:epcrpo-pcpo-cup-SafetyAntVelocity-v1-1-p-limit0dot5}(d)-(f), about the \textit{SafetyAntVelocity-v1} task. These findings further prove the effectiveness of ESPO in various tasks. Note that the reduction in samples may not equate to a corresponding reduction in training time, as this can vary depending on the characteristics of the benchmarks and the algorithms applied to different tasks. Factors such as the action space of the task and the settings of parallel processing supported by the benchmark can influence the overall training time.

These results on \textit{Omnisafe} tasks further highlight the strengths of ESPO in improving reward performance with safety assurance while maintaining greater efficiency in training.  The ability of ESPO validates its potential as an effective solution for further exploration and application in real-world environments.

\subsection{Ablation Experiments}
\label{main-sec:ablabtion-experiments}
We conducted ablation studies focusing on various cost limits, sample sizes, and update styles to further assess our method's effectiveness. These studies are crucial for gaining deeper insights into our method, highlighting its strengths, and identifying potential areas for improvement. Through this evaluation, we aim to demonstrate the adaptability of our method, confirming its applicability and efficacy across a broad spectrum of safe RL scenarios. Details of the ablation studies are provided in Appendix \ref{appendix-ablation:experiments}.

\section{Conclusion}
\label{sec:conclusion}

In the study, we improved the efficiency of safe RL through a three-mode optimization scheme employing sample manipulation. We provide an in-depth theoretical analysis of convergence, stability, and sample complexity. These theoretical insights inform a practical algorithm for safety-critical control. Extensive experiments on two major benchmarks, \textit{Safety-MuJoCo} and \textit{Omnisafe}, indicate that our method not only surpasses the SOTA baselines in terms of efficiency but also achieves higher reward performance while maintaining safety. Moving forward, we plan to assess our method's capabilities in real world control applications to further expand its influential reach into safety-critical domains.

\bibliography{main}
\bibliographystyle{plain}

\clearpage
\appendix
\begin{center}
\textbf{\Large Appendix}
\end{center}

\section{Proof of the theoretical analysis}
\label{appendix:theoretical-proof}

Inspired by \cite{xu2021crpo}, the theoretical results in this section are established by tailoring to our algorithm ESPO to ensure the key recursion relation still hold for the proposed complex update rules --- different update rules in three different modes.

\subsection{Preliminaries}
To proceed, we first introduce some notations and invoke several key facts and results that have been derived by prior arts.

\paragraph{Notation.}
We recall and introduce some useful notation throughout this section.
\begin{itemize}
    \item $\bar{Q}^r_t, \bar{Q}^c_t$: this two function represent the policy evaluation results from Algoriathm~\ref{alg:ESPO-framework-gradually}, namely, the estimates of true Q-functions $Q^{w_t}_r, Q^{w_t}_c$.

    \item $\eta$: the learning rate of the NPG update rule in Algoriathm~\ref{alg:ESPO-framework-gradually}.

    \item $\nsoft^{\mathsf{no}},  \nsoft^{\mathsf{conf}}$: we denote the set of iterations when Algorithm~\ref{alg:ESPO-framework-gradually} executes \eqref{eq:projection-gradient-two} (resp.~\eqref{eq:projection-gradient-one})  as $\nsoft^{\mathsf{no}}$ (resp.~$\nsoft^{\mathsf{conf}}$).
    \item $(x^r_t, x^c_t)$: when the iteration $t \in \nsoft^{\mathsf{no}}$ (no conflict between the gradients of reward and cost objectives), $x^r_t$ (resp.~$x^c_t$) represents the weight of the gradient w.r.t. the reward objective (resp.~the cost function). So it is easily verified that $ 0\leq x^r_t, x^c_t \leq 1$ and  $x^r_t + x^c_t = 1$.
    \item $(y^{r}_t, y^{c}_t)$: when the iteration $t \in \nsoft^{\mathsf{conf}}$ (the gradients of reward and cost objectives are conflict with each other), $y^r_t$ (resp.~$y^c_t$) represents the weight of the gradient w.r.t. the reward objective (resp.~the cost function). So it is easily verified that $  y^r_t, y^c_t \geq 0$.

    \item $v_{\max}$: without loss of generality, we assume $r(s,a) \in [0, v_{\max}]$ and $c_i(s,a) \in [0, v_{\max}]$ for all $1\leq i \leq n$.
    \item $h^+, h^-$: for simplicity, we let $h_t^+ = h^+, h_t^- = h^-$ for all $1\leq t\leq T$.
\end{itemize}

\begin{lemma}[Performance difference lemma \cite{kakade2002approximately} ]\label{lemma_s3}
	For any policies $\pi$, $\pi^\prime$ and initial distribution $\rho$, one has
	\begin{align}
	\forall i \in \{c,r\}: \quad	V_i^{\pi}(\rho)-V_i^{\pi^\prime}(\rho)=\frac{1}{1-\gamma}\mathbb{E}_{s\sim d_\rho} \left[\mathbb{E}_{a\sim\pi(\cdot|s)}[A^{\pi^\prime}_i(s,a)] \right],
	\end{align}
    where $V_i^{\pi}(\rho)$ and $d_\rho$ denote the accumulated reward (cost) function and state-action visitation distribution under policy $\pi$ when the initial state distribution is $\rho$. Here, $A^{\pi^\prime}_i(s,a) = Q^{\pi^\prime}_i(s,a) - V^{\pi^\prime}_i(s)$ is the advantage function of policy $\pi$ over state-action pair $(s,a)$.
\end{lemma}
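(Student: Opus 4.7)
The plan is to use the classical telescoping argument of Kakade and Langford. Fix $i \in \{c, r\}$ and begin from the definitions by writing
\begin{equation*}
V_i^\pi(\rho) - V_i^{\pi'}(\rho) = \mathbb{E}_{\tau \sim \pi}\Bigl[\sum_{t=0}^\infty \gamma^t i(s_t,a_t) - V_i^{\pi'}(s_0)\Bigr],
\end{equation*}
which is valid because $V_i^{\pi'}(s_0)$ depends only on the (random) initial state and can therefore be absorbed into the trajectory expectation under $\pi$.

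Next, I would exploit the telescoping identity $-V_i^{\pi'}(s_0) = \sum_{t=0}^\infty \bigl(\gamma^{t+1} V_i^{\pi'}(s_{t+1}) - \gamma^t V_i^{\pi'}(s_t)\bigr)$, which converges trajectorywise since $\gamma < 1$ and the value functions are uniformly bounded by $v_{\max}/(1-\gamma)$. Substituting this identity inside the expectation and regrouping term by term reduces the summand to $\gamma^t\bigl(i(s_t,a_t) + \gamma V_i^{\pi'}(s_{t+1}) - V_i^{\pi'}(s_t)\bigr)$.

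The key reduction is then the tower property: conditioning on $(s_t, a_t)$, the one-step Bellman relation gives $\mathbb{E}[i(s_t,a_t) + \gamma V_i^{\pi'}(s_{t+1}) \mid s_t, a_t] = Q_i^{\pi'}(s_t, a_t)$, so the bracketed term collapses to the advantage $A_i^{\pi'}(s_t, a_t) = Q_i^{\pi'}(s_t, a_t) - V_i^{\pi'}(s_t)$. Interchanging the infinite sum with the expectation (dominated convergence, using bounded rewards and geometric decay) and identifying $(1-\gamma)\sum_{t=0}^\infty \gamma^t \Pr_\pi[s_t = s, a_t = a] = d_\rho(s)\,\pi(a \mid s)$ then produces the $1/(1-\gamma)$ factor and the inner expectation $\mathbb{E}_{s \sim d_\rho}\mathbb{E}_{a \sim \pi(\cdot|s)}[A_i^{\pi'}(s,a)]$.

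The main obstacle is purely bookkeeping: one has to verify that the telescoping series genuinely equals $-V_i^{\pi'}(s_0)$ almost surely, justify the sum-expectation swap, and keep straight that the visitation measure $d_\rho$ is induced by $\pi$ while the advantage is taken with respect to $\pi'$. None of these points is deep, but conflating the two policies or mishandling the tail of the geometric series would immediately break the identity; since the same formula is used for both $i=r$ and $i=c$, the derivation is identical in both cases.
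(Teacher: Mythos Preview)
Your argument is correct and is precisely the classical Kakade--Langford telescoping proof. The paper does not actually prove this lemma at all: it is stated as a preliminary fact with a citation to \cite{kakade2002approximately}, so there is nothing to compare against beyond noting that you have supplied the standard derivation the paper takes for granted.
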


\begin{lemma}\label{lemma_s1}
	Considering the approximated NPG update rule and Algorithm~\ref{alg:ESPO-framework-gradually} in the tabular setting, the NPG update in four possible diverse modes take the form:
 \begin{align}
 \begin{cases}
 	    w_{t+1}=w_t + \frac{\eta}{1-\gamma}\bar{Q}^r_t \quad \text{and}\quad \pi_{w_{t+1}}(a|s)=\pi_{w_{t}}(a|s) \frac{ \exp\left(\frac{\eta \bar{Q}^r_t(s,a)}{(1-\gamma)} \right)}{Z^r_t(s)}, & \text{if } t\in \nr \\
      w_{t+1}=w_t + \frac{\eta\left(x^r_t \bar{Q}^r_t + x^c_t \bar{Q}^c_t \right)}{1-\gamma}~\text{and}~\pi_{w_{t+1}}(a|s)=\pi_{w_{t}}(a|s) \frac{\exp\Big(\frac{\eta \left(x^r_t \bar{Q}^r_t(s,a) + x^c_t \bar{Q}^c_t(s,a) \right)}{(1-\gamma)} \Big)}{Z^{r,c,1}_t(s)} , & \text{if } t\in \nsoft^{\mathsf{no}} \\
       w_{t+1}=w_t + \frac{\eta\left(y^r_t \bar{Q}^r_t + y^c_t \bar{Q}^c_t \right)}{1-\gamma}~\text{and}~ \pi_{w_{t+1}}(a|s)=\pi_{w_{t}}(a|s) \frac{\exp\Big((\frac{\eta \left(y^r_t \bar{Q}^r_t(s,a) + y^c_t \bar{Q}^c_t(s,a) \right)}{(1-\gamma)} \Big)}{Z^{r,c,2}_t(s)} , & \text{if } t\in \nsoft^{\mathsf{conf}} \\
         w_{t+1}=w_t + \frac{\eta}{1-\gamma}\bar{Q}^c_t\quad\text{and}\quad \pi_{w_{t+1}}(a|s)=\pi_{w_{t}}(a|s) \frac{\exp \Big(\frac{\eta \bar{Q}^c_t(s,a)}{(1-\gamma)} \Big) }{Z^c_t(s)}, & \text{if } t\in \nc \label{eq:update-rule-npg}
 	\end{cases}
 \end{align}
    where
\begin{align}
Z^r_t(s)&=\sum_{a\in\cA}\pi_{w_t}(a|s)\exp\left( \frac{ \eta\bar{Q}^r_t(s,a) }{1-\gamma} \right), \notag \\
Z^{r,c,1}_t(s)&=\sum_{a\in\cA}\pi_{w_t}(a|s)\exp\left(\frac{\eta \left(x^r_t \bar{Q}^r_t(s,a) + x^c_t \bar{Q}^c_t(s,a) \right)}{(1-\gamma)} \right) \notag \\
 Z^c_t(s)&=\sum_{a\in\cA}\pi_{w_t}(a|s)\exp\left( \frac{ \eta\bar{Q}^c_t(s,a) }{1-\gamma}\right),  \notag \\
 Z^{r,c,2}_t(s)&=\sum_{a\in\cA}\pi_{w_t}(a|s)\exp\left(\frac{\eta \left(y^r_t \bar{Q}^r_t(s,a) + y^c_t \bar{Q}^c_t(s,a) \right)}{(1-\gamma)} \right). \label{eq:defn-4-Z}
\end{align}

\end{lemma}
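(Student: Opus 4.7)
The plan is to derive all four cases uniformly from the standard NPG/softmax identity together with a case-by-case identification of the effective update direction as a linear combination of $\bar Q^r_t$ and $\bar Q^c_t$.

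First I would invoke the well-known fact that, for the tabular softmax parameterization in \eqref{eq:2}, an NPG step on any objective $V^{\pi_w}_i(\rho)$ reduces, up to a state-dependent constant that is absorbed by the softmax normalization, to the coordinate-wise update $w_{t+1}(s,a)=w_t(s,a)+\frac{\eta}{1-\gamma}\bar{Q}^i_t(s,a)$. This identity is standard in the NPG literature and is exactly the starting point used in the analysis of CRPO \cite{xu2021crpo}. Because the inverse Fisher-information map is linear, whenever ESPO's policy-space gradient step in a given mode amounts to a linear combination $\alpha\g_r+\beta\g_c$, its approximated NPG counterpart in parameter space is $w_{t+1}=w_t+\frac{\eta}{1-\gamma}(\alpha\bar{Q}^r_t+\beta\bar{Q}^c_t)$.

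Next, for each of the four modes I would identify the effective coefficients $(\alpha,\beta)$. The cases $t\in\nr$ and $t\in\nc$ are immediate from the single-gradient updates \eqref{eq:update-policy-parameters-for-reward} and \eqref{eq:update-policy-parameters-for-safety}. For $t\in\nsoft^{\mathsf{no}}$, rule \eqref{eq:projection-gradient-two} already has the form $(\alpha,\beta)=(x^r_t,x^c_t)$ and translates directly. For $t\in\nsoft^{\mathsf{conf}}$, expanding the projected-gradient rule \eqref{eq:projection-gradient-one} and collecting coefficients yields
\begin{align*}
y^r_t = x^r_t - x^c_t\,\frac{\g_r\cdot\g_c}{\|\g_r\|^2},\qquad y^c_t = x^c_t - x^r_t\,\frac{\g_r\cdot\g_c}{\|\g_c\|^2}.
\end{align*}
Since the defining condition of this sub-mode is $\theta_{r,c}>90^{\circ}$, i.e.\ $\g_r\cdot\g_c<0$, together with $x^r_t,x^c_t\geq 0$, both $y^r_t$ and $y^c_t$ are nonnegative, matching the setup used in the lemma.

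Finally, the four claimed policy expressions follow from a one-line softmax calculation. Writing any parameter update as $w_{t+1}(s,a)=w_t(s,a)+c_t(s,a)$ and plugging into \eqref{eq:2}, then dividing numerator and denominator by $\sum_{a''}\exp(w_t(s,a''))$, produces
\begin{align*}
\pi_{w_{t+1}}(a|s)=\pi_{w_t}(a|s)\cdot\frac{\exp(c_t(s,a))}{\sum_{a'\in\cA}\pi_{w_t}(a'|s)\exp(c_t(s,a'))}.
\end{align*}
Specializing $c_t(s,a)$ to $\frac{\eta}{1-\gamma}\bar{Q}^r_t(s,a)$, $\frac{\eta}{1-\gamma}(x^r_t\bar{Q}^r_t+x^c_t\bar{Q}^c_t)(s,a)$, $\frac{\eta}{1-\gamma}(y^r_t\bar{Q}^r_t+y^c_t\bar{Q}^c_t)(s,a)$, and $\frac{\eta}{1-\gamma}\bar{Q}^c_t(s,a)$ produces exactly the normalizers $Z^r_t$, $Z^{r,c,1}_t$, $Z^{r,c,2}_t$, $Z^c_t$ in \eqref{eq:defn-4-Z}. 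The main, and essentially only, nonroutine step is the sign analysis in the conflict sub-mode that certifies $y^r_t,y^c_t\geq 0$; once that is in hand, the rest is the standard NPG/softmax computation applied uniformly across the four modes.
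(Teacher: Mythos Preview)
Your proposal is correct and follows essentially the same approach as the paper, which simply cites the standard NPG/softmax identity from \cite{agarwal2019optimality} (Lemma~5.6) for the first case and states that the remaining modes follow by the same pipeline. You have fleshed out those details explicitly---the linearity observation, the coefficient identification $(y^r_t,y^c_t)$ from \eqref{eq:projection-gradient-one}, and the one-line softmax computation---and added the sign check $y^r_t,y^c_t\geq 0$, which is not strictly needed for the lemma itself but matches the paper's standing assumption in the notation section.
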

\begin{proof}
    The first line of \eqref{eq:update-rule-npg} has been verified by [Lemma 5.6. \cite{agarwal2019optimality}]. Following the same proof pipeline for the update rules of Algorithm~\ref{alg:ESPO-framework-gradually} in different modes completes the proof.
\end{proof}

\subsection{Key lemmas}
The proof of Theorem~\ref{thm:convergence} heavily count on several key lemmas in the following.

First, we introduce the performance improvement bound for the update rules of Algorithm~\ref{alg:ESPO-framework-gradually} in different modes, which is a fundamental result for its convergence; the proof is postponed to Appendix~\ref{proof:lemma_s2}.

\begin{lemma}[Performance improvement bound for approximated NPG]\label{lemma_s2}
	Consider any initial state distribution $\rho$ and the iterate $\pi_{w_t}$ generated by Algorithm~\ref{alg:ESPO-framework-gradually} at time step $t$. One has when iteration $t\in\nr$:
	\begin{align}
		&V_r^{\pi_{w_{t+1}}}(\rho)-V_r^{\pi_{w_{t}}}(\rho)\\
  &\geq\frac{1-\gamma}{\eta}\mathbb{E}_{s\sim \rho}\left(\log Z^r_t(s) - \frac{\eta}{1-\gamma}V^{\pi_{w_t}}_r(s) +\frac{\eta}{1-\gamma}\sum_{a\in\cA}\pi_{w_t}(a|s)\left|\bar{Q}^r_t(s,a)-Q_r^{\pi_{w_t}}(s,a)\right| \right) \nonumber\\
		&\quad - \frac{1}{1-\gamma} \mathbb{E}_{s\sim d_\rho}\sum_{a\in\cA}\pi_{w_t}(a|s)\left|\bar{Q}^r_t(s,a)-Q_r^{\pi_{w_t}}(s,a)\right| \nonumber\\
		&\quad - \frac{1}{1-\gamma}\mathbb{E}_{s\sim d_\rho}\sum_{a\in\cA}\pi_{w_{t+1}}(a|s) \left|\bar{Q}^r_t(s,a)-Q_r^{\pi_{w_t}}(s,a)\right| := \mathsf{diff}^r_t. \label{lemma_s2-1}
	\end{align}
 Similarly, we have
 \begin{align}
   \forall t\in \nc:\quad   V_c^{\pi_{w_{t+1}}}(\rho)-V_c^{\pi_{w_{t}}}(\rho) \geq \mathsf{diff}^c_t, \label{lemma_s2-2}
 \end{align}
 and then
 
 \begin{align}
    \begin{cases}
        x^r_t \left(V_r^{\pi_{w_{t+1}}}(\rho)-V_r^{\pi_{w_{t}}}(\rho) \right) + x^c_t   \left( V_c^{\pi_{w_{t+1}}}(\rho) - V_c^{\pi_{w_{t}}}(\rho) \right) \geq  x^r_t \mathsf{diff}^r_t + x^c_t \mathsf{diff}^c_t & \text{ if } t\in\nsoft^{\mathsf{no}} \\
        y^r_t \left(V_r^{\pi_{w_{t+1}}}(\rho)-V_r^{\pi_{w_{t}}}(\rho) \right) + y^c_t   \left(V_c^{\pi_{w_{t+1}}}(\rho) - V_c^{\pi_{w_{t}}}(\rho) \right) \geq  y^r_t \mathsf{diff}^r_t + y^c_t \mathsf{diff}^c_t & \text{ if }   t\in\nsoft^{\mathsf{conf}}. \label{lemma_s2-3}
    \end{cases} 
 \end{align}

\end{lemma}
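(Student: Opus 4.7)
The plan is to adapt the standard NPG performance improvement argument (as used in the CRPO analysis \cite{xu2021crpo}) to each of the four update modes in Lemma~\ref{lemma_s1}. I would first establish \eqref{lemma_s2-1} for $t\in\nr$; then \eqref{lemma_s2-2} follows by the identical calculation with $(r,\bar{Q}^r_t)$ replaced by $(c,\bar{Q}^c_t)$, and \eqref{lemma_s2-3} reduces to the scalar case by viewing the soft-constraint updates as a single NPG step driven by the combined vector $x^r_t\bar{Q}^r_t+x^c_t\bar{Q}^c_t$ (resp.\ the $y$-weighted analogue in the conflict case).

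First, I would apply the performance difference lemma (Lemma~\ref{lemma_s3}) to obtain $V_r^{\pi_{w_{t+1}}}(\rho)-V_r^{\pi_{w_t}}(\rho)=\frac{1}{1-\gamma}\mathbb{E}_{s\sim d_\rho^{\pi_{w_{t+1}}}}\mathbb{E}_{a\sim\pi_{w_{t+1}}}[A_r^{\pi_{w_t}}(s,a)]$, split the advantage as $A_r^{\pi_{w_t}}=(Q_r^{\pi_{w_t}}-\bar{Q}^r_t)+\bar{Q}^r_t-V_r^{\pi_{w_t}}$, and invert the closed-form update from Lemma~\ref{lemma_s1} to write $\bar{Q}^r_t(s,a)=\frac{1-\gamma}{\eta}\bigl[\log\tfrac{\pi_{w_{t+1}}(a|s)}{\pi_{w_t}(a|s)}+\log Z^r_t(s)\bigr]$. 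Substituting makes the term $\mathbb{E}_{a\sim\pi_{w_{t+1}}}\log\tfrac{\pi_{w_{t+1}}(\cdot|s)}{\pi_{w_t}(\cdot|s)}=\mathrm{KL}(\pi_{w_{t+1}}(\cdot|s)\,\|\,\pi_{w_t}(\cdot|s))\geq 0$ appear, which can be dropped. Bounding the approximation residuals $Q_r^{\pi_{w_t}}-\bar{Q}^r_t$ in absolute value and re-introducing a $\pi_{w_t}$-average via $\pm\mathbb{E}_{a\sim\pi_{w_t}}[\bar{Q}^r_t]$ produces the two $|\bar{Q}^r_t-Q_r^{\pi_{w_t}}|$ terms that appear in $\mathsf{diff}^r_t$.

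To finish I would change the outer measure from $d_\rho^{\pi_{w_{t+1}}}$ back to $\rho$: since $\bar{Q}^r_t\geq 0$ yields $\log Z^r_t(s)\geq 0$, the standard bound $d_\rho^{\pi}(s)\geq(1-\gamma)\rho(s)$ applied to the nonnegative pieces delivers the $\frac{1-\gamma}{\eta}\mathbb{E}_{s\sim\rho}[\log Z^r_t(s)]$ contribution, with the $-V_r^{\pi_{w_t}}$ term carried along identically. For the soft-constraint modes, the update rule and the normalizers $Z^{r,c,1}_t,Z^{r,c,2}_t$ from Lemma~\ref{lemma_s1} depend on $(x^r_t,x^c_t)$ (or $(y^r_t,y^c_t)$) only as scalar weights that are selected before the update; the scalar argument applied to the surrogate objective $x^r_t r+x^c_t c$ together with linearity of $V^\pi_\cdot$ in the reward and of $\mathsf{diff}^\cdot_t$ in $(\bar{Q}^\cdot_t,V^{\pi_{w_t}}_\cdot)$ then yields \eqref{lemma_s2-3}.

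The main obstacle I anticipate is organizing the calculation so that (i) the KL-divergence piece is identified and discarded as nonnegative in the right place, (ii) the leftover expression in $\log Z^r_t$ and $V_r^{\pi_{w_t}}$ admits the change of measure from $d_\rho^{\pi_{w_{t+1}}}$ to $\rho$ in the favorable direction, and (iii) the approximation-error residuals split cleanly into one $\pi_{w_t}$-weighted and one $\pi_{w_{t+1}}$-weighted piece matching $\mathsf{diff}^r_t$ exactly. For the mixed-mode bound \eqref{lemma_s2-3}, an additional bookkeeping subtlety is that reward and cost approximation errors enter simultaneously, so they must be tracked with the weights $(x^r_t,x^c_t)$ (or $(y^r_t,y^c_t)$) to recover the weighted sum $x^r_t\mathsf{diff}^r_t+x^c_t\mathsf{diff}^c_t$ rather than a mixed error term involving cross products of $\bar{Q}^r_t$ and $\bar{Q}^c_t$.
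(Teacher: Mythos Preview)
Your outline matches the paper's approach: it too defers \eqref{lemma_s2-1}--\eqref{lemma_s2-2} to \cite[Lemma~6]{xu2021crpo} and proves \eqref{lemma_s2-3} by rerunning the scalar argument with the combined direction $x^r_t\bar{Q}^r_t+x^c_t\bar{Q}^c_t$ (resp.\ the $y$-weighted analogue) driving a single NPG step, identifying and discarding the KL term, and then changing measure from $d_\rho^{\pi_{w_{t+1}}}$ to $\rho$.

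There is, however, a genuine gap in your justification of the change of measure. You argue that $\bar{Q}^r_t\geq 0$ yields $\log Z^r_t(s)\geq 0$ and then apply $d_\rho\geq(1-\gamma)\rho$ to ``the nonnegative pieces,'' with ``the $-V_r^{\pi_{w_t}}$ term carried along identically.'' But the measure must change on the \emph{entire} bracket
\[
\log Z^r_t(s)\;-\;\frac{\eta}{1-\gamma}V_r^{\pi_{w_t}}(s)\;+\;\frac{\eta}{1-\gamma}\sum_{a}\pi_{w_t}(a|s)\bigl|\bar{Q}^r_t(s,a)-Q_r^{\pi_{w_t}}(s,a)\bigr|,
\]
and $\log Z^r_t\geq 0$ alone does not make this nonnegative (the middle term can be as large as $\eta v_{\max}/(1-\gamma)^2$). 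The paper instead applies Jensen to the log-sum-exp to obtain $\log Z^r_t(s)\geq\frac{\eta}{1-\gamma}\sum_a\pi_{w_t}(a|s)\bar{Q}^r_t(s,a)$, subtracts $\frac{\eta}{1-\gamma}V_r^{\pi_{w_t}}(s)=\frac{\eta}{1-\gamma}\sum_a\pi_{w_t}(a|s)Q_r^{\pi_{w_t}}(s,a)$, and observes that the residual $\frac{\eta}{1-\gamma}\sum_a\pi_{w_t}(a|s)\bigl(\bar{Q}^r_t-Q_r^{\pi_{w_t}}\bigr)$ is dominated by the third summand. That is the real reason the $\pi_{w_t}$-weighted error term is packaged \emph{inside} the bracket of $\mathsf{diff}^r_t$: it is what makes the bracket pointwise nonnegative so that $d_\rho\geq(1-\gamma)\rho$ can be invoked in the favorable direction. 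Your ``$\pm\mathbb{E}_{a\sim\pi_{w_t}}[\bar{Q}^r_t]$'' manipulation is exactly the algebra that sets this up, but you should recognize its role in the sign argument rather than treat it as mere bookkeeping.

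A minor remark on \eqref{lemma_s2-3}: $\mathsf{diff}^r_t$ is not linear in $\bar{Q}^r_t$ (the $\log Z^r_t$ term is a log-sum-exp), so the ``linearity of $\mathsf{diff}^\cdot_t$'' shortcut does not literally hold. The paper simply reruns the scalar calculation with $Z^{r,c,1}_t$ (resp.\ $Z^{r,c,2}_t$) in place of $Z^r_t$ and arrives at the stated bound; the combined normalizer is not a weighted sum of $\log Z^r_t$ and $\log Z^c_t$.
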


Armed with above lemma, now we can control the performance gap between the current poliy $\pi_{w_t}$ and the optimal policy $\pi^\star$ in the following lemma; the proof is postponed to Appendix~\ref{proof:lemma_s5}.

\begin{lemma}[Suboptimality gap bound for update rules of Algorithm~\ref{alg:ESPO-framework-gradually}]\label{lemma_s5}
	Consider the approximated NPG updates in \eqref{eq:update-rule-npg}.  When iteration $t\in\nr$, denoting the visitation distribution under the optimal policy as $d^\star$, we have
	\begin{align}
		&V_r^{\pi^*}(\rho)-V_r^{\pi_{w_t}}(\rho)\nonumber\\
		&\leq \frac{1}{\eta}\mathbb{E}_{s\sim d^*} (D_{\text{KL}}(\pi^*||\pi_{w_t})-D_{\text{KL}}(\pi^*||\pi_{w_{t+1}})) + \frac{2 \eta |\cS| |\cA| v^2_{\max} }{ (1-\gamma)^3} + \frac{3(1+ \eta v_{\max})}{(1-\gamma)^2} \|Q^r_{\pi_{w_t}}-\bar{Q}_r^{\pi_{w_t}}\|_2 \notag\\
  &:= \mathsf{gap}_t^{r} \label{lemma_s5-1}
	\end{align}
Similarly, we have
 \begin{align}
   \forall t\in \nc:\quad V_c^{\pi^*}(\rho)-V_c^{\pi_{w_t}}(\rho) \leq \mathsf{gap}_t^{c}. \label{lemma_s5-2}
   \end{align}
In addition, for other iterations: if $t\in\nsoft^{\mathsf{no}}$, we have
\begin{align}
&  x^r_t \left(V_r^{\pi^*}(\rho)-V_r^{\pi_{w_t}}(\rho)  \right) + x^c_t   \left(V_c^{\pi^*}(\rho)-V_c^{\pi_{w_t}}(\rho)\right)  \notag \\
& \leq \frac{1}{\eta}\mathbb{E}_{s\sim d^\star}(D_{\text{KL}}(\pi^*||\pi_{w_t})-D_{\text{KL}}(\pi^*||\pi_{w_{t+1}})) + \frac{ 2 \eta v_{\max}^2 |\cS| |\cA| }{(1-\gamma)^3}  \notag \\
       & \quad + \frac{3(1+ \eta v_{\max})}{(1-\gamma)^2} \left[  x_t^r \left\|Q_r^{\pi_{w_t}}(s,a)- \bar{Q}^i_t(s,a)\right\|_2  + x_t^c \left\|Q_c^{\pi_{w_t}}(s,a)- \bar{Q}^c_t(s,a)\right\|_2  \right],\label{lemma_s5-3}
       \end{align}
      and if $t\in\nsoft^{\mathsf{conf}}$, we have
      \begin{align}
&y^r_t \left(V_r^{\pi^*}(\rho)-V_r^{\pi_{w_t}}(\rho)  \right) + y^c_t   \left(V_c^{\pi^*}(\rho)-V_c^{\pi_{w_t}}(\rho)\right) \notag \\
& \leq \frac{1}{\eta}\mathbb{E}_{s\sim d^\star}(D_{\text{KL}}(\pi^*||\pi_{w_t})-D_{\text{KL}}(\pi^*||\pi_{w_{t+1}})) + \frac{ 2 \eta v_{\max}^2 (y_t^r + y_t^c) |\cS| |\cA| }{(1-\gamma)^3}  \notag \\
       & \quad + \frac{3(1+ \eta v_{\max})}{(1-\gamma)^2} \left[  x_t^r \left\|Q_r^{\pi_{w_t}}(s,a)- \bar{Q}^i_t(s,a)\right\|_2  + x_t^c \left\|Q_c^{\pi_{w_t}}(s,a)- \bar{Q}^c_t(s,a)\right\|_2  \right], \label{lemma_s5-4}
      \end{align}

\end{lemma}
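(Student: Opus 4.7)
The plan is to combine the performance difference lemma (Lemma~\ref{lemma_s3}) with the explicit NPG-update identities from Lemma~\ref{lemma_s1}, following the mirror-descent / KL-telescoping template familiar from NPG analysis, but adapted to each of the four update modes ESPO may execute at iteration $t$. For each mode, the common recipe is: (i) rewrite $V_i^{\pi^\star}(\rho) - V_i^{\pi_{w_t}}(\rho)$ via Lemma~\ref{lemma_s3}; (ii) swap $Q_i^{\pi_{w_t}}$ for the estimate $\bar{Q}_t^i$, collecting an $L_2$-type approximation error $\|Q_i^{\pi_{w_t}} - \bar{Q}_t^i\|_2$; (iii) use Lemma~\ref{lemma_s1} to replace $\bar{Q}_t^i$ by $\tfrac{1-\gamma}{\eta}[\log(\pi_{w_{t+1}}/\pi_{w_t}) + \log Z_t]$; (iv) take $\mathbb{E}_{a\sim\pi^\star(\cdot|s)}$ of the log-ratio to produce the KL-telescoping increment $D_{\text{KL}}(\pi^\star\|\pi_{w_t}) - D_{\text{KL}}(\pi^\star\|\pi_{w_{t+1}})$; and (v) bound $\log Z_t$ via a standard second-order log-partition estimate to extract the $\eta v_{\max}^2$ error term.

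For the single-objective modes $t\in\nr$ and $t\in\nc$, the recipe applies verbatim with $Z_t^r(s)$ or $Z_t^c(s)$ from \eqref{eq:defn-4-Z}, and rearrangement of the five ingredients yields \eqref{lemma_s5-1} and \eqref{lemma_s5-2}. For the no-conflict soft mode $t\in\nsoft^{\mathsf{no}}$, since $x_t^r + x_t^c = 1$ I would take the convex combination (with weights $x_t^r, x_t^c$) of the two reward/cost performance-difference identities; the advantage inside the expectation becomes $x_t^r Q_r^{\pi_{w_t}} + x_t^c Q_c^{\pi_{w_t}}$ minus the matching convex combination of $V$-values. Lemma~\ref{lemma_s1} then delivers $x_t^r\bar{Q}_t^r + x_t^c\bar{Q}_t^c = \tfrac{1-\gamma}{\eta}[\log(\pi_{w_{t+1}}/\pi_{w_t}) + \log Z_t^{r,c,1}]$, so steps (iv)--(v) carry through unchanged, and the $L_2$ error splits additively by objective exactly as in \eqref{lemma_s5-3}.

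The conflict mode $t\in\nsoft^{\mathsf{conf}}$ is the main source of subtlety. Here the projection step reshuffles $(x_t^r, x_t^c)$ into $(y_t^r, y_t^c)$, both nonnegative but in general with $y_t^r + y_t^c \neq 1$, so the same combination no longer yields a convex combination of the two gap identities. I would still take the $(y_t^r, y_t^c)$-weighted sum of the performance-difference identities---producing the generalized (non-convex) weighted gap that appears on the left-hand side of \eqref{lemma_s5-4}---and invoke the mixed update identity $y_t^r\bar{Q}_t^r + y_t^c\bar{Q}_t^c = \tfrac{1-\gamma}{\eta}[\log(\pi_{w_{t+1}}/\pi_{w_t}) + \log Z_t^{r,c,2}]$. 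The key point is that $\|y_t^r\bar{Q}_t^r + y_t^c\bar{Q}_t^c\|_\infty \leq (y_t^r + y_t^c)\,v_{\max}/(1-\gamma)$ by the triangle inequality, which is precisely why the log-partition bound in step (v) picks up an extra factor $(y_t^r + y_t^c)$ in the $\eta v_{\max}^2$ term of \eqref{lemma_s5-4}, while the KL-telescoping contribution (coming from the log-ratio, not from the magnitude) retains unit weight.

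The main obstacle I anticipate is the bookkeeping across modes rather than any single analytical barrier: making sure the log-partition bound picks up the $(y_t^r + y_t^c)$ factor only in the conflict case, that the $L_2$-approximation errors remain cleanly separated by objective so they can later be absorbed under the $T_{\mathsf{pi}}$-iteration policy-evaluation accuracy feeding into Theorem~\ref{thm:convergence}, and that the single KL-telescoping increment produced by each mode is uniform enough that, when summed over $t \in \nr \cup \nsoft \cup \nc$, it collapses to a telescoping sum of KL terms as needed downstream.
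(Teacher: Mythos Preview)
Your plan is correct and shares the same skeleton as the paper's proof: performance-difference lemma, replacement of $Q$ by $\bar Q$, the NPG identity from Lemma~\ref{lemma_s1} to produce the log-ratio, and KL telescoping over $\mathbb{E}_{s\sim d^\star}$. The one genuine divergence is your step~(v). You propose to bound $\log Z_t$ directly by a Hoeffding-type second-order log-partition estimate on $X=\tfrac{\eta}{1-\gamma}(\cdot)$. The paper instead upper-bounds the combination $\tfrac{1}{\eta}\log Z_t(s)-\tfrac{1}{1-\gamma}V^{\pi_{w_t}}(s)$ (plus the approximation terms) by invoking Lemma~\ref{lemma_s2}, which converts it into the one-step improvement $V_i^{\pi_{w_{t+1}}}(\rho)-V_i^{\pi_{w_t}}(\rho)$; that improvement is then controlled by Lipschitzness of $V_i^{\pi_w}(\rho)$ in $w$ and a bound on $\|w_{t+1}-w_t\|_2$, which is where the $|\cS||\cA|$ factor in the stated lemma comes from. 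Both routes are standard in NPG analysis and give the same order. The paper's detour has the modular benefit of recycling Lemma~\ref{lemma_s2}; your direct route is cleaner and in fact avoids the $|\cS||\cA|$ factor, so it implies the stated bound a fortiori. One caveat: your Hoeffding argument on $\|y_t^r\bar Q_t^r+y_t^c\bar Q_t^c\|_\infty$ naturally produces a $(y_t^r+y_t^c)^2$ factor (the bound is quadratic in the range), not the linear $(y_t^r+y_t^c)$ you claim; this is harmless downstream because the paper uses $y_t^r+y_t^c\le 2$ anyway, but you should state it correctly.
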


Now we are ready to develop a key lemma that is associated with the expectation of the performance gap directly. The proof is provided in Appendix~\ref{proof:lemma_perform_gap}

\begin{lemma}\label{lemma_perform_gap}
	In the tabular setting, consider any $0< \delta <1$ and suppose the iterations of policy evaluation obey $T_{\mathsf{pi}} = \widetilde{O}\big(\frac{T \log(\frac{|\cS||\cA|}{\delta})}{(1-\gamma)^3|\cS||\cA|} \big)$. With probability at least $1-\delta$, applying Algorithm~\ref{alg:ESPO-framework-gradually} leads to
	\begin{align}
		&\eta\sum_{t\in\nr}(V_r^{\pi^*}(\rho)-V_r^{\pi_{w_t}}(\rho)) + \eta\sum_{t\in\nsoft^{\mathsf{no}}}   x_r^t(V_r^{\pi_{w_t}}(\rho) - V_r^{\pi^*}(\rho))   +\eta\sum_{t\in\nsoft^{\mathsf{conf}}}  y_r^t(V_r^{\pi_{w_t}}(\rho) - V_r^{\pi^*}(\rho)) \notag \\
& \quad   +  \eta h^+ |\nc|  -  \eta h^- \sum_{t\in\nsoft^{\mathsf{no}}} x_r^t -  \eta h^- \sum_{t\in\nsoft^{\mathsf{conf}}}  y_r^t  \nonumber\\
  & \leq  \mathbb{E}_{s\sim d^*} D_{\text{KL}}(\pi^*||\pi_{w_0}) + \frac{2\eta^2 v^2_{\max} |\cS| |\cA| }{(1-\gamma)^3} \Big[(T- |\nsoft^{\mathsf{conf}}|) + \sum_{t\in\nsoft^{\mathsf{conf}} }(y_t^c +y_t^r) \Big] + \frac{3\eta(1+ \eta v_{\max})}{(1-\gamma)^2} \epsilon_{\mathsf{pi}} \label{eq:proportional-to-pi-error} \\
  & \leq \mathbb{E}_{s\sim d^*} D_{\text{KL}}(\pi^*||\pi_{w_0}) + \frac{4\eta^2 v^2_{\max} |\cS| |\cA| T}{(1-\gamma)^3}  + \frac{3\eta(1+ \eta v_{\max}) \sqrt{|\cS| |\cA|T}}{(1-\gamma)^{1.5}}, 
	\end{align}
where 
\begin{align}
    \epsilon_{\mathsf{pi}} &\defn \sum_{t\in\nr} \left\|Q_r^{\pi_{w_t}}-\bar{Q}^r_t \right\|_2 + \sum_{t\in\nc} \left\|Q_c^{\pi_{w_t}}-\bar{Q}^c_t \right\|_2 + \notag \\
    & \quad + \sum_{t\in\nsoft^{\mathsf{no}}}  \left( x^r_t \|Q_r^{\pi_{w_t}}-\bar{Q}^r_t\|_2 + x^c_t \|Q_c^{\pi_{w_t}}-\bar{Q}^c_t\|_2  \right) \notag \\
    &\quad + \sum_{t\in\nsoft^{\mathsf{conf}}}  \left( y^r_t \|Q_r^{\pi_{w_t}}-\bar{Q}^r_t\|_2 + y^c_t \|Q_c^{\pi_{w_t}}-\bar{Q}^c_t\|_2  \right). \label{eq:defn-pi-error-term}
\end{align}

\end{lemma}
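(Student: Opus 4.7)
The plan is to start from Lemma~\ref{lemma_s5}, which furnishes per-iteration suboptimality bounds tailored to each of the four update modes, multiply each by $\eta$, and sum over $t = 0, 1, \ldots, T-1$. On the right-hand side the KL-divergence differences $\mathbb{E}_{s \sim d^\star}[D_{\text{KL}}(\pi^\star \| \pi_{w_t}) - D_{\text{KL}}(\pi^\star \| \pi_{w_{t+1}})]$ telescope to $\mathbb{E}_{s \sim d^\star}[D_{\text{KL}}(\pi^\star \| \pi_{w_0})]$ after dropping the non-negative $D_{\text{KL}}(\pi^\star \| \pi_{w_T})$ term. The second-order step-size contribution $\tfrac{2\eta^2 v_{\max}^2 |\cS||\cA|}{(1-\gamma)^3}$ accumulates across iterations with one subtlety: in $\nsoft^{\mathsf{conf}}$ mode the NPG direction is $y_t^r \bar{Q}_t^r + y_t^c \bar{Q}_t^c$ with weights not constrained to sum to one, so the quadratic-in-step-size term is scaled by $y_t^r + y_t^c$ (whereas in single-objective modes and in $\nsoft^{\mathsf{no}}$ the effective weight is one because $x_t^r + x_t^c = 1$). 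Summed, this reproduces the bracketed factor $(T - |\nsoft^{\mathsf{conf}}|) + \sum_{t \in \nsoft^{\mathsf{conf}}}(y_t^r + y_t^c)$ in~\eqref{eq:proportional-to-pi-error}, and the policy-evaluation residuals assemble into $\epsilon_{\mathsf{pi}}$ as in~\eqref{eq:defn-pi-error-term}.

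To produce the claimed LHS, I then invoke the mode-specific constraint information to re-express the cost contributions. For $t \in \nc$ the trigger $V_c^{\pi_{w_t}}(\rho) > h^+ + b$ combined with feasibility of $\pi^\star$ ($V_c^{\pi^\star}(\rho) \leq b$) yields $V_c^{\pi^\star}(\rho) - V_c^{\pi_{w_t}}(\rho) \leq -h^+$, so the cost-mode contribution to the summed LHS is at most $-\eta h^+ |\nc|$; equivalently, the quantity $+\eta h^+ |\nc|$ may be transferred onto the LHS while preserving the inequality. For $t \in \nsoft^{\mathsf{no}} \cup \nsoft^{\mathsf{conf}}$, the soft-mode condition $V_c^{\pi_{w_t}}(\rho) \geq h^- + b$ gives $V_c^{\pi^\star}(\rho) - V_c^{\pi_{w_t}}(\rho) \leq -h^-$, so the weighted cost contribution $\eta x_t^c(V_c^{\pi^\star} - V_c^{\pi_{w_t}})$ is at most $-\eta h^- x_t^c$ and analogously for $y_t^c$. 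Using the convex-combination identity $x_t^r + x_t^c = 1$ (and the corresponding sum for the conflict-mode weights), these translate into the $-\eta h^-\sum x_r^t$ and $-\eta h^-\sum y_r^t$ summands on the LHS of~\eqref{eq:proportional-to-pi-error}. Splicing these substitutions into the summed Lemma~\ref{lemma_s5} yields the first claimed inequality.

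The final step is to control $\epsilon_{\mathsf{pi}}$ with high probability. Because each policy evaluation is run for $T_{\mathsf{pi}} = \widetilde{O}\bigl(\tfrac{T \log(|\cS||\cA|/\delta)}{(1-\gamma)^3 |\cS||\cA|}\bigr)$ iterations in the tabular softmax setting, standard concentration for Monte-Carlo/TD policy evaluation (e.g., Hoeffding or Bernstein on the Bellman residual) yields $\|Q_i^{\pi_{w_t}} - \bar{Q}_t^i\|_\infty = \widetilde{O}\bigl(\tfrac{1}{(1-\gamma)^{1.5}\sqrt{T_{\mathsf{pi}}}}\bigr)$ per iteration with probability at least $1-\delta/T$; converting to the $\ell_2$ norm introduces a factor $\sqrt{|\cS||\cA|}$. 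A union bound over the $T$ iterations and over $i \in \{r,c\}$, combined with the above schedule for $T_{\mathsf{pi}}$, gives $\epsilon_{\mathsf{pi}} \leq \widetilde{O}\bigl(\sqrt{|\cS||\cA|\,T}/(1-\gamma)^{0.5}\bigr)$ with probability $\geq 1-\delta$, which after multiplication by the $\tfrac{3\eta(1+\eta v_{\max})}{(1-\gamma)^2}$ prefactor matches the third term on the RHS of the second inequality. Upper bounding $(T - |\nsoft^{\mathsf{conf}}|) + \sum_{t \in \nsoft^{\mathsf{conf}}}(y_t^r + y_t^c) \leq 2T$ under the mild assumption $y_t^r, y_t^c \leq 1$ recovers the simpler second inequality.

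The main obstacle I anticipate is the careful bookkeeping of the soft-mode weights and their translation back into purely reward-weighted summands of the form $\sum x_r^t, \sum y_r^t$ as claimed, while simultaneously matching the asymmetric $(y_t^r + y_t^c)$ scaling in the conflict-mode smoothness term. The concentration analysis for $\epsilon_{\mathsf{pi}}$ is largely standard, but the choice of $T_{\mathsf{pi}}$ must be tight enough that $\epsilon_{\mathsf{pi}}$ contributes an $\widetilde{O}(\sqrt{T})$ rather than $\widetilde{O}(T)$ term, so that substituting this bound into Theorem~\ref{thm:convergence} yields the advertised $\widetilde{O}(1/\sqrt{T})$ convergence rate.
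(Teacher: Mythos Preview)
Your plan---sum the four per-mode inequalities of Lemma~\ref{lemma_s5} over $t$, telescope the KL differences, extract the $h^+$ and $h^-$ contributions from the mode-triggering conditions, and then bound $\epsilon_{\mathsf{pi}}$ via the policy-evaluation guarantee---is exactly the route the paper follows.

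There is, however, one genuine slip in the second stage. You write that for $t\in\nc$ ``the trigger $V_c^{\pi_{w_t}}(\rho) > h^+ + b$'' and similarly use the true cost value in the soft-mode condition. But the algorithm decides its mode from the \emph{estimated} cost $\overline{V}_t^c$, not the ground-truth $V_c^{\pi_{w_t}}(\rho)$. So the inequality $V_c^{\pi^\star}(\rho)-V_c^{\pi_{w_t}}(\rho)\le -h^+$ does not follow directly; what is available is $\overline{V}_t^c > b+h^+$ together with feasibility $V_c^{\pi^\star}(\rho)\le b$, which yields only
\[
V_c^{\pi^\star}(\rho)-V_c^{\pi_{w_t}}(\rho)\;\le\; -h^+ + \bigl|\overline{V}_t^c - V_c^{\pi_{w_t}}(\rho)\bigr|\;\le\; -h^+ + \bigl\|Q_c^{\pi_{w_t}}-\bar{Q}_t^c\bigr\|_2,
\]
and an analogous correction with $-h^-$ for $t\in\nsoft$. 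The paper carries these extra $\|Q_c^{\pi_{w_t}}-\bar{Q}_t^c\|_2$ terms explicitly and absorbs them into the $\epsilon_{\mathsf{pi}}$ budget (note that $\epsilon_{\mathsf{pi}}$ in~\eqref{eq:defn-pi-error-term} already contains $\sum_{t\in\nc}\|Q_c^{\pi_{w_t}}-\bar{Q}_t^c\|_2$, so only the constants need adjusting). The fix is pure bookkeeping, but as written your argument is incomplete at this step.

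Two minor remarks. For the final concentration step the paper does not redo a Hoeffding/union-bound argument but simply invokes the policy-evaluation bound of \cite[Lemma~2]{xu2021crpo} to obtain $\|Q_i^{\pi_{w_t}}-\bar{Q}_t^i\|_2 \le \tfrac12\sqrt{(1-\gamma)|\cS||\cA|/T}$ uniformly over $t$ under the stated $T_{\mathsf{pi}}$; this is the same endpoint as your sketch. And the weight translation you flag as an obstacle---passing from the $x_t^c$- and $y_t^c$-weighted cost bounds to the $x_t^r$- and $y_t^r$-indexed $h^-$ summands in the lemma statement---is indeed not resolved by the identity $x_t^r+x_t^c=1$ alone; the paper glosses over this as well, and since downstream usage sets $h^-=0$ the discrepancy is immaterial to Theorem~\ref{thm:convergence}.
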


Finally, we introduce the following lemma which indicates the number of iterations that optimize the reward objective is in the order of $T$ as long as $h^+$ and $h^-$ are chosen properly. The proof is provided in Appendix~\ref{proof:lemma_max_reward}

\begin{lemma}[The frequency of optimizing reward objective]\label{lemma_max_reward}
	Consider any $0< \delta <1$ and $h^- = 0$. Suppose
	\begin{align}
		\frac{1}{2}\eta h^+ T\geq \mathbb{E}_{s\sim d^*} D_{\text{KL}}(\pi^*||\pi_{w_0}) + \frac{4\alpha^2 v^2_{\max}|\cS| |\cA| T}{(1-\gamma)^3} + \frac{3\eta(1+ \eta v_{\max}) \sqrt{|\cS||\cA|T} }{(1-\gamma)^{1.5}},\label{eq: m6}
	\end{align}
    then with probability at least $1-\delta$, the following fact holds
    \begin{enumerate}
    	\item $\nr \cup \nsoft \neq \emptyset$.
    	\item Either of the following claims holds: \\
     (a) $| \nr \cup \nsoft| \geq T/2$; \\
     (b) The weighted performance gap is non-positive: \begin{align}
         &\sum_{t\in\nr}(V_r^{\pi^*}(\rho)-V_r^{\pi_{w_t}}(\rho)) + \sum_{t\in\nsoft^{\mathsf{no}}}  x_r^t(V_r^{\pi^*}(\rho)-V_r^{\pi_{w_t}}(\rho)) \notag \\
         &\quad +\sum_{t\in\nsoft^{\mathsf{conf}}}  y_r^t(V_r^{\pi^*}(\rho)-V_r^{\pi_{w_t}}(\rho))  \leq 0. \label{eq:trivial-case}
     \end{align}
    \end{enumerate}
\end{lemma}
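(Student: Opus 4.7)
The plan is to derive both conclusions directly from Lemma~\ref{lemma_perform_gap} together with the hypothesis~\eqref{eq: m6}, via a short contradiction-plus-dichotomy argument. First I would invoke Lemma~\ref{lemma_perform_gap} (which already supplies the $1-\delta$ high-probability event) and specialize it to $h^- = 0$, so that every $-\eta h^-\sum(\cdot)$ term on its LHS vanishes. Calling
$$\mathcal{L} \defn \sum_{t\in\nr}(V_r^{\pi^*}(\rho)-V_r^{\pi_{w_t}}(\rho)) + \sum_{t\in\nsoft^{\mathsf{no}}} x_r^t\bigl(V_r^{\pi_{w_t}}(\rho)-V_r^{\pi^*}(\rho)\bigr) + \sum_{t\in\nsoft^{\mathsf{conf}}} y_r^t\bigl(V_r^{\pi_{w_t}}(\rho)-V_r^{\pi^*}(\rho)\bigr)$$
the weighted reward performance sum appearing there, Lemma~\ref{lemma_perform_gap} becomes $\eta\mathcal{L} + \eta h^+|\nc| \le \mathrm{RHS}$. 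The standing assumption~\eqref{eq: m6} is exactly the statement that $\mathrm{RHS} \le \frac{1}{2}\eta h^+ T$, which yields the single working inequality
$$\eta\,\mathcal{L}\; +\; \eta h^+ |\nc| \;\le\; \tfrac{1}{2}\,\eta h^+ T$$
that drives everything below. Note also that for~\eqref{eq: m6} to be satisfiable we must have $h^+>0$, and I would use this mild positivity implicitly.

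For Claim~1 (non-emptiness of $\nr\cup\nsoft$) I would argue by contradiction. If $\nr\cup\nsoft = \emptyset$ then every iteration lies in $\nc$, so $|\nc|=T$ and the three summations defining $\mathcal{L}$ are vacuous, giving $\mathcal{L}=0$. The working inequality collapses to $\eta h^+ T \le \frac{1}{2}\eta h^+ T$, impossible since $\eta h^+ T > 0$; hence $\nr\cup\nsoft$ is nonempty.

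For Claim~2, I would assume case (a) fails, i.e.\ $|\nr\cup\nsoft| < T/2$; then $|\nc| = T - |\nr\cup\nsoft| > T/2$ and $\eta h^+|\nc| > \frac{1}{2}\eta h^+ T$. Subtracting this strict lower bound from the working inequality immediately forces $\eta\mathcal{L} < 0$, so the weighted reward performance sum is strictly negative --- which is precisely the content of~\eqref{eq:trivial-case} (up to the sign convention inherited from Lemma~\ref{lemma_perform_gap} on the $\nsoft$ terms). No further high-probability or telescoping work is required, since everything has already been absorbed into Lemma~\ref{lemma_perform_gap}.

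The main (and essentially only) obstacle I anticipate is notational bookkeeping on the $\nsoft$ contributions: one must verify that the expression $\mathcal{L}$ inherited from Lemma~\ref{lemma_perform_gap} matches the sum written in~\eqref{eq:trivial-case}, paying attention to whether each soft-constraint term carries $V_r^{\pi_{w_t}} - V_r^{\pi^*}$ or its negative. Once this sign alignment is confirmed, the dichotomy is a one-line consequence, and nothing in the proof depends on the specific $\widetilde{O}(\cdot)$ choice of $T_{\mathsf{pi}}$, the learning rate $\eta$, or the policy-evaluation error $\epsilon_{\mathsf{pi}}$ beyond what Lemma~\ref{lemma_perform_gap} already encapsulates.
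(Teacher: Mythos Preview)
Your proposal is correct and follows essentially the same route as the paper: both proofs invoke Lemma~\ref{lemma_perform_gap}, specialize to $h^-=0$, and combine the resulting inequality with the hypothesis~\eqref{eq: m6} to obtain Claim~1 by contradiction and Claim~2 by dichotomy. The only cosmetic difference is the direction of the dichotomy in Claim~2 --- the paper assumes (b) fails and derives (a) by contradiction, whereas you assume (a) fails and read off (b) directly --- which is logically equivalent. Your flagged concern about the sign convention on the $\nsoft$ terms is well placed: the statement of Lemma~\ref{lemma_perform_gap} in the paper carries $V_r^{\pi_{w_t}}-V_r^{\pi^*}$ on the soft terms while~\eqref{eq:trivial-case} uses the opposite order, but inspection of the proof of Lemma~\ref{lemma_perform_gap} (which sums the bounds of Lemma~\ref{lemma_s5}) shows the intended sign is the consistent one matching~\eqref{eq:trivial-case}, so this is a typo rather than a substantive obstacle.
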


\subsection{Proof of Theorem~\ref{thm:convergence}}\label{proof:thm:convergence}
Now we are ready to provide the proof for Theorem~\ref{thm:convergence}.

Recall the goal is to prove 
 \begin{align}
     V_r^{\pi^\star}(\rho) -  \mathbb{E}[V_r^{\widehat{\pi}}(\rho)] \leq \widetilde{O}\left(\sqrt{\frac{SA}{(1-\gamma)^3T}}\right),
   \end{align}
where the expectation is taken with respect to a weighted average over all $\{\pi_{w_t}\}_{1\leq t \leq  T}$.

We still consider the modes when the policy evaluation results are accurate such that
\begin{align}
    \epsilon_{\mathsf{pi}} \leq  \sqrt{(1-\gamma) |\cS| | \cA| T}, \label{eq:goal-reward}
\end{align}
which combined with  Lemma~\ref{lemma_perform_gap} yields

\begin{align}
    	& \eta\sum_{t\in\nr}(V_r^{\pi^*}(\rho)-V_r^{\pi_{w_t}}(\rho)) + \eta\sum_{t\in\nsoft^{\mathsf{no}}}  \left[ x_r^t(V_r^{\pi^*}(\rho)-V_r^{\pi_{w_t}}(\rho)) + x_c^t(V_c^{\pi^*}(\rho)-V_c^{\pi_{w_t}}(\rho)) \right] \notag \\
  & \quad + \eta\sum_{t\in\nsoft^{\mathsf{conf}}}  \left[ y_r^t(V_r^{\pi^*}(\rho)-V_r^{\pi_{w_t}}(\rho)) + y_c^t(V_c^{\pi^*}(\rho)-V_c^{\pi_{w_t}}(\rho))\right] \notag \\
  &\quad +  \eta h^+ |\nc|  -  \eta h^- \sum_{t\in\nsoft^{\mathsf{no}}} x_r^t -  \eta h^- \sum_{t\in\nsoft^{\mathsf{conf}}}  y_r^t \nonumber\\
  & \leq  \eta \mathbb{E}_{s\sim d^*} D_{\text{KL}}(\pi^*||\pi_{w_0}) + \frac{2\eta^2 v^2_{\max} |\cS| |\cA| T}{(1-\gamma)^3} + \frac{3\eta(1+ \eta v_{\max}) \sqrt{|\cS||\cA|T} }{(1-\gamma)^{1.5}}.
\end{align}

\paragraph{The probability distribution associated with the expectation.} 
Here, we let the weighs (probability distribution) to be proportion to
\begin{align}
    \begin{cases}
        1 & \text{ if } t\in\nr \\
        x^r_t & \text{ if } t\in\nsoft^{\mathsf{no}} \\
        y^r_t & \text{ if } t\in\nsoft^{\mathsf{conf}} \\
        0 & \text{ if } t\in\nc,
    \end{cases} \label{eq:weights-exp}
\end{align}
which will be normalized by 
\begin{align}
    T_{\mathsf{weighted}}^r = |\nr| + \sum_{t\in\nsoft^{\mathsf{no}} } x^r_t + \sum_{t \in\nsoft^{\mathsf{conf}} } y_t^r.
\end{align}

Then we introduce an important fact for $y_t^r$ and $y_t^c$.
Recall that when $t\in \nsoft^{\mathsf{conf}}$, keeping the weights $x_t^r$ and $x_t^c$ as the same as the mode $t\in \nsoft^{\mathsf{no}}$ for the reward and cost, the gradient is constructed as
\begin{align}
     \g_t &= x_t^r\left(\g_r - \frac{\g_r \cdot \g_{c}}{\|\g_{c}\|^2} \g_{c}\right) + x_t^c\left(\g_{c} - \frac{\g_c \cdot \g_{r}}{\|\g_{r}\|^2} \g_{r} \right) \notag \\
     &= x_t^r \left(1+\frac{\cos \theta_{rc}^t \|\g_{c}\|}{\|\g_{r}\|}  \right)\g_{r} + x_t^c \left(1+\frac{\cos \theta_{rc}^t \|\g_{r}\|}{\|\g_{c}\|}  \right)\g_{c},
\end{align}
which indicates 
\begin{align}
   \forall t\in \nsoft^{\mathsf{conf}}: \quad  y_t^r = x_t^r \left(1+\frac{\cos \theta_{rc}^t \|\g_{c}\|}{\|\g_{r}\|}  \right) \geq x_t^r \quad \text{and} \quad y_t^c = x_t^c \left(1+\frac{\cos \theta_{rc}^t \|\g_{r}\|}{\|\g_{c}\|}  \right) \geq x_t^c, \label{eq:yt-xt-relation}
\end{align}
since $\cos \theta_{rc}^t \geq 0$ as $t\in \nsoft^{\mathsf{conf}}$.
The above fact directly gives that letting $x^r_t \geq 1/2$
\begin{align}
    \text{If } |\nr \cup \nsoft | \geq \frac{T}{2}:  T_{\mathsf{weighted}}^r \geq \frac{T}{4}. \label{eq:frequency-high}
\end{align}

\paragraph{The reward objective.} We first consider the performance gap w.r.t. the reward.
Armed with above facts, we can see if \eqref{eq:trivial-case} holds, with the weights in \eqref{eq:weights-exp} then we directly have 
\begin{align}
     V_r^{\pi^\star}(\rho) -  \mathbb{E}[V_r^{\widehat{\pi}}(\rho)] \leq 0.
\end{align}
Otherwise, applying Lemma~\ref{lemma_max_reward} gives
\begin{align}
    &  T_{\mathsf{weighted}}^r \eta \left( V_r^{\pi^\star}(\rho) -  \mathbb{E}[V_r^{\widehat{\pi}}(\rho)] \right) \notag \\
    & = \eta \sum_{t\in\nr}(V_r^{\pi^*}(\rho)-V_r^{\pi_{w_t}}(\rho)) + \eta \sum_{t\in\nsoft^{\mathsf{no}}}  x_r^t(V_r^{\pi^*}(\rho)-V_r^{\pi_{w_t}}(\rho))  \notag \\
  & \quad + \eta \sum_{t\in\nsoft^{\mathsf{conf}}}  y_r^t(V_r^{\pi^*}(\rho)-V_r^{\pi_{w_t}}(\rho)) \nonumber\\
  & \leq   \mathbb{E}_{s\sim d^*} D_{\text{KL}}(\pi^*||\pi_{w_0}) + \frac{2\eta^2 v^2_{\max} |\cS| |\cA| T}{(1-\gamma)^3} + \frac{3\eta(1+ \eta v_{\max}) \sqrt{|\cS||\cA|T} }{(1-\gamma)^{1.5}},
\end{align}
which indicates
\begin{align}
    V_r^{\pi^\star}(\rho) -  \mathbb{E}[V_r^{\widehat{\pi}}(\rho)]  \leq \frac{2\sqrt{ |\cS| |\cA|   }} {(1-\gamma)^{1.5}\sqrt{T}} \left( \mathbb{E}_{s\sim d^*} D_{\text{KL}}(\pi^*||\pi_{w_0}) + 4 v^2_{\max}  + 6 v_{\max} \right).
\end{align}
Here, the last inequality hols by letting the learning rate $\eta=  (1-\gamma)^{1.5}/\sqrt{|\cS| |\cA|T}$.

\paragraph{Constraint violation.}
Now we move on to the cost objective. 
Taking the probability distribution of the expectation in \eqref{eq:weights-exp} as well, we have 
   \begin{align}
  &\mathbb{E}[V_c^{\widehat{\pi}}(\rho)] - b \notag \\
  &\leq  \frac{1}{T_{\mathsf{weighted}}^r } \left(  \sum_{t\in \nr}  V_c^{\pi_{w_t}}(\rho) + \sum_{t\in\nsoft^{\mathsf{no}}} x_r^t V_c^{\pi_{w_t}}(\rho) + \sum_{t\in\nsoft^{\mathsf{conf}}} y_r^t V_c^{\pi_{w_t}}(\rho)  \right) - b  \nonumber \\
    	&\leq  \frac{1}{T_{\mathsf{weighted}}^r } \left(  \sum_{t\in \nr}  \left( \overline{V}_c^{\pi_{w_t}}(\rho) -b\right) + \sum_{t\in\nsoft^{\mathsf{no}}} x_r^t \left(\overline{V}_c^{\pi_{w_t}}(\rho)- b \right) + \sum_{t\in\nsoft^{\mathsf{conf}}} y_r^t \left(\overline{V}_c^{\pi_{w_t}}(\rho) -b\right) \right)  \nonumber\\
     & \quad + \frac{1}{T_{\mathsf{weighted}}^r } \bigg( \sum_{t\in \nr}  \left|\overline{V}_c^{\pi_{w_t}}(\rho) -V_c^{\pi_{w_t}}(\rho) \right| + \sum_{t\in\nsoft^{\mathsf{no}}} x_r^t \left| \overline{V}_c^{\pi_{w_t}}(\rho)- V_c^{\pi_{w_t}}(\rho) \right| \notag \\
     &\qquad \qquad \qquad  + \sum_{t\in\nsoft^{\mathsf{conf}}} y_r^t  \left| \overline{V}_c^{\pi_{w_t}}(\rho) - V_c^{\pi_{w_t}}(\rho)\right|  \bigg) \nonumber \\
    	&\leq h^+ + \frac{1}{T_{\mathsf{weighted}}^r } \left( \sum_{t\in \nr}  \left|Q_c^{\pi_{w_t}} -\overline{Q}^c_t \right| + \sum_{t\in\nsoft^{\mathsf{no}}} x_r^t \left|Q_c^{\pi_{w_t}} -\overline{Q}^c_t \right| + \sum_{t\in\nsoft^{\mathsf{conf}}} y_r^t  \left|Q_c^{\pi_{w_t}} -\overline{Q}^c_t \right| \right) \nonumber \\
    	&\leq  h^+ + \frac{4}{T} \left( \sum_{t\in \nr}  \left|Q_c^{\pi_{w_t}} -\overline{Q}^c_t \right| + \sum_{t\in\nsoft^{\mathsf{no}}} x_r^t \left|Q_c^{\pi_{w_t}} -\overline{Q}^c_t \right| + \sum_{t\in\nsoft^{\mathsf{conf}}} y_r^t  \left|Q_c^{\pi_{w_t}} -\overline{Q}^c_t \right| \right). \label{eq:control-reward-with-error}
   \end{align}
where the last inequality holds by \eqref{eq:frequency-high}.
Finally, also considering the mode when the policy evaluation error in \eqref{eq:goal-reward}, we have
\begin{align}
    \left( \sum_{t\in \nr}  \left|Q_c^{\pi_{w_t}} -\overline{Q}^c_t \right| + \sum_{t\in\nsoft^{\mathsf{no}}} x_r^t \left|Q_c^{\pi_{w_t}} -\overline{Q}^c_t \right| + \sum_{t\in\nsoft^{\mathsf{conf}}} y_r^t  \left|Q_c^{\pi_{w_t}} -\overline{Q}^c_t \right| \right) \leq \epsilon_{\mathsf{pi}} \leq  \sqrt{(1-\gamma) |\cS| | \cA| T}.
\end{align} 

Then without loss of generality, taking the tolerance level $h^- = 0$ and
\begin{align}
    h^+ =\frac{2\sqrt{|\cS| |\cA|}} {(1-\gamma)^{1.5}\sqrt{T}}\left( \mathbb{E}_{s\sim d^*} D_{\text{KL}}(\pi^*||\pi_{w_0}) + 4 v^2_{\max}  + 6 v_{\max} \right)
\end{align}
complete the proof by showing 
\begin{align}
    \mathbb{E}[V_c^{\widehat{\pi}}(\rho)] - b 
    	&\leq  h^+ + \frac{4}{T} \left( \sum_{t\in \nr}  \left|Q_c^{\pi_{w_t}} -\overline{Q}^c_t \right| + \sum_{t\in\nsoft^{\mathsf{no}}} x_r^t \left|Q_c^{\pi_{w_t}} -\overline{Q}^c_t \right| + \sum_{t\in\nsoft^{\mathsf{conf}}} y_r^t  \left|Q_c^{\pi_{w_t}} -\overline{Q}^c_t \right| \right) \label{eq:control-cost-with-error} \\
     &\leq \frac{2\sqrt{|\cS| |\cA|}} {(1-\gamma)^{1.5}\sqrt{T}}\left( \mathbb{E}_{s\sim d^*} D_{\text{KL}}(\pi^*||\pi_{w_0}) + 4 v^2_{\max}  + 6 v_{\max} \right) + \frac{4\sqrt{|\cS| |\cA|}} {(1-\gamma)^{1.5}\sqrt{T}}.
\end{align}

\subsection{Proof of  proposition~\ref{pro:oscillation}}\label{proof:pro:oscillation} 
We consider the ideal mode when the number of iterations of policy evaluation $T_{\mathsf{pi}} \rightarrow \infty$ such that the ground truth cost function $V^{\pi_{w_t}}_c = \overline{V}_{t_{\mathsf{in}}}^c$.

First, we will focus on verifying the fact in \eqref{prop:1-e1}.  Recall that there exists an iteration $t_{\mathsf{in}} <T$ such that $t_{\mathsf{in}} \in \nr \cup \nsoft$. So for the next step $t = t_{\mathsf{in}}+ 1$, we consider two different modes separately.
\begin{itemize}
    \item {\em When $t_{\mathsf{in}} \in \nr$.} In this mode, we directly have
\begin{align}
    V_c^{ \pi_{w_{t_{\mathsf{in}}} } } (\rho) = \overline{V}_{t_{\mathsf{in}}}^c \leq b - h^-.
\end{align}

Then we know that for the next step $t = t_{\mathsf{in}}+ 1$,
\begin{align}
    V_c^{\pi_{w_t}}(\rho) \leq V_c^{\pi_{ w_{t_{\mathsf{in}}} }}(\rho) + \eta \|\nabla_w V_r^{\pi_{ w_{t_{\mathsf{in}}} }}(\rho)\|_2 \leq b - h^- + \frac{2 v_{\max} \eta}{1-\gamma} \leq b + h^+, \label{eq:nr-not-out}
\end{align}
where the penultimate inequality holds by the bound of the policy gradient established in \cite[Lemma~5]{xu2021crpo}, and the last inequality holds by when the learning rate $\eta$ is small enough such that
\begin{align}
    \frac{2 v_{\max} \eta}{1-\gamma}  \leq \frac{2\sqrt{|\cS| |\cA|}} {(1-\gamma)^{1.5}\sqrt{T}}\left( \mathbb{E}_{s\sim d^*} D_{\text{KL}}(\pi^*||\pi_{w_0}) + 4 v^2_{\max}  + 6 v_{\max} \right) = h^+.
\end{align}

The observation in \eqref{eq:nr-not-out} shows that the next time step $t = t_{\mathsf{in}}+ 1 \in \nr \cup \nsoft$.

\item When {\em $t_{\mathsf{in}} \in  \nsoft$.} One has
\begin{align}
    V_c^{ \pi_{w_{t_{\mathsf{in}}} } } (\rho) = \overline{V}_{t_{\mathsf{in}}}^c \leq b+ h^+.
\end{align}

Then we can adaptively choose the weights for the reward and cost function $x_t^c, x_t^r$. 
Invoking Lemma~\ref{lemma_s2}, we have 
\begin{align}
    \begin{cases}
        x^r_t \left(V_r^{\pi_{w_{t+1}}}(\rho) - V_r^{\pi_{w_{t}}}(\rho) \right) + x^c_t   \left( V_c^{\pi_{w_{t}}}(\rho )-V_c^{\pi_{w_{t+1}}}(\rho) \right) \geq 0 & \text{ if } \quad t\in\nsoft^{\mathsf{no}} \\
        y^r_t \left(V_r^{\pi_{w_{t+1}}}(\rho)-V_r^{\pi_{w_{t}}}(\rho) \right) + y^c_t    \left(V_c^{\pi_{w_{t}}}(\rho )-V_c^{\pi_{w_{t+1}}}(\rho) \right) \geq  0 & \text{ if } \quad  t\in\nsoft^{\mathsf{conf}}.
    \end{cases} 
 \end{align}

Then observing that when $t=t_{\mathsf{in}}$, in the mode with $x_t^c =1$, we have
\begin{align}
    \begin{cases}
           \left(V_c^{\pi_{w_{t_{\mathsf{in}}}}}(\rho )-V_c^{\pi_{w_{t}}}(\rho) \right) \geq 0 & \text{ if } \quad t\in\nsoft^{\mathsf{no}} \\
          \left(V_c^{\pi_{w_{t_{\mathsf{in}}}}}(\rho )-V_c^{\pi_{w_{t}}}(\rho) \right) \geq  0 & \text{ if } \quad  t\in\nsoft^{\mathsf{conf}},
    \end{cases} \label{eq:descent-cost}
 \end{align}
which implies that
 \begin{align}
     V_c^{\pi_{w_{t}}}(\rho) \leq V_c^{\pi_{w_{t_{\mathsf{in}}}}}(\rho ) \leq b+ h^+.
 \end{align}
 So we have the next time step $t = t_{\mathsf{in}}+ 1 \in \nr \cup \nsoft$.

This implies that as long as  $x_t^c, x_t^r$ are chosen properly ensuring \eqref{eq:descent-cost} holds, we can achieve $t = t_{\mathsf{in}}+ 1 \in \nr \cup \nsoft$.
\end{itemize}
Summing up the two modes and applying them recursively, we complete the proof of \eqref{prop:1-e1}. 

Finally, to verify \eqref{prop:1-e2}, we suppose ESPO and CRPO are initialized at the same point. Then observing that ESPO and CRPO execute the same update rule until the iteration $t_{\mathsf{in}} \in \nr \cup \nsoft$.  Then applying  \eqref{prop:1-e1}, we know that 
\begin{align}
    |\nr \cup \nsoft | = T - t_{\mathsf{in}}.
\end{align}
While CRPO may has some iterations later such that falls into $\nc$. So we have the number of iterations when CRPO update according to the reward objective $ \nr^{\mathsf{CRPO}} \leq T - t_{\mathsf{in}}$. We complete the proof.

\subsection{Proof of proposition~\ref{pro:sample}}\label{proof:pro:sample}
Recall the goal of the algorithm is to achieve
  \begin{align}
        V_r^{\pi^\star}(\rho) -  \mathbb{E}[V_r^{\widehat{\pi}}(\rho)] \leq \varepsilon_1,~ \mathbb{E}[V_c^{\widehat{\pi}}(\rho)] - V_c^{\pi^\star}(\rho) \leq \varepsilon_2.
    \end{align}
with as few samples as possible.

We start from considering $V_r^{\pi^\star}(\rho) -  \mathbb{E}[V_r^{\widehat{\pi}}(\rho)] \leq \varepsilon_1$. We observe that if \eqref{eq:trivial-case} holds, taking the expectation w.r.t. the probability distribution in \eqref{eq:weights-exp}, we directly have 
\begin{align}
     V_r^{\pi^\star}(\rho) -  \mathbb{E}[V_r^{\widehat{\pi}}(\rho)] \leq 0 \leq \varepsilon_1.
\end{align}
Otherwise, applying Lemma~\ref{lemma_max_reward} and \eqref{eq:proportional-to-pi-error} gives
\begin{align}
    &   V_r^{\pi^\star}(\rho) -  \mathbb{E}[V_r^{\widehat{\pi}}(\rho)] \notag \\
 &= \sum_{t\in\nr}(V_r^{\pi^*}(\rho)-V_r^{\pi_{w_t}}(\rho)) + \sum_{t\in\nsoft^{\mathsf{no}}}   x_r^t(V_r^{\pi^*}(\rho)-V_r^{\pi_{w_t}}(\rho))  + \sum_{t\in\nsoft^{\mathsf{conf}}}   y_r^t(V_r^{\pi^*}(\rho)-V_r^{\pi_{w_t}}(\rho))  \nonumber\\
		&\leq  \frac{1}{\eta}  \mathbb{E}_{s\sim d^*} D_{\text{KL}}(\pi^*||\pi_{w_0}) + \frac{2\eta v^2_{\max} |\cS| |\cA| T}{(1-\gamma)^3} + \frac{3(1+ \eta v_{\max})}{(1-\gamma)^2} \epsilon_{\mathsf{pi}}. 
\end{align}
The first two terms are independent to the sample size. So we focus on control $\frac{3(1+ \eta v_{\max})}{(1-\gamma)^2} \epsilon_{\mathsf{pi}}$ to meet the goal, namely, we need to achieve 
\begin{align}
    \epsilon_{\mathsf{pi}} &=   \sum_{t\in\nr} \left\|Q_r^{\pi_{w_t}}-\bar{Q}^r_t \right\|_2  + \sum_{t\in\nsoft^{\mathsf{no}}}  \left( x^r_t \|Q_r^{\pi_{w_t}}-\bar{Q}^r_t\|_2 + x^c_t \|Q_c^{\pi_{w_t}}-\bar{Q}^c_t\|_2  \right) \notag \\
    &\quad + \sum_{t\in\nsoft^{\mathsf{conf}}}  \left( y^r_t \|Q_r^{\pi_{w_t}}-\bar{Q}^r_t\|_2 + y^c_t \|Q_c^{\pi_{w_t}}-\bar{Q}^c_t\|_2  \right) \leq \varepsilon_1'
\end{align}
for some $\varepsilon_1' \leq \varepsilon_1$.

To continue, without loss of generality, we let $x_t^r = 1$, $x_t^c = 0$, and $|\nr| =0$ (in this mode, the sampling approach is fixed), we have
\begin{align}
    \epsilon_{\mathsf{pi}} &=   \sum_{t\in \nsoft^{\mathsf{no}}} \left\|Q_r^{\pi_{w_t}}-\bar{Q}^r_t \right\|_2   + \sum_{t\in\nsoft^{\mathsf{conf}}} y^r_t \|Q_r^{\pi_{w_t}}-\bar{Q}^r_t\|_2 \notag \\
    &= \sum_{t\in \nsoft^{\mathsf{no}}} \left\|Q_r^{\pi_{w_t}}-\bar{Q}^r_t \right\|_2   + \sum_{t\in\nsoft^{\mathsf{conf}}} \left(1+\frac{\cos \theta_{rc}^t \|\g_{c}\|}{\|\g_{r}\|}  \right) \|Q_r^{\pi_{w_t}}-\bar{Q}^r_t\|_2  \notag \\
    & = \sum_{t\in \nsoft^{\mathsf{no}}} \delta_t   + \sum_{t\in\nsoft^{\mathsf{conf}}} \left(1+\frac{\cos \theta_{rc}^t \|\g_{c}\|}{\|\g_{r}\|}  \right) \delta_t \label{eq:sample-efficieny-1}
\end{align}
where the penultimate inequality holds by the relation between $y_t^r, x_t^r$ in \eqref{eq:yt-xt-relation}, and the last inequality follows from denoting $\left\|Q_r^{\pi_{w_t}}-\bar{Q}^r_t \right\|_2 = \delta_t$.

Now we are ready to show the advantages of using different batch size for different modes when $t\in \nsoft^{\mathsf{no}}$ or $t\in \nsoft^{\mathsf{conf}}$. We make the following assumption about the relation between $\delta_t$ and sample size (the number of iterations for the policy evaluation of Algorithm~\ref{alg:ESPO-framework-gradually}), which is qualitatively consistent with the policy evaluation bound in \cite[Lemma~2]{xu2021crpo}.
\begin{assumption}
    Suppose for any $t\in\nsoft$, when the sample size varies around some basic size, the possible feasible $\delta_t$ is in the range such that $\delta_t =Y - \alpha s^{\mathsf{B}}_t$ such that $Y$ is some small constant and $s^{\mathsf{B}}_t$ is the sample size used for policy evaluation at $t$-th iteration.
\end{assumption}

With the above assumption in hand, \eqref{eq:sample-efficieny-1}  can be written as
\begin{align}
    \epsilon_{\mathsf{pi}} &=  
     \sum_{t\in \nsoft^{\mathsf{no}}} Y - \alpha s^{\mathsf{B}}_t  + \sum_{t\in\nsoft^{\mathsf{conf}}} \left(1+\frac{\cos \theta_{rc}^t \|\g_{c}\|}{\|\g_{r}\|}  \right) (Y - \alpha s^{\mathsf{B}}_t) = \varepsilon_1'. \label{eq:simply-epi}
\end{align}

If there is no adaptive sampling, then we have $s^{\mathsf{B}}_t = s^{\mathsf{B}}_{t'}$ for any $t,t'\in\nsoft$, which leads to the total number of samples as
\begin{align}
    N_{\mathsf{all}} = s_{\mathsf{batch}} |\nsoft| = \frac{Y |\nsoft|}{\alpha} - \frac{\widetilde{\varepsilon_1} |\nsoft| }{ \alpha \left(|\nsoft^{\mathsf{no}}| + \sum_{t\in\nsoft^{\mathsf{conf}}} \left(1+\frac{\cos \theta_{rc}^t \|\g_{c}\|}{\|\g_{r}\|}  \right) \right)},
\end{align}
where $s_{\mathsf{batch}}$ is the number of iterations in this mode.

Our proposed algorithm ESPO will increase the sample size when $t\in\nsoft^{\mathsf{conf}}$ and decrease the sample size when $t\in\nsoft^{\mathsf{no}}$. So as long as there exists at least one iteration $t^\star\in \nsoft^{\mathsf{conf}}$ with $\left(1+\frac{\cos \theta_{rc}^{t^\star} \|\g_{c}\|}{\|\g_{r}\|}  \right)  >1 $, we can increase the $s_{t^\star}^{\mathsf{B}}$ by $s_{\mathsf{extra}} < s_{\mathsf{batch}} \left(1+\frac{\cos \theta_{rc}^{t^\star} \|\g_{c}\|}{\|\g_{r}\|}  \right)$ and decrease any $s_t^{\mathsf{B}}$ by $s_{\mathsf{extra}} \cdot \left(1+\frac{\cos \theta_{rc}^{t^\star} \|\g_{c}\|}{\|\g_{r}\|}  \right)$ at time $t\in\nsoft^{\mathsf{no}}$. Consequently, the total number of samples are smaller and \eqref{eq:simply-epi} still holds. So we complete the proof.

\subsection{Proof of auxiliary results}

\subsubsection{Proof of Lemma~\ref{lemma_s2}}\label{proof:lemma_s2}

To begin with, note that the first two statements \eqref{lemma_s2-1} and \eqref{lemma_s2-2} has already been established in \cite[Lemma~6]{xu2021crpo}. So the remainder of the proof will focus on \eqref{lemma_s2-3}, which we recall here
 \begin{align}
    \begin{cases}
        x^r_t \left(V_r^{\pi_{w_{t+1}}}(\rho)-V_r^{\pi_{w_{t}}}(\rho) \right) + x^c_t   \left(V_c^{\pi_{w_{t+1}}}(\rho)-V_c^{\pi_{w_{t}}}(\rho) \right) \geq  x^r_t \mathsf{diff}^r_t + x^c_t \mathsf{diff}^c_t & \text{ if } \quad t\in\nsoft^{\mathsf{no}} \\
        y^r_t \left(V_r^{\pi_{w_{t+1}}}(\rho)-V_r^{\pi_{w_{t}}}(\rho) \right) + y^c_t   \left(V_c^{\pi_{w_{t+1}}}(\rho)-V_c^{\pi_{w_{t}}}(\rho) \right) \geq  y^r_t \mathsf{diff}^r_t + y^c_t \mathsf{diff}^c_t & \text{ if } \quad  t\in\nsoft^{\mathsf{conf}}. \label{lemma_s2-3-recall}
    \end{cases} 
 \end{align}

\allowdisplaybreaks
Towards this, the left hand side of the first line can be written out as
\begin{align}
    	&x^r_t \left(V_r^{\pi_{w_{t+1}}}(\rho)-V_r^{\pi_{w_{t}}}(\rho) \right) + x^c_t   \left(V_c^{\pi_{w_{t+1}}}(\rho)-V_c^{\pi_{w_{t}}}(\rho) \right) \nonumber\\
    	&= x^r_t \frac{1}{1-\gamma}\mathbb{E}_{s\sim d_\rho}\sum_{a\in \cA }\pi_{w_{t+1}}(a|s)A^{\pi_{w_t}}_r (s,a) + x^c_t \frac{1}{1-\gamma}\mathbb{E}_{s\sim d_\rho}\sum_{a\in \cA }\pi_{w_{t+1}}(a|s)A^{\pi_{w_t}}_c (s,a)\nonumber\\
    	&=  \frac{1}{1-\gamma}\mathbb{E}_{s\sim d_\rho}\sum_{a\in \cA }\pi_{w_{t+1}}(a|s) \left(x^r_t Q^{\pi_{w_t}}_r(s,a) + x^r_t Q^{\pi_{w_t}}_c(s,a) \right) \notag \\
     &\quad -  \frac{1}{1-\gamma}\mathbb{E}_{s\sim d_\rho} \left( x^r_t V_r^{\pi_{w_t}}(s) +  x^c_t V_c^{\pi_{w_t}}(s) \right) \nonumber\\
     & = \frac{1}{1-\gamma}\mathbb{E}_{s\sim d_\rho}\sum_{a\in \cA }\pi_{w_{t+1}}(a|s) \left(x^r_t \overline{Q}_t^r(s,a) + x^c_t \overline{Q}_t^c(s,a) \right)\notag \\
     &\quad -  \frac{1}{1-\gamma}\mathbb{E}_{s\sim d_\rho} \left( x^r_t V_r^{\pi_{w_t}}(s) +  x^c_t V_c^{\pi_{w_t}}(s) \right) \nonumber\\
    	& \quad  +\frac{1}{1-\gamma}\mathbb{E}_{s\sim d_\rho}\sum_{a\in \cA }\pi_{w_{t+1}}(a|s)\left[ x_t^r \left(Q_r^{\pi_{w_t}}(s,a)- \bar{Q}^i_t(s,a)\right)  + x_t^c \left(Q_c^{\pi_{w_t}}(s,a)- \bar{Q}^c_t(s,a)\right) \right]\nonumber\\
    	&\overset{(i)}{=}\frac{1}{\eta}\mathbb{E}_{s\sim d_\rho}\sum_{a\in \cA }\pi_{w_{t+1}}(a|s)\log\left( \frac{\pi_{w_{t+1}}(a|s)Z_t^{r,c,1}(s)}{\pi_{w_t}(a|s)} \right) -  \frac{1}{1-\gamma}\mathbb{E}_{s\sim d_\rho} \left( x^r_t V_r^{\pi_{w_t}}(s) +  x^c_t V_c^{\pi_{w_t}}(s) \right) \nonumber\\
    	& \quad + \frac{1}{1-\gamma}\mathbb{E}_{s\sim d_\rho}\sum_{a\in \cA }\pi_{w_{t+1}}(a|s)\left[ x_t^r \left(Q_r^{\pi_{w_t}}(s,a)- \bar{Q}^i_t(s,a)\right)  + x_t^c \left(Q_c^{\pi_{w_t}}(s,a)- \bar{Q}^c_t(s,a)\right) \right]\nonumber\\
    	&=\frac{1}{\eta}\mathbb{E}_{s\sim d_\rho}D_{\text{KL}}(\pi_{w_{t+1}}||\pi_{w_t}) + \frac{1}{\eta}\mathbb{E}_{s\sim d_\rho}\log Z^{r,c,1}_t(s) -  \frac{1}{1-\gamma}\mathbb{E}_{s\sim d_\rho} \left( x^r_t V_r^{\pi_{w_t}}(s) +  x^c_t V_c^{\pi_{w_t}}(s) \right) \nonumber\\
    	& \quad + \frac{1}{1-\gamma}\mathbb{E}_{s\sim d_\rho}\sum_{a\in \cA }\pi_{w_{t+1}}(a|s)\left[ x_t^r \left(Q_r^{\pi_{w_t}}(s,a)- \bar{Q}^i_t(s,a)\right)  + x_t^c \left(Q_c^{\pi_{w_t}}(s,a)- \bar{Q}^c_t(s,a)\right) \right], \label{eq:lemma-A-3-imd1}
    \end{align}
where $(i)$ follows from the update rule in \Cref{lemma_s1}.
To continue, invoking the basic fact $D_{\text{KL}}(\cdot \mymid \cdot) \geq 0$, we have
\begin{align}
&x^r_t \left(V_r^{\pi_{w_{t+1}}}(\rho)-V_r^{\pi_{w_{t}}}(\rho) \right) + x^c_t   \left(V_c^{\pi_{w_{t+1}}}(\rho)-V_c^{\pi_{w_{t}}}(\rho) \right) \nonumber\\
    &\geq \frac{1}{\eta}\mathbb{E}_{s\sim d_\rho}\bigg(\log Z^{r,c,1}_t(s) - \frac{\eta}{1-\gamma}\mathbb{E}_{s\sim d_\rho} \left( x^r_t V_r^{\pi_{w_t}}(s) +  x^c_t V_c^{\pi_{w_t}}(s) \right) \nonumber \\
    & \quad +\frac{\eta}{1-\gamma}\sum_{a\in \cA }\pi_{w_t}(a|s)\left[ x_t^r \left|Q_r^{\pi_{w_t}}(s,a)- \bar{Q}^i_t(s,a)\right|  + x_t^c \left|Q_c^{\pi_{w_t}}(s,a)- \bar{Q}^c_t(s,a)\right| \right] \bigg) \nonumber\\
    	&\quad - \frac{1}{1-\gamma} \mathbb{E}_{s\sim d_\rho}\sum_{a\in \cA }\pi_{w_t}(a|s)\left[ x_t^r \left|Q_r^{\pi_{w_t}}(s,a)- \bar{Q}^i_t(s,a)\right|  + x_t^c \left|Q_c^{\pi_{w_t}}(s,a)- \bar{Q}^c_t(s,a)\right| \right] \nonumber\\
    	&\quad - \frac{1}{1-\gamma}\mathbb{E}_{s\sim d_\rho}\sum_{a\in \cA }\pi_{w_{t+1}}(a|s) \left[ x_t^r \left|Q_r^{\pi_{w_t}}(s,a)- \bar{Q}^i_t(s,a)\right|  + x_t^c \left|Q_c^{\pi_{w_t}}(s,a)- \bar{Q}^c_t(s,a)\right| \right]\nonumber\\
    	&\geq \frac{1-\gamma}{\eta}\mathbb{E}_{s\sim \rho}\bigg(\log Z^{r,c,1}_t(s) - \frac{\eta}{1-\gamma}\mathbb{E}_{s\sim d_\rho} \left( x^r_t V_r^{\pi_{w_t}}(s) +  x^c_t V_c^{\pi_{w_t}}(s) \right) \nonumber \\
    & \quad +\frac{\eta}{1-\gamma}\sum_{a\in \cA }\pi_{w_t}(a|s)\left[ x_t^r \left|Q_r^{\pi_{w_t}}(s,a)- \bar{Q}^i_t(s,a)\right|  + x_t^c \left|Q_c^{\pi_{w_t}}(s,a)- \bar{Q}^c_t(s,a)\right| \right] \bigg) \nonumber\\
    	&\quad - \frac{1}{1-\gamma} \mathbb{E}_{s\sim d_\rho}\sum_{a\in \cA }\pi_{w_t}(a|s)\left[ x_t^r \left|Q_r^{\pi_{w_t}}(s,a)- \bar{Q}^i_t(s,a)\right|  + x_t^c \left|Q_c^{\pi_{w_t}}(s,a)- \bar{Q}^c_t(s,a)\right| \right] \nonumber\\
    	&\quad - \frac{1}{1-\gamma}\mathbb{E}_{s\sim d_\rho}\sum_{a\in \cA }\pi_{w_{t+1}}(a|s) \left[ x_t^r \left|Q_r^{\pi_{w_t}}(s,a)- \bar{Q}^i_t(s,a)\right|  + x_t^c \left|Q_c^{\pi_{w_t}}(s,a)- \bar{Q}^c_t(s,a)\right| \right] \notag \\
      & =  x^r_t \mathsf{diff}^r_t + x^c_t \mathsf{diff}^c_t
\end{align}
where the penultimate inequality holds by the fact $\linf{ d_\rho/\rho}\geq 1-\gamma$ and the following claim which will be proved momentarily:
\begin{align}
    &\log Z^{r,c,1}_t(s) - \frac{\eta}{1-\gamma}\mathbb{E}_{s\sim d_\rho} \left( x^r_t V_r^{\pi_{w_t}}(s) +  x^c_t V_c^{\pi_{w_t}}(s) \right) \nonumber \\
    & \quad +\frac{\eta}{1-\gamma}\sum_{a\in \cA }\pi_{w_t}(a|s)\left[ x_t^r \left|Q_r^{\pi_{w_t}}(s,a)- \bar{Q}^i_t(s,a)\right|  + x_t^c \left|Q_c^{\pi_{w_t}}(s,a)- \bar{Q}^c_t(s,a)\right| \right] \geq 0. \label{eq:logZt}
\end{align}
So the rest of the proof is to verify \eqref{eq:logZt}. To do so, applying the definition of $Z^{r,c,1}_t$ in \eqref{eq:defn-4-Z}, we observe that
\begin{align}
   & \log Z^{r,c,1}_t(s) - \frac{\eta}{1-\gamma}\mathbb{E}_{s\sim d_\rho} \left( x^r_t V_r^{\pi_{w_t}}(s) +  x^c_t V_c^{\pi_{w_t}}(s) \right) \notag \\
   &= \log\left(\sum_{a\in\cA}\pi_{w_t}(a|s)\exp\left(\frac{\eta \left(x^r_t \bar{Q}^r_t(s,a) + x^c_t \bar{Q}^c_t(s,a) \right)}{(1-\gamma)} \right)\right) \notag \\
   &\quad - \frac{\eta}{1-\gamma}\mathbb{E}_{s\sim d_\rho} \left( x^r_t V_r^{\pi_{w_t}}(s) +  x^c_t V_c^{\pi_{w_t}}(s) \right) \notag \\
   & \geq \sum_{a\in\cA}\pi_{w_t}(a|s) \frac{\eta \left(x^r_t \bar{Q}^r_t(s,a) + x^c_t \bar{Q}^c_t(s,a) \right)}{(1-\gamma)} - \frac{\eta}{1-\gamma}\mathbb{E}_{s\sim d_\rho} \left( x^r_t V_r^{\pi_{w_t}}(s) +  x^c_t V_c^{\pi_{w_t}}(s) \right) \notag \\
   & = \sum_{a\in\cA}\pi_{w_t}(a|s) \frac{\eta}{1-\gamma}  \left[x^r_t  \left(\bar{Q}^r_t(s,a) - Q_r^{\pi_{w_t}}(s,a) \right) + x^c_t \left(\bar{Q}^c_t(s,a) - Q_c^{\pi_{w_t}}(s,a) \right) \right]  \notag \\
   &\quad + \sum_{a\in\cA}\pi_{w_t}(a|s) \frac{\eta}{1-\gamma} \left(x^r_t Q_r^{\pi_{w_t}}(s,a)+ x^c_t Q_c^{\pi_{w_t}}(s,a) \right) - \frac{\eta}{1-\gamma}\mathbb{E}_{s\sim d_\rho} \left( x^r_t V_r^{\pi_{w_t}}(s) +  x^c_t V_c^{\pi_{w_t}}(s) \right) \notag \\
   & =\sum_{a\in\cA}\pi_{w_t}(a|s) \frac{\eta}{1-\gamma}  \left[x^r_t  \left(\bar{Q}^r_t(s,a) - Q_r^{\pi_{w_t}}(s,a) \right) + x^c_t \left(\bar{Q}^c_t(s,a) - Q_c^{\pi_{w_t}}(s,a) \right) \right],
\end{align}
which complete the proof of the first line of \eqref{lemma_s2-3}. The second line of  \eqref{lemma_s2-3} can be proved analogously.

\subsubsection{Proof of Lemma~\ref{lemma_s5}} \label{proof:lemma_s5}

First, the first two statements \eqref{lemma_s5-1} and \eqref{lemma_s5-2} have already been established in \cite[Lemma~7]{xu2021crpo}. So we focus on \eqref{lemma_s2-3} throughout this subsection. 

Consider the first line of \eqref{lemma_s2-3}, applying Lemma~\eqref{lemma_s3} and following the pipeline for \eqref{eq:lemma-A-3-imd1} yields
\begin{align}
 &x^r_t \left(V_r^{\pi^*}(\rho)-V_r^{\pi_{w_t}}(\rho)  \right) + x^c_t   \left(V_c^{\pi^*}(\rho)-V_c^{\pi_{w_t}}(\rho)\right) \notag \\
 & =\frac{1}{\eta}\mathbb{E}_{s\sim d^\star}(D_{\text{KL}}(\pi^*||\pi_{w_t})-D_{\text{KL}}(\pi^*||\pi_{w_{t+1}})) + \frac{1}{\eta}\mathbb{E}_{s\sim d^\star}\log Z^{r,c,1}_t(s) \notag \\
 &\quad -  \frac{1}{1-\gamma}\mathbb{E}_{s\sim d^\star} \left( x^r_t V_r^{\pi_{w_t}}(s) +  x^c_t V_c^{\pi_{w_t}}(s) \right) \nonumber\\
      & \quad + \frac{1}{1-\gamma}\mathbb{E}_{s\sim d^\star}\sum_{a\in \cA }\pi^\star(a|s)\left[ x_t^r \left(Q_r^{\pi_{w_t}}(s,a)- \bar{Q}^i_t(s,a)\right)  + x_t^c \left(Q_c^{\pi_{w_t}}(s,a)- \bar{Q}^c_t(s,a)\right) \right] \notag \\
      & \leq \frac{1}{\eta}\mathbb{E}_{s\sim d^\star}(D_{\text{KL}}(\pi^*||\pi_{w_t})-D_{\text{KL}}(\pi^*||\pi_{w_{t+1}})) \notag \\
      &\quad + \frac{1}{\eta}\mathbb{E}_{s\sim d^\star}\bigg(\log Z^{r,c,1}_t(s) - \frac{\eta}{1-\gamma}\mathbb{E}_{s\sim d_\rho} \left( x^r_t V_r^{\pi_{w_t}}(s) +  x^c_t V_c^{\pi_{w_t}}(s) \right) \nonumber \\
    & \quad +\frac{\eta}{1-\gamma}\sum_{a\in \cA }\pi_{w_t}(a|s)\left[ x_t^r \left|Q_r^{\pi_{w_t}}(s,a)- \bar{Q}^i_t(s,a)\right|  + x_t^c \left|Q_c^{\pi_{w_t}}(s,a)- \bar{Q}^c_t(s,a)\right| \right] \bigg) \nonumber\\
    & \quad + \frac{1}{1-\gamma}\mathbb{E}_{s\sim d^\star}\sum_{a\in \cA }\pi^\star(a|s)\left[ x_t^r \left(Q_r^{\pi_{w_t}}(s,a)- \bar{Q}^i_t(s,a)\right)  + x_t^c \left(Q_c^{\pi_{w_t}}(s,a)- \bar{Q}^c_t(s,a)\right) \right] \notag \\
    & \overset{(i)}{\leq} \frac{1}{\eta}\mathbb{E}_{s\sim d^\star}(D_{\text{KL}}(\pi^*||\pi_{w_t})-D_{\text{KL}}(\pi^*||\pi_{w_{t+1}})) \notag \\
    &\quad + \frac{1}{1-\gamma} \left[x^r_t \left(V_r^{\pi_{w_{t+1}}}(\rho)-V_r^{\pi_{w_{t}}}(\rho) \right) + x^c_t   \left(V_c^{\pi_{w_{t+1}}}(\rho)-V_c^{\pi_{w_{t}}}(\rho) \right) \right] \notag \\
    & \quad + \frac{1}{(1-\gamma)^2} \mathbb{E}_{s\sim d_\rho}\sum_{a\in \cA }\pi_{w_t}(a|s)\left[ x_t^r \left|Q_r^{\pi_{w_t}}(s,a)- \bar{Q}^i_t(s,a)\right|  + x_t^c \left|Q_c^{\pi_{w_t}}(s,a)- \bar{Q}^c_t(s,a)\right| \right] \nonumber\\
      &\quad + \frac{1}{(1-\gamma)^2}\mathbb{E}_{s\sim d_\rho}\sum_{a\in \cA }\pi_{w_{t+1}}(a|s) \left[ x_t^r \left|Q_r^{\pi_{w_t}}(s,a)- \bar{Q}^i_t(s,a)\right|  + x_t^c \left|Q_c^{\pi_{w_t}}(s,a)- \bar{Q}^c_t(s,a)\right| \right] \notag \\
      & \quad + \frac{1}{1-\gamma}\mathbb{E}_{s\sim d^\star}\sum_{a\in \cA }\pi^\star(a|s)\left[ x_t^r \left(Q_r^{\pi_{w_t}}(s,a)- \bar{Q}^i_t(s,a)\right)  + x_t^c \left(Q_c^{\pi_{w_t}}(s,a)- \bar{Q}^c_t(s,a)\right) \right] \notag \\
      & \leq \frac{1}{\eta}\mathbb{E}_{s\sim d^\star}(D_{\text{KL}}(\pi^*||\pi_{w_t})-D_{\text{KL}}(\pi^*||\pi_{w_{t+1}})) + \frac{ 2 v_{\max} }{(1-\gamma)^2}\|w_{t+1} - w_t \|_2  \notag \\
      & \quad + \frac{3}{(1-\gamma)^2} \left[  x_t^r \left\|Q_r^{\pi_{w_t}}(s,a)- \bar{Q}^i_t(s,a)\right\|_2  + x_t^c \left\|Q_c^{\pi_{w_t}}(s,a)- \bar{Q}^c_t(s,a)\right\|_2  \right] \notag \\
      & \leq \frac{1}{\eta}\mathbb{E}_{s\sim d^\star}(D_{\text{KL}}(\pi^*||\pi_{w_t})-D_{\text{KL}}(\pi^*||\pi_{w_{t+1}})) + \frac{ 2 \eta v_{\max}^2 |\cS| |\cA| }{(1-\gamma)^3}  \notag \\
       & \quad + \frac{3(1+ \eta v_{\max})}{(1-\gamma)^2} \left[  x_t^r \left\|Q_r^{\pi_{w_t}}(s,a)- \bar{Q}^i_t(s,a)\right\|_2  + x_t^c \left\|Q_c^{\pi_{w_t}}(s,a)- \bar{Q}^c_t(s,a)\right\|_2  \right],
\end{align}
where (i) holds by applying Lemma~\ref{lemma_s2}, the penultimate inequality holds by the Lipschitz property of $V_r^{\pi_{w}}(\rho)$ and $V_c^{\pi_{w}}(\rho)$, and the last inequality can be verified following the last line in the proof of \cite[Lemma~7]{xu2021crpo}.

Similarly, we have
\begin{align}
&y^r_t \left(V_r^{\pi^*}(\rho)-V_r^{\pi_{w_t}}(\rho)  \right) + y^c_t   \left(V_c^{\pi^*}(\rho)-V_c^{\pi_{w_t}}(\rho)\right) \notag \\
& \leq \frac{1}{\eta}\mathbb{E}_{s\sim d^\star}(D_{\text{KL}}(\pi^*||\pi_{w_t})-D_{\text{KL}}(\pi^*||\pi_{w_{t+1}})) + \frac{ 2 \eta v_{\max}^2 (y_t^r + y_t^c) |\cS| |\cA| }{(1-\gamma)^3}  \notag \\
       & \quad + \frac{3(1+ \eta v_{\max})}{(1-\gamma)^2} \left[  x_t^r \left\|Q_r^{\pi_{w_t}}(s,a)- \bar{Q}^i_t(s,a)\right\|_2  + x_t^c \left\|Q_c^{\pi_{w_t}}(s,a)- \bar{Q}^c_t(s,a)\right\|_2  \right],
\end{align}
which complete the proof.

\subsubsection{Proof of Lemma~\ref{lemma_perform_gap}}\label{proof:lemma_perform_gap}

Invoking Lemma~\eqref{lemma_s5} for the four modes when $t\in\nr$, $t\in \nsoft^{\mathsf{no}}$,  $t\in \nsoft^{\mathsf{conf}}$, and  $t\in \nc$ and summing up them together for $t=1,2,\cdots, T$ yields
\begin{align}
&\eta\sum_{t\in\nr}(V_r^{\pi^*}(\rho)-V_r^{\pi_{w_t}}(\rho)) + \eta\sum_{t\in\nsoft^{\mathsf{no}}}  \left[  x_r^t(V_r^{\pi_{w_t}}(\rho) - V_r^{\pi^*}(\rho))  + x_c^t(V_c^{\pi^*}(\rho)-V_c^{\pi_{w_t}}(\rho))  \right] \notag \\
  & \quad +\eta\sum_{t\in\nsoft^{\mathsf{conf}}} \left[ y_r^t(V_r^{\pi_{w_t}}(\rho) - V_r^{\pi^*}(\rho))  + y_c^t(V_c^{\pi^*}(\rho)-V_c^{\pi_{w_t}}(\rho)) \right]  + \eta\sum_{t\in\nc}(V_r^{\pi_{w_t}}(\rho) - V_r^{\pi^*}(\rho))\nonumber\\
  & \leq  \mathbb{E}_{s\sim d^*} D_{\text{KL}}(\pi^*||\pi_{w_0}) + \frac{2\eta^2 v^2_{\max} |\cS| |\cA| }{(1-\gamma)^3} \Big[(T- |\nsoft^{\mathsf{conf}}|) + \sum_{t\in\nsoft^{\mathsf{conf}} }(y_t^c +y_t^r) \Big] + \frac{3\eta(1+ \eta v_{\max})}{(1-\gamma)^2} \epsilon_{\mathsf{pi}}, \label{lemma_s5-1}
\end{align}
where $ \epsilon_{\mathsf{pi}}$ is defined in \eqref{eq:defn-pi-error-term}.

Then we consider several different modes separately:

\begin{itemize}
  \item When $t\in\nc$: we have $\overline{V}_t^c > b + h^+$, which indicates that
\begin{align}
&V_c^{\pi_{w_t}}(\rho) - V_c^{\pi^*}(\rho) \notag\\
&= \overline{V}_t^c(\rho) - V_c^{\pi^*}(\rho)  + V_c^{\pi_{w_t}}(\rho)  - \overline{V}_t^c \geq h^+ - |V_c^{\pi_{w_t}}(\rho)  - \overline{V}_t^c| \geq h^+ - \|Q_c(\pi_{w_t})  - \overline{Q}_t^c\|_2.
\end{align}
  \item when $t\in \nsoft$: $\overline{V}_t^c \geq b - h^-$, one has
  \begin{align}
  &V_c^{\pi_{w_t}}(\rho) - V_c^{\pi^*}(\rho) \notag \\
  &= \overline{V}_t^(\rho) - V_c^{\pi^*}(\rho)  + V_c^{\pi_{w_t}}(\rho)  - \overline{V}_t^c \geq -h^- - |V_c^{\pi_{w_t}}(\rho)  - \overline{V}_t^c| \geq -h^- - \|Q_c(\pi_{w_t})  - \overline{Q}_t^c\|_2.
  \end{align}
\end{itemize} 
Summing up the above two modes and plugging them back to \eqref{lemma_s5-1} leads to

\begin{align}
&\eta\sum_{t\in\nr}(V_r^{\pi^*}(\rho)-V_r^{\pi_{w_t}}(\rho)) + \eta\sum_{t\in\nsoft^{\mathsf{no}}}   x_r^t(V_r^{\pi_{w_t}}(\rho) - V_r^{\pi^*}(\rho))   +\eta\sum_{t\in\nsoft^{\mathsf{conf}}}  y_r^t(V_r^{\pi_{w_t}}(\rho) - V_r^{\pi^*}(\rho)) \notag \\
& \quad   +  \eta h^+ |\nc|  -  \eta h^- \sum_{t\in\nsoft^{\mathsf{no}}} x_r^t -  \eta h^- \sum_{t\in\nsoft^{\mathsf{conf}}}  y_r^t  \nonumber\\
  & \leq  \mathbb{E}_{s\sim d^*} D_{\text{KL}}(\pi^*||\pi_{w_0}) + \frac{2\eta^2 v^2_{\max} |\cS| |\cA| }{(1-\gamma)^3} \Big[(T- |\nsoft^{\mathsf{conf}}|) + \sum_{t\in\nsoft^{\mathsf{conf}} }(y_t^c +y_t^r) \Big] + \frac{3\eta(1+ \eta v_{\max})}{(1-\gamma)^2} \epsilon_{\mathsf{pi}} \label{lemma_s5-2}
\end{align}

To continue, invoking \cite[Lemma~2]{xu2021crpo} leads to when the iterations of policy evaluation obey $T_{\mathsf{pi}} = \widetilde{O}\big(\frac{T \log(\frac{|\cS||\cA|}{\delta})}{(1-\gamma)^3|\cS||\cA|} \big)$. With probability at least $1-\delta$, we have for all $1\leq t\leq T$,
\begin{align}
\left\|Q_r^{\pi_{w_t}}-\bar{Q}^r_t \right\|_2 &\leq \frac{1}{2}\sqrt{\frac{(1-\gamma)|\cS| |\cA|}{T}} \notag \\
\quad \text{and} \quad \left\|Q_c^{\pi_{w_t}}-\bar{Q}^c_t \right\|_2 &\leq  \frac{1}{2}\sqrt{\frac{(1-\gamma)|\cS| |\cA|}{T}} \leq \sqrt{\frac{(1-\gamma)|\cS| |\cA|}{T}}.
\end{align}
Combining this fact with the definition in \eqref{eq:defn-pi-error-term} directly leads to 
\begin{align}
\epsilon_{\mathsf{pi}} &=  \sum_{t\in\nr} \left\|Q_r^{\pi_{w_t}}-\bar{Q}^r_t \right\|_2 + \sum_{t\in\nc} \left\|Q_c^{\pi_{w_t}}-\bar{Q}^c_t \right\|_2 + \sum_{t\in\nsoft^{\mathsf{no}}}  \left( x^r_t \|Q_r^{\pi_{w_t}}-\bar{Q}^r_t\|_2 + x^c_t \|Q_c^{\pi_{w_t}}-\bar{Q}^c_t\|_2  \right) \notag \\
&\quad + \sum_{t\in\nsoft^{\mathsf{conf}}}  \left( y^r_t \|Q_r^{\pi_{w_t}}-\bar{Q}^r_t\|_2 + y^c_t \|Q_c^{\pi_{w_t}}-\bar{Q}^c_t\|_2  \right) \notag \\
    &\leq \sqrt{(1-\gamma)|\cS| |\cA|T}. \label{eq:bound-of-epsilon-pi}
\end{align}
Plugging \eqref{eq:bound-of-epsilon-pi} back into \eqref{lemma_s5-2} complete the proof:
\begin{align}
&\eta\sum_{t\in\nr}(V_r^{\pi^*}(\rho)-V_r^{\pi_{w_t}}(\rho)) + \eta\sum_{t\in\nsoft^{\mathsf{no}}}   x_r^t(V_r^{\pi_{w_t}}(\rho) - V_r^{\pi^*}(\rho))   +\eta\sum_{t\in\nsoft^{\mathsf{conf}}}  y_r^t(V_r^{\pi_{w_t}}(\rho) - V_r^{\pi^*}(\rho)) \notag \\
& \quad   +  \eta h^+ |\nc|  -  \eta h^- \sum_{t\in\nsoft^{\mathsf{no}}} x_r^t -  \eta h^- \sum_{t\in\nsoft^{\mathsf{conf}}}  y_r^t  \nonumber\\
  & \leq  \mathbb{E}_{s\sim d^*} D_{\text{KL}}(\pi^*||\pi_{w_0}) + \frac{2\eta^2 v^2_{\max} |\cS| |\cA| }{(1-\gamma)^3} \Big[(T- |\nsoft^{\mathsf{conf}}|) + \sum_{t\in\nsoft^{\mathsf{conf}} }(y_t^c +y_t^r) \Big] + \frac{3\eta(1+ \eta v_{\max})}{(1-\gamma)^2} \epsilon_{\mathsf{pi}} \notag \\
  & \leq \mathbb{E}_{s\sim d^*} D_{\text{KL}}(\pi^*||\pi_{w_0}) + \frac{4\eta^2 v^2_{\max} |\cS| |\cA| T}{(1-\gamma)^3}  + \frac{3\eta(1+ \eta v_{\max}) \sqrt{|\cS| |\cA|T}}{(1-\gamma)^{1.5}}  
  \end{align}
since $(y_t^c +y_t^r) \leq 2$.

\subsubsection{Proof of Lemma~\ref{lemma_max_reward}}\label{proof:lemma_max_reward}

The first claim is easily verified since if $\nr \cup \nsoft = \emptyset$, then $|\nc| = T$. Applying Lemma~\ref{lemma_perform_gap} gives
\begin{align}
& \eta h^+ |\nc|  = \eta h^+ T\leq \mathbb{E}_{s\sim d^*} D_{\text{KL}}(\pi^*||\pi_{w_0}) + \frac{4\eta^2 v^2_{\max} |\cS| |\cA| T}{(1-\gamma)^3}  + \frac{3\eta(1+ \eta v_{\max}) \sqrt{|\cS| |\cA|T}}{(1-\gamma)^{1.5}}, 
  \end{align}
  which contradict with the assumption \eqref{eq: m6}. So we have $\nr \cup \nsoft \neq \emptyset$.

Then the rest of the proof focus on the second claim. Towards this, if 
\begin{align}
         &\sum_{t\in\nr}(V_r^{\pi^*}(\rho)-V_r^{\pi_{w_t}}(\rho)) + \sum_{t\in\nsoft^{\mathsf{no}}}  x_r^t(V_r^{\pi^*}(\rho)-V_r^{\pi_{w_t}}(\rho)) +\sum_{t\in\nsoft^{\mathsf{conf}}}  y_r^t(V_r^{\pi^*}(\rho)-V_r^{\pi_{w_t}}(\rho))  \leq 0,
     \end{align}
then the condition (b) holds. Otherwise, applying Lemma~\ref{lemma_perform_gap} yields
\begin{align}
&\eta h^+ |\nc|  -  \eta h^- \sum_{t\in\nsoft^{\mathsf{no}}} x_r^t -  \eta h^- \sum_{t\in\nsoft^{\mathsf{conf}}}  y_r^t  \notag \\
  & \leq \mathbb{E}_{s\sim d^*} D_{\text{KL}}(\pi^*||\pi_{w_0}) + \frac{4\eta^2 v^2_{\max} |\cS| |\cA| T}{(1-\gamma)^3}  + \frac{3\eta(1+ \eta v_{\max}) \sqrt{|\cS| |\cA|T}}{(1-\gamma)^{1.5}}  
\end{align}
Then if  $| \nr \cup \nsoft| < T/2$, we have $|\nc| \geq \frac{T}{2}$ and thus
\begin{align}
 &\frac{\eta h^+ T}{2} - \eta h^-T  \leq \eta h^+ |\nc| - 2(T -|\nc| ) \eta h^- \leq \eta h^+ |\nc|  -  \eta h^- \sum_{t\in\nsoft^{\mathsf{no}}} 1 -  \eta h^- \sum_{t\in\nsoft^{\mathsf{conf}}}  2  \notag \\
 &\leq \eta h^+ |\nc|  -  \eta h^- \sum_{t\in\nsoft^{\mathsf{no}}} x_r^t -  \eta h^- \sum_{t\in\nsoft^{\mathsf{conf}}}  y_r^t  \notag \\
  & \leq \mathbb{E}_{s\sim d^*} D_{\text{KL}}(\pi^*||\pi_{w_0}) + \frac{4\eta^2 v^2_{\max} |\cS| |\cA| T}{(1-\gamma)^3}  + \frac{3\eta(1+ \eta v_{\max}) \sqrt{|\cS| |\cA|T}}{(1-\gamma)^{1.5}},
\end{align} 
which yields
\begin{align}
\frac{\eta h^+ T}{2}\leq \mathbb{E}_{s\sim d^*} D_{\text{KL}}(\pi^*||\pi_{w_0}) + \frac{4\eta^2 v^2_{\max} |\cS| |\cA| T}{(1-\gamma)^3}  + \frac{3\eta(1+ \eta v_{\max}) \sqrt{|\cS| |\cA|T}}{(1-\gamma)^{1.5}}
\end{align}
that is contradict with the assumption \eqref{eq: m6}.

\newpage
\section{Practical Algorithm}
\label{apendix-algorithm:espo}

\begin{algorithm}[htbp!] 
\caption{\textbf{ESPO}: Improving Efficiency of Safe Policy Optimization. }
\label{alg:ESPO-framework-gradually}
\begin{algorithmic}[1]
\STATE \textbf{Inputs}: initial policy with parameters $\pi_{w_0}$, positive slack value $h^+_t \in [0, +\infty)$, negative slack value $h^-_t \in (-\infty, 0] $, the cost value as $V^{\pi_{w_0}}_{c_{t}}(\rho)$ at step $t$, the cost limit as $b$, positive sample penalty $\zeta^+ \in [0, +\infty)$, negative sample penalty $\zeta^- \in (-1 ,0]$, gradient angles $\theta_{r,c}$, sample size $X$.
\FOR{$t = 0, \dots,T-1$}
\IF{$h^+$ iteratively decreases}
\STATE $h^+_t \leftarrow h^+_t - h^+_t/T$ 
\ENDIF
\IF{$h^-_t$ iteratively increases}
\STATE $h^-_t \leftarrow h^-_t -  h^-_t/T$ 
\ENDIF
\IF{$\zeta^+_t$ iteratively decreases}
\STATE $\zeta^+_t \leftarrow \zeta^+_t - \zeta^+_t/T$ 
\ENDIF
\IF{$\zeta^-_t$ iteratively increases}
\STATE $\zeta^-_t \leftarrow \zeta^-_t -  \zeta^-_t/T$ 
\ENDIF
\IF{$V^{\pi_{w_t}}_{c_{t}}(\rho) > (h^+_t + b)$ } \label{line:update-cost}
\STATE Adjust sample size $X_t$ with Equation (\ref{eq:adjust-sample-size-negative}). 
\STATE  Update policy ${\pi_{w_t}}$  to ensure safety with Equation~(\ref{eq:update-policy-parameters-for-safety}). 
\ELSIF{$(h^-_t + b) \leq V^{\pi_{w_t}}_{c_{t}}(\rho) \leq (h^+_t + b)$}  \label{line:update-soft}
\IF{For gradients $\g_r$ and $\g_c$, $\theta_{r,c}\leq 90^{\circ}$} \label{line:update-soft1}
\STATE Adjust sample size $X_t$ with Equation (\ref{eq:adjust-sample-size-negative}).
\STATE Update the policy ${\pi_{w_t}}$ with Equation (\ref{eq:projection-gradient-one}).
\ELSE \label{line:update-soft2}
\STATE Adjust sample size $X_t$ with Equation (\ref{eq:adjust-sample-size-positive}).
\STATE Update the policy ${\pi_{w_t}}$  with Equation (\ref{eq:projection-gradient-two}).
\ENDIF
\ELSIF{$V^{\pi_{w_t}}_{c_{t}}(\rho) < (h^-_t + b)$} \label{line:update-reward}
\STATE Adjust sample size $X_t$ with Equation (\ref{eq:adjust-sample-size-negative}).
\STATE Update policy ${\pi_{w_t}}$ to maximize reward $V^{\pi_{w_t}}_{r,t}(\rho)$ with Equation~(\ref{eq:update-policy-parameters-for-reward}). 
\ENDIF
\STATE Policy evaluation under $\pi_{w_t}$ involves estimating the values of rewards and constraints. 
\STATE Sample pairs $(s_j, a_j)$ from the buffer $\mathcal{B}_t$ according to the distribution $\rho \cdot \pi_{w_t}$ and compute the estimation $V^{\pi_{w_t}}_{r,t}(\rho)$ and $V^{\pi_{w_t}}_{c_{t}}(\rho)$, where $s_j$ represents the state and $a_j$ represents the action, $j$ is is the index for the sampled pairs.
\ENDFOR
\STATE \textbf{Outputs}: ${\pi_{w_t}}$.
\end{algorithmic}
\end{algorithm}

\newpage
\section{Ablation Experiments}
\label{appendix-ablation:experiments}
To further evaluate the effectiveness of our method, we conduct a series of ablation experiments regarding different cost limits, different sample sizes, and update style analysis. These ablation experiments are instrumental in providing a deeper insight into our method, shedding light on its strengths and potential areas for improvement. Through this rigorous evaluation, we aim to substantiate the adaptability of our method, ensuring its applicability and effectiveness in a wide range of safe RL scenarios. 

\textbf{Different Cost Limits:} As depicted in Figures~\ref{fig:epcrpo-pcrpo-abaltion-experiments-walker-sample-limit}(a)-(c), we evaluate our method on the \textit{SafetyWalker2d-v4} tasks under different cost limits, maintaining identical sample manipulation settings. Our method exhibits similar reward performance at cost limits of $30$ and $40$. This similarity in performance is attributed to our method's capacity to dynamically adjust the sample size, a critical factor in optimizing for reward maximization while ensuring safety. Moreover, the training time for the task with a cost limit of $30$ is $63$ minutes, slightly longer than the $58$ minutes required for the limit of $40$. This observation can be explained by the increased challenge and larger conflict between reward and safety presented at the lower constraint limit of $30$, necessitating a more significant number of samples for effective optimization. Notably, our method can ensure safety across these various constraint-limited tasks and outperforms CRPO in reward performance and training efficiency.

\textbf{Different Sample Sizes:}
As illustrated in Figures~\ref{fig:epcrpo-pcrpo-abaltion-experiments-walker-sample-limit}(d)-(f), we conduct an assessment of our method on the \textit{SafetyWalker2d-v4} tasks, exploring different sample sizes while keeping the cost limit settings constant. In these experiments, we compare the outcomes of using sample sizes set at $1.2X$ and $0.5X$   against $1.0X$ and $0.5X$. Notably, both settings successfully ensured safety. On the one hand, the reward performance achieved with a sample size of $1.2X$ and $0.5X$ surpasses that of $1.0X$ and $0.5X$, indicating the effectiveness of larger sample size in enhancing performance; on the other hand, the training time for the sample size of $1.2X$ and $0.5X$ is recorded at 67 minutes, which is longer than the 58 minutes required for the sample size of $1.0X$ and $0.5X$. Despite this increased training time, it remains less than the 71 minutes recorded for CRPO. These results underscore the potential benefits of utilizing more samples to improve performance in safe RL tasks. Importantly, in both sample manipulation settings, our method ensures safety and outperforms CRPO in terms of reward performance and training efficiency.

\begin{figure}[tb!]
    \centering
    \subcaptionbox*{(a)}{
  \includegraphics[width=0.31\linewidth]{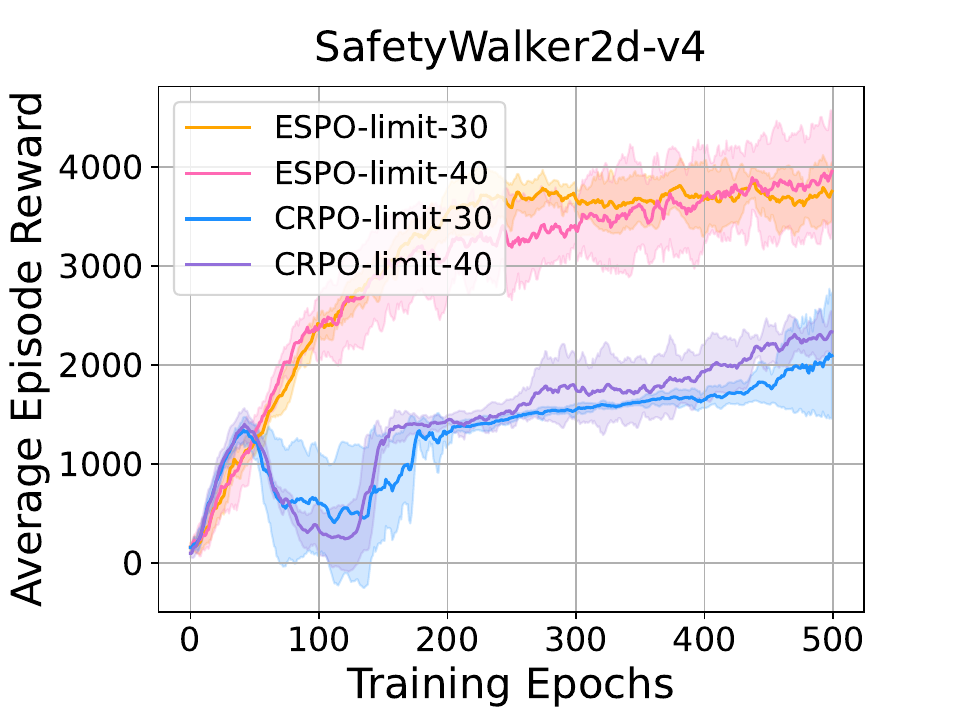}
  } 
      \subcaptionbox*{(b)}{
  \includegraphics[width=0.31\linewidth]{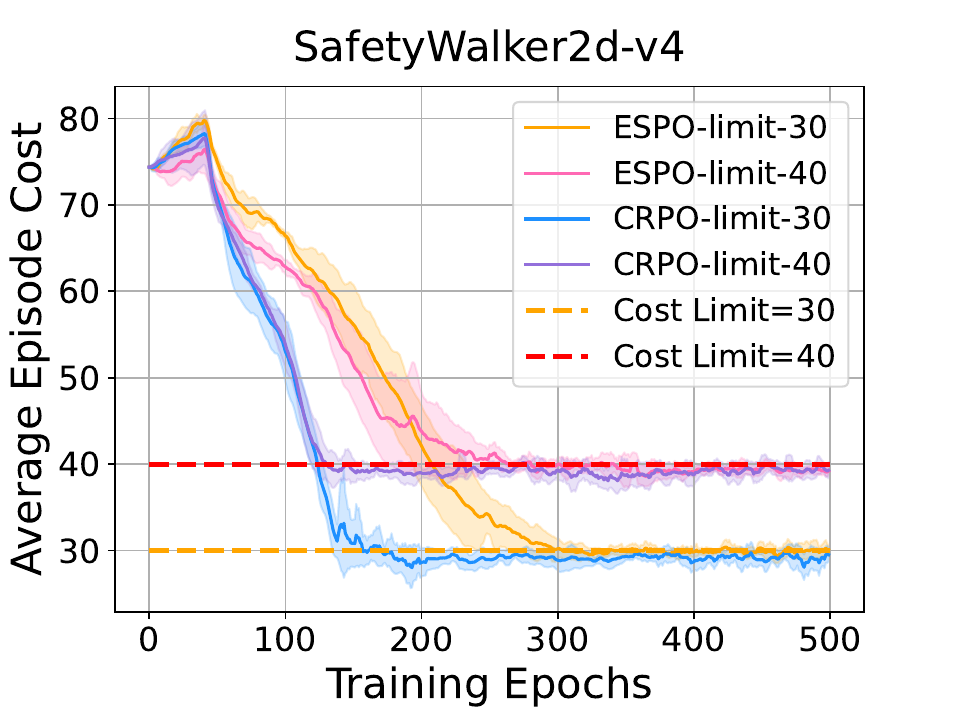}
  } 
   \subcaptionbox*{(c)}{
 	\includegraphics[width=0.31\linewidth]{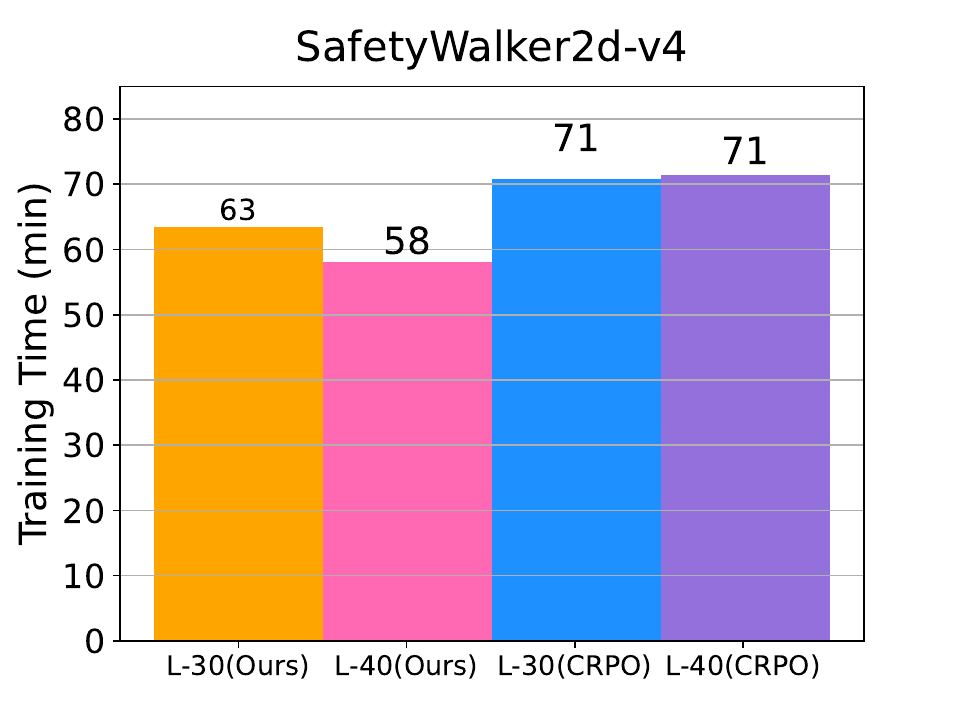}
  }
       \subcaptionbox*{(d)}{
  \includegraphics[width=0.31\linewidth]{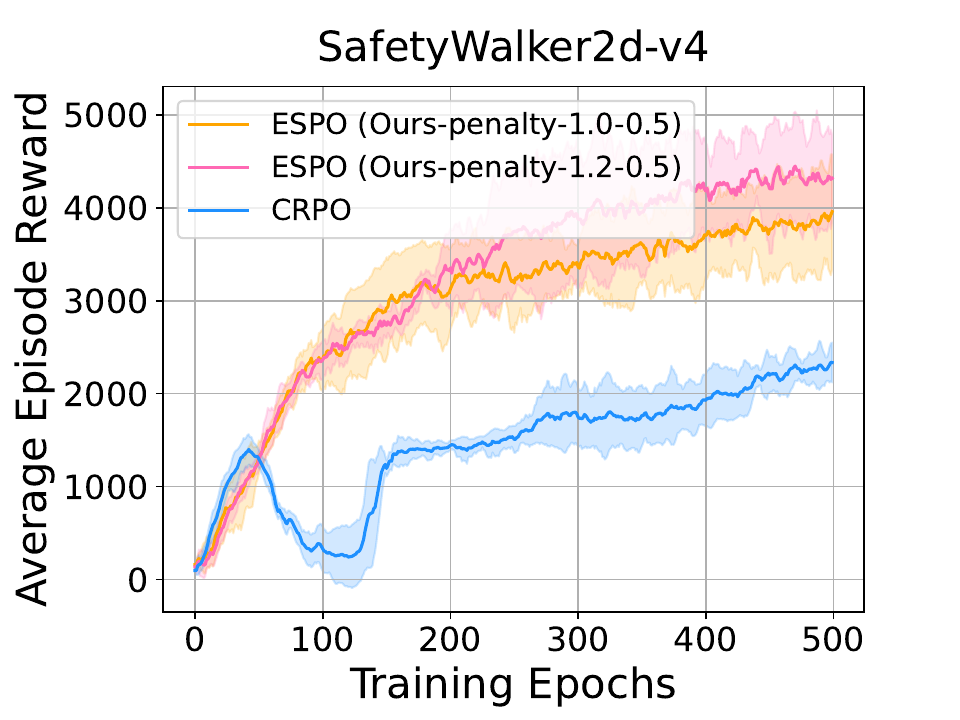}
  }
      \subcaptionbox*{(e)}{
  \includegraphics[width=0.31\linewidth]{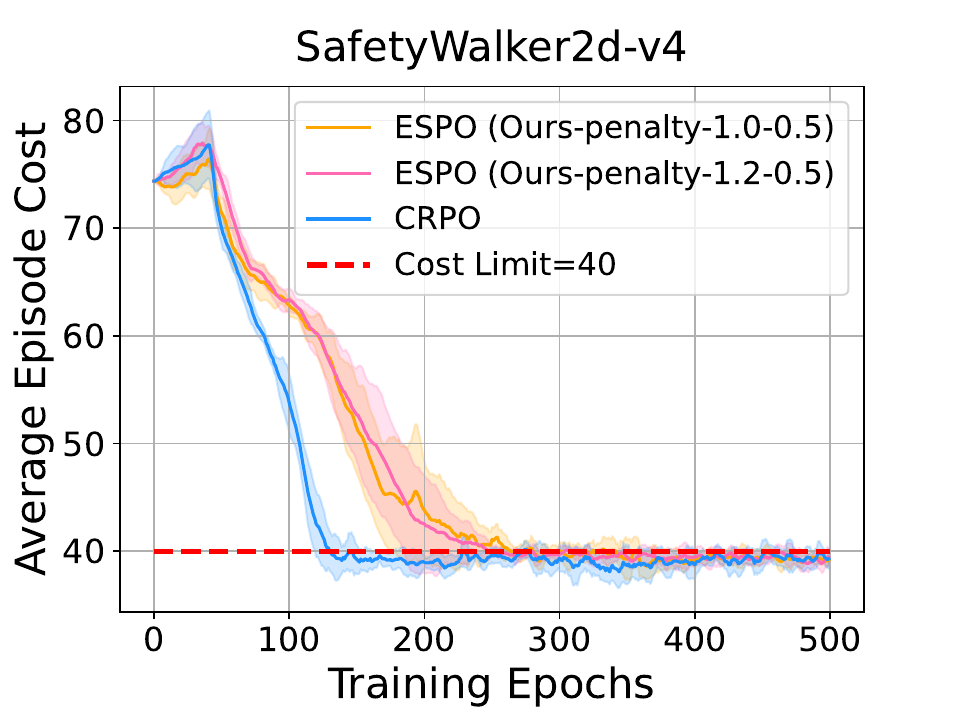}
  } 
  \subcaptionbox*{(f)}{
 	\includegraphics[width=0.31\linewidth]{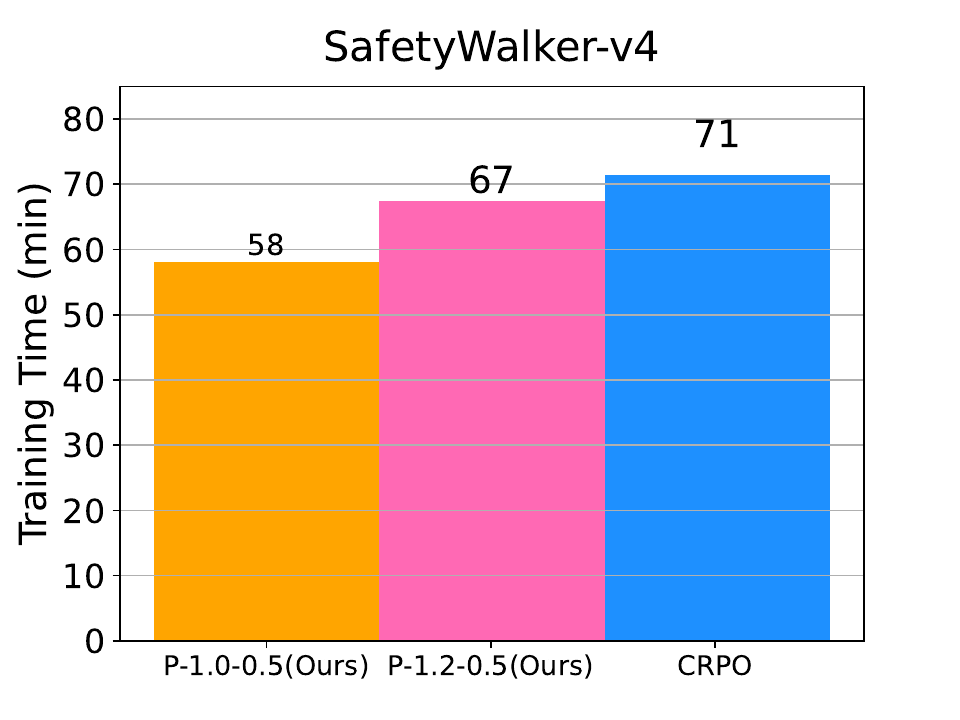}
  }
  \caption{Ablation experiments: Experiments of different cost limits and sample sizes. }
  \vspace{-15pt}
 		\label{fig:epcrpo-pcrpo-abaltion-experiments-walker-sample-limit} 
\end{figure}

\textbf{Update Style Analysis}
The analysis of update style in our experiments, as illustrated in Table~\ref{table:update-style-analysis} for the \textit{SafetyHumanoidStandup-v4} task, offers insightful contrasts between our algorithm, ESPO, and CRPO method. In these experiments, we observe the following update patterns: \textbf{1)} \textit{CRPO's Update Style}: CRPO's approach to optimization involved 178 updates focused solely on reward optimization and 322 updates dedicated to cost optimization. This distribution suggests a significant emphasis on cost optimization, indicating that CRPO struggles to manage safety constraints. \textbf{2)} \textit{ESPO's Update Style}: ESPO, on the other hand, showed a more dynamic update pattern. It conducted 298 updates focused on reward optimization, indicating a more efficient approach toward maximizing rewards. However, unlike CRPO, ESPO engages 199 updates characterized by simultaneous optimization of both reward and cost. Additionally, 3 updates focused exclusively on optimizing cost. By optimizing both aspects simultaneously, ESPO demonstrates a novel method of navigating the complex landscape of safe RL, which may contribute to its overall efficiency and effectiveness as observed in the task performance.

\begin{table}[ht]
\centering
\begin{tabular}{c|c|c|c}
\hline
\diagbox[dir=NW,width=7em,height=2em]{\hspace{-7pt}Algorithm}{Update\hspace{-3pt}} & Reward & Reward \& Cost & Cost \\
\hline
ESPO (ours) & 298 &199   & 3\\
CRPO & 178 & / &  322\\
\hline
\end{tabular}
\caption{Update style analysis. The \textit{Reward update} represents the number of times the algorithm updates its policy primarily focusing on maximizing rewards, the \textit{Cost update} refers to the number of cost updates where the safety violation happens and the primary focus is on minimizing costs, the \textit{Reward \& Cost update} corresponds to the number of times the optimization of reward and cost updates are executed simultaneously.
}
\label{table:update-style-analysis}
\end{table}

\section{Detailed Experiments}
\label{appendix:experiments-espo}

\subsection{Additional Experiments}

The results of our experimental evaluations on the \textit{SafetyHumanoidStandup-v4} task, as depicted in Figures~\ref{fig-appendix:epcrpo-crpo-Walker-reacher-v4-16-p-training-seperate}(a)-(c), show the superior performance of our algorithm, ESPO, in comparison with SOTA primal baselines, CRPO and PCRPO. Key observations from these results include: ESPO demonstrates a remarkable ability to outperform CRPO and achieve comparable performance with PCRPO in reward while ensuring safety. Another notable aspect of ESPO's performance is that our method required less time to reach convergence than these baselines. This efficiency is crucial in practical applications where time and computational resources are often limited. ESPO requires only approximately 76.5\% and 74.01\% of the training time that CRPO and PCRPO need, respectively, to achieve superior performance. Specifically, as depicted in Table \ref{table:sample-step-analysis-safety-mujoco}, while CRPO and PCRPO utilize 8 million samples for the \textit{SafetyHumanoidStandup-v4} task, our method requires only 5.1 million samples for the same task. This reduction in samples is a significant advantage, highlighting ESPO's effectiveness in learning efficiency.

These results from the \textit{SafetyHumanoidStandup-v4} task further demonstrate the effectiveness of our method in safe RL environments, showcasing its potential as a reliable and efficient solution for optimizing rewards while adhering to safety constraints.

\begin{figure}[h!]
    \centering
 \subcaptionbox*{(a)}{
     \includegraphics[width=0.31\linewidth]{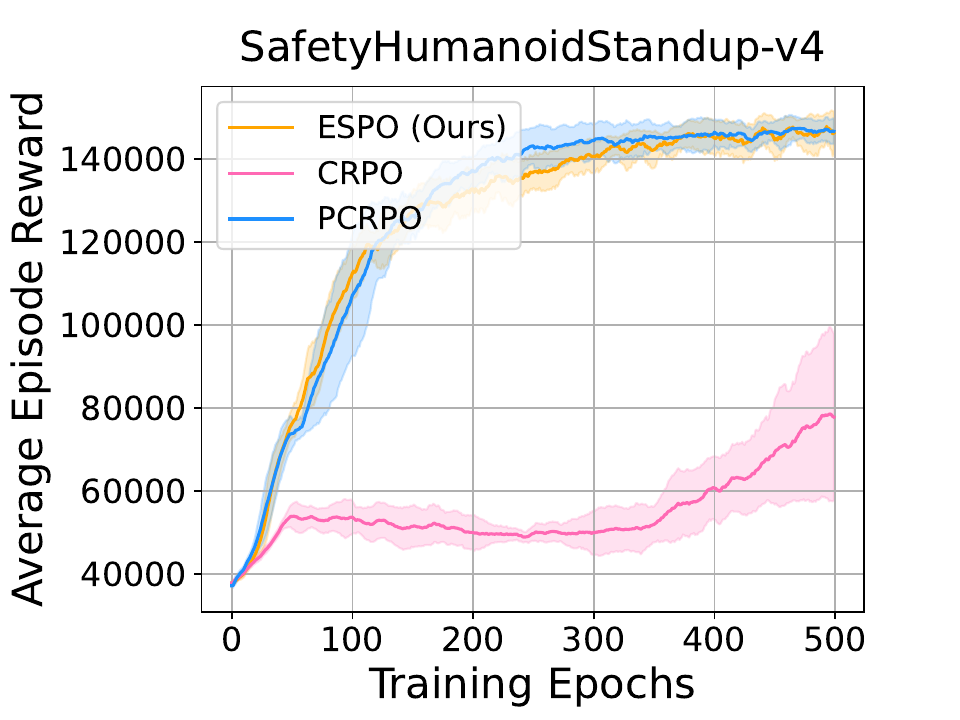}
     }
     \subcaptionbox*{(b)}{
     \includegraphics[width=0.31\linewidth]{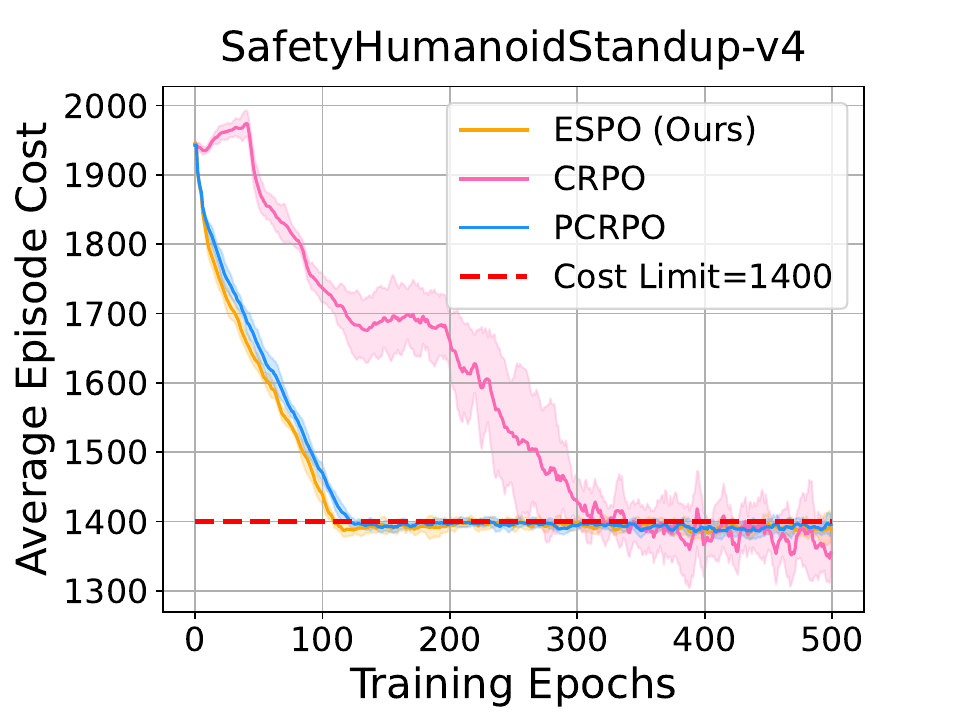}
     }
      \subcaptionbox*{(c)}{
  \includegraphics[width=0.31\linewidth]{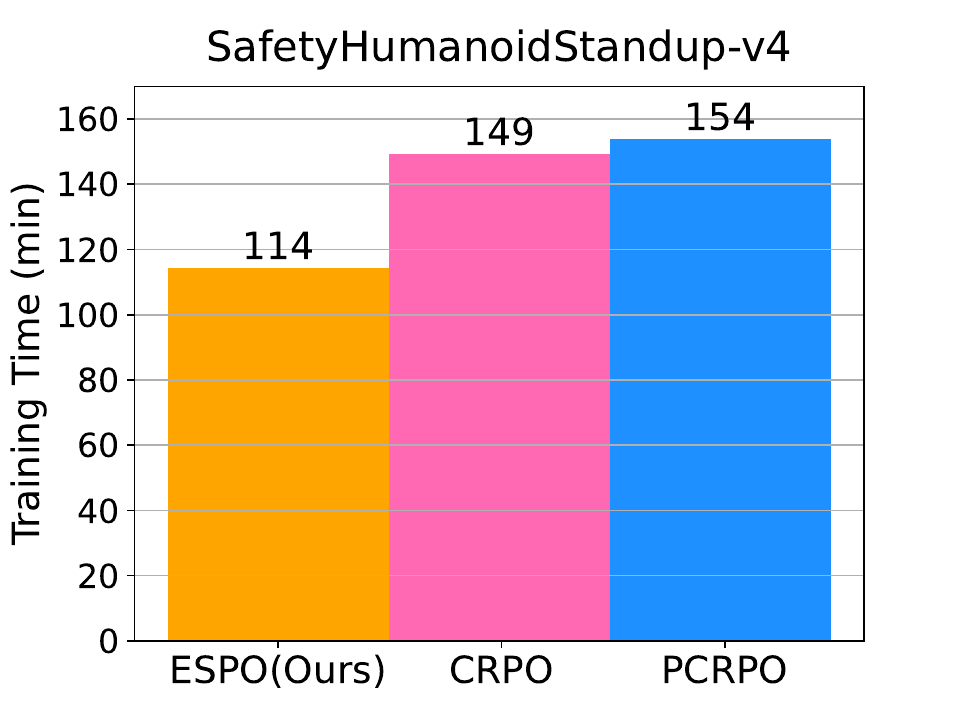}
  }
  \caption{Performance comparisons of safe RL methods on \textit{SafetyHumanoidStandup-v4} tasks.  
  }
 		\label{fig-appendix:epcrpo-crpo-Walker-reacher-v4-16-p-training-seperate} 
\end{figure}

\subsection{Experiment Settings}
\label{appendix-experiment:experiment-settings}

The \textit{Safety-MuJoCo} benchmark is primarily used for primal-based methods, while the \textit{Omnisafe} benchmark is mainly utilized for primal-dual based methods. Moreover, the \textit{Safety-MuJoCo} benchmark is different from the \textit{Omnisafe} benchmark in safety settings. \textit{Safety-MuJoCo} encompasses broad safety constraints including both velocity limits and overall robot health. Accounting for multiple factors requires algorithms to consider both speed regulation and broader integrity.  In contrast, the \textit{Omnisafe} benchmark primarily focuses on robot velocity as the critical constraint. For instance, a cost of 1 is emitted whenever the robot's velocity exceeds a predefined limit. This singular focus on velocity provides a more targeted, yet still challenging, evaluation context. Through these experimental setups, we aim to comprehensively assess the effectiveness of our method in varying scenarios, ranging from the multifaceted safety constraints in \textit{Safety-MuJoCo} to the velocity-centric constraints in \textit{Omnisafe}. For more details, see \cite{gu2023pcrpo} and \cite{ji2023omnisafe}. To ensure a fair evaluation of our method's effectiveness, we conducted all experiments using at least three different random seeds.


The key parameters used in the tasks of \textit{Safety-MuJoCo} benchmarks are provided in Table~\ref{table:algorithm-hyparameter-experiments-safety-mujoco-benchmark}, Table~\ref{table:algorithm-sample-size-benchmark} and Table~\ref{table:algorithm-cost-slack-benchmark}. Note, to encourage more learning exploration, we initiate the optimization of safety after
40 epochs. Experiments in the tasks of \textit{Safety-MuJoCo} benchmarks are conducted on a Ubuntu 20.04.3 LTS system, with an AMD Ryzen-7-2700X CPU and an NVIDIA GeForce RTX 2060 GPU. 

\begin{table}[!htbp]
 \renewcommand{\arraystretch}{1.2}
  \centering
  \begin{threeparttable}
    \begin{tabular}{cc|cc}
    \toprule
    Parameters & value & Parameters & value \\
    \midrule
    gamma & 0.995             &       tau & 0.97    \\                 l2-reg & 1e-3 & cost kl &  0.05   \\
           damping & 1e-1          &  batch-size & [16000, /]  \\    
           epoch & 500          &  episode length & 1000  \\     
           grad-c & 0.5          & neural network  & MLP  \\ 
           hidden layer dim & 64          & accept ratio  & 0.1  \\
           energy weight & 1.0          & forward reward weight  & 1.0  \\
    \bottomrule
    \end{tabular}    
    \end{threeparttable}
    \vspace{6pt}
\caption{Key parameters used in \textit{Safety-MuJoCo} benchmarks.  In ESPO, the sample size of each epoch is determined by Algorithm~\ref{alg:ESPO-framework-gradually}, with Equations (\ref{eq:adjust-sample-size-negative}) and (\ref{eq:adjust-sample-size-positive}), in which the $X$ is $16000$.}
\label{table:algorithm-hyparameter-experiments-safety-mujoco-benchmark}
\end{table}

\begin{table}[!htbp]
 \renewcommand{\arraystretch}{1.2}
  \centering
  \begin{threeparttable}
    \begin{tabular}{ccc|ccc}
    \toprule
    Tasks & $\zeta^{+}$ & $\zeta^{-}$ & Tasks & $\zeta^{+}$ & $\zeta^{-}$ \\
    \midrule
    SafetyHopperVelocity-v1 & 0.1 & -0.4            &       SafetyAntVelocity-v1 & 0.1 &-0.4    \\ 
    SafetyHumanoidStandup-v4 & 0.0 & -0.5            &       SafetyWalker2d-v4  & 0.0 &-0.5    \\
       SafetyWalker2d-v4-a & 0.0 & -0.5            &       SafetyWalker2d-v4-b  & 0.2 &-0.5    \\
    SafetyReacher-v4 & 0.1 & -0.3            &         \\
    \bottomrule
    \end{tabular}    
    \end{threeparttable}
    \vspace{6pt}
\caption{Sample parameters used in \textit{Omnisafe} and \textit{Safety-MuJoCo} experiments. The results of \textit{SafetyHopperVelocity-v1} and \textit{SafetyAntVelocity-v1} are shown in Figure~\ref{fig:epcrpo-pcpo-cup-SafetyAntVelocity-v1-1-p-limit0dot5}, the results of \textit{SafetyHumanoidStandup-v4} and \textit{SafetyWalker2d-v4} are shown in Figure \ref{fig:epcrpo-crpo-Walker-reacher-v4-16-p-training-seperate}, the results of \textit{SafetyWalker2d-v4-a} are shown in Figures \ref{fig:epcrpo-pcrpo-abaltion-experiments-walker-sample-limit} (a), (b) and (c), the results of \textit{SafetyWalker2d-v4-a} and \textit{SafetyWalker2d-v4-b} are shown in Figures \ref{fig:epcrpo-pcrpo-abaltion-experiments-walker-sample-limit} (d), (e) and (f);
the results of \textit{SafetyReacher-v4} experiments are shown in Figure~\ref{fig:epcrpo-crpo-Walker-reacher-v4-16-p-training-seperate}.
}
\label{table:algorithm-sample-size-benchmark}
\end{table}

\begin{table}[!htbp]
 \renewcommand{\arraystretch}{1.2}
  \centering
  \begin{threeparttable}
    \begin{tabular}{cccc|cccc}
    \toprule
    Tasks & $b$ & $h^+$ & $h^{-}$ & Tasks & $b$ & $h^{+}$ & $h^{-}$ \\
    \midrule
    SafetyHopperVelocity-v1 &25 & 9 & - 9            &       SafetyAntVelocity-v1 & $0.5$ & 0.25 & -0.25    \\ 
    SafetyHumanoidStandup-v4 & $1400$ & 300 & 0            &       SafetyWalker2d-v4  & $40$ & $+\infty$ &0    \\
       SafetyWalker2d-v4-a &  $30$ & $+\infty$ &0            &       SafetyWalker2d-v4-b  &  $40$ & $+\infty$ &0    \\
    SafetyReacher-v4 & $40$ & 0 & $-\infty$            &       
       \\
    \bottomrule
    \end{tabular}    
    \end{threeparttable}
    \vspace{6pt}
\caption{Cost limit and slack parameters used in \textit{Omnisafe} and \textit{Safety-MuJoCo} experiments. The results of \textit{SafetyHopperVelocity-v1} and \textit{SafetyAntVelocity-v1} are shown in Figure~\ref{fig:epcrpo-pcpo-cup-SafetyAntVelocity-v1-1-p-limit0dot5}, the results of \textit{SafetyHumanoidStandup-v4} and \textit{SafetyWalker2d-v4} are shown in Figure \ref{fig:epcrpo-crpo-Walker-reacher-v4-16-p-training-seperate}, the results of \textit{SafetyWalker2d-v4-a} and \textit{SafetyWalker2d-v4-a} are shown in Figures \ref{fig:epcrpo-pcrpo-abaltion-experiments-walker-sample-limit} (a), (b) and (c), the results of \textit{SafetyWalker2d-v4-b} are shown in Figures \ref{fig:epcrpo-pcrpo-abaltion-experiments-walker-sample-limit} (d), (e) and (f);
the results of \textit{SafetyReacher-v4} experiments are shown in Figure~\ref{fig:epcrpo-crpo-Walker-reacher-v4-16-p-training-seperate}.
}
\label{table:algorithm-cost-slack-benchmark}
\end{table}

The key parameters used on the tasks of \textit{Omnisafe} benchmarks are provided in Table~\ref{table:algorithm-sample-size-benchmark}, Table~\ref{table:algorithm-cost-slack-benchmark}, and Table~\ref{table:algorithm-hyparameter-experiments-omnisafe-benchmark}. Experiments on the tasks of \textit{Omnisafe} benchmarks are conducted on a Ubuntu 20.04.6 LTS system, with 2 AMD EPYC-7763 CPUs and 6 NVIDIA RTX A6000 GPUs. 

\begin{table}[!htbp]
 \renewcommand{\arraystretch}{1.2}
  \centering
  \begin{threeparttable}
    \begin{tabular}{c|ccccc}
    \toprule
    \diagbox{parameters}{values}{algorithms} & CUP & PCPO& PPOLag&ESPO\\
    \midrule
        device & cpu  & cpu & cpu & cpu \\ 
        torch threads & 1  & 1 & 1 & 1   \\
        vector env nums & 1  & 1 & 1 & 1  \\    
        parallel & 1  & 1 & 1 & 1\\     
        epochs & 500  & 500 & 500& 500 \\ 
        steps per epoch & 20000  & 20000 & 20000 & \textbackslash \\
        update iters & 40  & 10 & 40 & 10 \\
        batch size & 64  & 128 & 64 & 128 \\
        target kl & 0.01  & 0.01 & 0.02 & 0.01 \\
        entropy coef & 0  & 0 & 0 & 0 \\
        reward normalize & False  & False & False & False \\
        cost normalize & False  & False & False & False \\
        obs normalize & True  & True & True & True \\
        use max grad norm & True  & True & True & True \\
        max grad norm & 40  & 40 & 40 & 40 \\
        use critic norm & True  & True & True & True \\
        critic norm coef & 0.001  & 0.001 & 0.001 & 0.001 \\
        gamma & 0.99  & 0.99 & 0.99 & 0.99 \\
        cost gamma & 0.99  & 0.99 & 0.99 & 0.99 \\
        lam & 0.95  & 0.95 & 0.95 & 0.95 \\
        lam c & 0.95  & 0.95 & 0.95 & 0.95 \\
        clip & 0.2  & \textbackslash & 0.2 & \textbackslash \\
        adv estimation method & gae  & gae & gae & gae \\
        standardized rew adv & True  & True & True & True \\
        standardized cost adv & True  & True & True & True \\       
        cg damping & \textbackslash  & 0.1 & \textbackslash & 0.1 \\
        cg iters & \textbackslash  & 15 & \textbackslash & 15 \\       
        hidden sizes & [64, 64]  & [64, 64] & [64, 64] & [64, 64] \\
        activation & tanh  & tanh & tanh & tanh \\
        lr & 0.0003  & 0.001 & 0.0003 & 0.001 \\
        lagrangian multiplier init & 0.001  & \textbackslash & 0.001 & \textbackslash \\
        lambda lr & 0.035  & \textbackslash & 0.035 & \textbackslash \\
    \bottomrule
    \end{tabular}    
    \end{threeparttable}
    \vspace{6pt}
\caption{Key hyparameters used in \textit{Omnisafe} experiments. In ESPO, the steps of each epoch is determined by Algorithm~\ref{alg:ESPO-framework-gradually}, with Equations (\ref{eq:adjust-sample-size-negative}) and (\ref{eq:adjust-sample-size-positive}), in which the $X$ is $20000$. The parameters for the baselines are consistent with those of \textit{Omnisafe}, and their performance is meticulously fine-tuned in \textit{Omnisafe} \cite{ji2023omnisafe}.}
\label{table:algorithm-hyparameter-experiments-omnisafe-benchmark}
\end{table}


\end{document}